\patchcmd{\minted@colorbg}{\medskip}{}{}{}
\patchcmd{\endminted@colorbg}{\medskip}{}{}{}
\NewDocumentCommand{\scaletikzfig}{ O{1} O{1} m }{
  \begin{center}
     \tikzsetnextfilename{#3}
     \scalefont{#2}\scalebox{#1}{\tikzfig{#3}}
   \end{center}
}
\tikzstyle{Medium box}=[fill=white, draw=black, shape=rectangle, tikzit shape=rectangle, minimum width=1.5cm, minimum height=1.5cm]
\tikzstyle{Large box}=[fill=white, draw=black, shape=rectangle, minimum width=5cm, minimum height=5cm]
\tikzstyle{small}=[fill=white, draw=black, shape=rectangle, minimum width=1cm, minimum height=0.5]
\tikzstyle{loss}=[fill=white, draw=black, shape=rectangle, minimum width=1cm, minimum height=3.5cm]
\tikzstyle{Medium}=[fill=white, draw=green, shape=rectangle, minimum width=1.5cm, minimum height=1.5cm, line width=0.8]
\tikzstyle{blue}=[fill=white, draw=blue, shape=rectangle, minimum width=1.5cm, minimum height=1.5cm]
\tikzstyle{red}=[fill=white, draw=red, shape=rectangle, minimum width=1.5cm, minimum height=1.5cm]
\tikzstyle{Long}=[fill=white, draw=black, shape=rectangle, minimum height=4.5cm, minimum width=1.5cm]
\tikzstyle{blackcircle}=[fill=black, draw=black, shape=circle, minimum width=0.2cm, inner sep=0pt]
\tikzstyle{Small box}=[fill=white, draw=black, shape=rectangle, minimum height=0.5cm, minimum width=0.5cm]
\tikzstyle{elongated}=[fill=white, draw=black, shape=rectangle, minimum height=1cm, minimum width=3cm]
\tikzstyle{circ}=[fill=white, draw=black, shape=circle]
\tikzstyle{blank}=[fill=white, draw=white, shape=circle]
\tikzstyle{none}=[fill=none, draw=none]
\tikzstyle{copy}=[fill=white, draw=black, shape=circle, minimum height=0.2cm, inner sep=0]
\tikzstyle{varCopy}=[fill=black, draw=black, shape=circle, minimum height=0.2cm, inner sep=0]
\tikzstyle{copy2}=[fill=black, draw=black, shape=circle, minimum height=0.2cm, inner sep=0]
\tikzstyle{1morph1}=[fill=white, draw=black, shape=rectangle, minimum width=1cm, minimum height=1cm]
\tikzstyle{1morph}=[fill=white, draw=black, shape=rectangle, minimum width=0.75cm, minimum height=0.75cm, inner sep=0.1cm]
\tikzstyle{2morph2}=[fill=white, draw=black, shape=rectangle, minimum width=1cm, minimum height=2cm]
\tikzstyle{2morph}=[fill=white, draw=black, shape=rectangle, minimum width=1cm, minimum height=1.25cm, inner sep=0.1cm]
\tikzstyle{nmorph}=[fill=white, draw=black, shape=rectangle, minimum height=6cm, minimum width=1cm, inner sep=0.1cm]
\tikzstyle{1state}=[fill=white, draw=black, regular polygon, regular polygon sides=3, minimum height=0.5cm, regular polygon rotate=-30]
\tikzstyle{dbox}=[fill=white, draw=black, dashed, shape=rectangle, minimum width=2cm, minimum height=1cm, inner sep=0.1cm]
\tikzstyle{vdbox}=[fill=white, draw=black, dashed, shape=rectangle, minimum width=2cm, minimum height=1.5cm, inner sep=0.1cm]
\tikzstyle{bigbox}=[fill=white, draw=black, dashed, shape=rectangle, minimum width=2cm, minimum height=4cm, inner sep=0.1cm]
\tikzstyle{2state}=[inner sep=0.05cm, fill=white, draw=black, isosceles triangle, minimum width=1.25cm, isosceles triangle apex angle=90, shape border rotate=180]
\tikzstyle{var2state}=[inner sep=0.05cm, fill=white, draw=black, isosceles triangle, minimum width=1.25cm, isosceles triangle apex angle=60, shape border rotate=180]
\tikzstyle{g2state}=[inner sep=0.05cm, fill=white, draw=black, isosceles triangle, minimum width=6cm, isosceles triangle apex angle=110, shape border rotate=180]
\tikzstyle{bigstate}=[inner sep=0.05cm, fill=white, draw=black, isosceles triangle, minimum width=3cm, isosceles triangle apex angle=110, shape border rotate=180]
\tikzstyle{bigeffect}=[inner sep=0.05cm, fill=white, draw=black, isosceles triangle, minimum width=3cm, isosceles triangle apex angle=110]
\tikzstyle{g2effect}=[inner sep=0.05cm, fill=white, draw=black, isosceles triangle, minimum width=6cm, isosceles triangle apex angle=110]
\tikzstyle{2effect}=[inner sep=0.05cm, fill=white, draw=black, isosceles triangle, minimum width=1.25cm, isosceles triangle apex angle=90]
\tikzstyle{b2effect}=[inner sep=0.05cm, fill=white, draw=black, isosceles triangle, minimum width=2cm, isosceles triangle apex angle=90]
\tikzstyle{node}=[fill=black, draw=black, shape=circle, scale=0.5]
\tikzstyle{GroundLeft}=[fill=white, draw=black, shape=tlground, rotate=-90]
\tikzstyle{GroundRight}=[fill=white, draw=black, shape=tlground, rotate=90]
\tikzstyle{GroundDown}=[fill=white, draw=black, shape=tlground]
\tikzstyle{GroundUp}=[fill=white, draw=black, shape=tlground, rotate=180]
\tikzstyle{Black arrow}=[->]
\tikzstyle{Red line}=[-, draw=red, line width=0.7]
\tikzstyle{Red arrow}=[draw=red, ->]
\tikzstyle{Gray line}=[-, draw={rgb,255: red,191; green,191; blue,191}, line width=0.8]
\tikzstyle{Gray arrow}=[->, draw={rgb,255: red,191; green,191; blue,191}]
\tikzstyle{Blue line}=[-, draw=blue]
\tikzstyle{Blue arrow}=[->, draw=blue]
\tikzstyle{bluearrow}=[->, fill=none, draw={rgb,255: red,29; green,206; blue,255}, thick]
\tikzstyle{midArrow}=[-, decoration={{markings,mark=at position .5 with {\arrow{>}}}}, postaction=decorate]
\tikzstyle{arrow}=[->]
\tikzstyle{pointy}=[->]
\tikzstyle{lightnone}=[-, draw={rgb,255: red,191; green,191; blue,191}]
\tikzstyle{Filled edge shape}=[-, fill=white, draw=black]
\tikzstyle{Thin Grey Arrow}=[->, draw={rgb,255: red,191; green,191; blue,191}, line width=0.3]
\tikzstyle{Thin Grey Line}=[-, draw={rgb,255: red,191; green,191; blue,191}, line width=0.3]
\tikzstyle{Black arrow crossing over}=[->, decoration={{crossing over}}]
\let\ea\expandafter
\def\foreachLetter#1#2#3{\foreachcount=#1
  \ea\loop\ea\ea\ea#3\@Alph\foreachcount
  \advance\foreachcount by 1
  \ifnum\foreachcount<#2\repeat}
\def\definecal#1{\ea\gdef\csname c#1\endcsname{\ensuremath{\mathcal{#1}}\xspace}}
\let\ea\expandafter
\def\foreachLetter#1#2#3{\foreachcount=#1
  \ea\loop\ea\ea\ea#3\@Alph\foreachcount
  \advance\foreachcount by 1
  \ifnum\foreachcount<#2\repeat}
\def\definecal#1{\ea\gdef\csname b#1\endcsname{\ensuremath{\mathbf{#1}}\xspace}}
\theoremstyle{plain}
\newtheorem{theorem}{Theorem}[section]
\newtheorem{proposition}[theorem]{Proposition}
\newtheorem{lemma}[theorem]{Lemma}
\newtheorem{corollary}[theorem]{Corollary}
\theoremstyle{definition}
\newtheorem{definition}[theorem]{Definition}
\newtheorem{example}[theorem]{Example}
\newtheorem{conjecture}[theorem]{Conjecture}
\theoremstyle{remark}
\newtheorem{remark}[theorem]{Remark}
\newcommand{\newdef}[1]{\textbf{#1}}
\newcommand{\Z}{\mathbb{Z}}
\newcommand{\R}{\mathbb{R}}
\newcommand{\N}{\mathbb{N}}
\newcommand{\id}{\text{id}}
\newcommand{\inl}{\text{inl}}
\newcommand{\inr}{\text{inr}}
\newcommand{\recurrent}{\textsf{rcnt}}
\newcommand{\recursive}{\textsf{rcsv}}
\newcommand{\cell}{\textsf{cell}}
\newcommand{\foldingRecurrentCell}{\cell^{\recurrent}}
\newcommand{\unfoldingRecurrentCell}{\product{\cell_o}{\cell_n}}
\newcommand{\mooreCell}{\cell^{\textsf{Moore}}}
\newcommand{\mealyCell}{\cell^{\textsf{Mealy}}}
\newcommand{\recursiveCell}{\cell^{\recursive}}
\newcommand{\colim}{\underrightarrow{\mathsf{lim}}}
\newcommand{\NamedCat}[1]{\mathsf{#1}}
\newcommand{\Set}{\NamedCat{Set}}
\newcommand{\Cat}{\NamedCat{Cat}}
\newcommand{\Ob}[1]{\mathsf{Ob} (#1)}
\newcommand{\Smooth}{\NamedCat{Smooth}}
\newcommand{\Vect}{\NamedCat{Vect}}
\newcommand{\Endo}{\mathsf{Endo}}
\newcommand{\Pendo}{\mathsf{Pendo}}
\newcommand{\Mnd}{\mathsf{Mnd}}
\newcommand{\Cmnd}{\mathsf{Cmnd}}
\newcommand{\AlgEndo}[1]{\NamedCat{Alg}_{\Endo}({#1})}
\newcommand{\AlgPEndo}[1]{\NamedCat{Alg}_{\Pendo}({#1})}
\newcommand{\AlgMnd}[1]{\NamedCat{Alg}_{\Mnd}({#1})}
\newcommand{\AlgLaxMnd}[1]{\NamedCat{Lax}\dsh\NamedCat{Alg}_{\Mnd}({#1})}
\newcommand{\AlgLaxEndo}[1]{\NamedCat{Lax}\dsh\NamedCat{Alg}_{\Endo}({#1})}
\newcommand{\Para}{\NamedCat{Para}}
\newcommand{\Psh}{\NamedCat{Psh}}
\newcommand{\Sh}{\NamedCat{CPsh}}
\newcommand{\dsh}{\textbf{-}}
\newcommand{\Free}{\mathsf{Free}}
\newcommand{\Cofree}{\mathsf{Cofree}}
\newcommand{\Fix}{\mathsf{Fix}}
\newcommand{\product}[2]{\langle {#1} , {#2} \rangle}
\newcommand{\coproduct}[2]{[{#1} , {#2}]}
\newcommand{\List}[1]{\mathsf{List} (#1)}
\newcommand{\ListM}[2]{\mathsf{List}_{#1 + 1}(#2)}
\newcommand{\Tree}[1]{\mathsf{Tree} (#1)}
\newcommand{\Stream}[1]{\mathsf{Stream} (#1)}
\newcommand{\Moore}[2]{\mathsf{Moore}_{#1, #2}}
\newcommand{\Mealy}[2]{\mathsf{Mealy}_{#1, #2}}
\newcommand{\Nil}{\mathsf{Nil}}
\newcommand{\Cons}{\mathsf{Cons}}
\newcommand{\Leaf}{\mathsf{Leaf}}
\newcommand{\Node}{\mathsf{Node}}
\newcommand{\StreamOutput}{\mathsf{output}}
\newcommand{\StreamNext}{\mathsf{next}}
\newcommand{\act}{\blacktriangleright}
\icmltitlerunning{Categorical Deep Learning}
\begin{document}

\twocolumn[
\icmltitle{\emph{Position:}
Categorical Deep Learning is an Algebraic Theory of All Architectures}

% It is OKAY to include author information, even for blind
% submissions: the style file will automatically remove it for you
% unless you've provided the [accepted] option to the icml2024
% package.

% List of affiliations: The first argument should be a (short)
% identifier you will use later to specify author affiliations
% Academic affiliations should list Department, University, City, Region, Country
% Industry affiliations should list Company, City, Region, Country

% You can specify symbols, otherwise they are numbered in order.
% Ideally, you should not use this facility. Affiliations will be numbered
% in order of appearance and this is the preferred way.
\icmlsetsymbol{equal}{*}

\begin{icmlauthorlist}
\icmlauthor{Bruno Gavranovi\'{c}}{equal,sym,edi}
\icmlauthor{Paul Lessard}{equal,sym}
\icmlauthor{Andrew Dudzik}{equal,gdm}\\
\icmlauthor{Tamara von Glehn}{gdm}
\icmlauthor{Jo\~{a}o G.M. Ara\'{u}jo}{gdm}
\icmlauthor{Petar Veli\v{c}kovi\'{c}}{gdm,cam}
%\icmlauthor{}{sch}
%\icmlauthor{}{sch}
\end{icmlauthorlist}
%\icmlaffiliation{yyy}{Department of XXX, University of YYY, Location, Country}
\icmlaffiliation{gdm}{Google DeepMind}
\icmlaffiliation{sym}{Symbolica AI}
\icmlaffiliation{cam}{University of Cambridge}
\icmlaffiliation{edi}{University of Edinburgh}

\icmlcorrespondingauthor{Bruno Gavranovi\'{c}}{bruno@brunogavranovic.com}
\icmlcorrespondingauthor{Paul Lessard}{paul@symbolica.ai}
\icmlcorrespondingauthor{Andrew Dudzik}{adudzik@google.com}
\icmlcorrespondingauthor{Petar Veli\v{c}kovi\'{c}}{petarv@google.com}

% You may provide any keywords that you
% find helpful for describing your paper; these are used to populate
% the "keywords" metadata in the PDF but will not be shown in the document
\icmlkeywords{Machine Learning, ICML}

\vskip 0.3in
]

% this must go after the closing bracket ] following \twocolumn[ ...

% This command actually creates the footnote in the first column
% listing the affiliations and the copyright notice.
% The command takes one argument, which is text to display at the start of the footnote.
% The \icmlEqualContribution command is standard text for equal contribution.
% Remove it (just {}) if you do not need this facility.

%\printAffiliationsAndNotice{}  % leave blank if no need to mention equal contribution
\printAffiliationsAndNotice{The title of the paper should be read as ``Categorical Deep Learning is an Algebraic \{Theory of All Architectures\}'', \emph{not} ``Categorical Deep Learning is an \{Algebraic Theory\} of All Architectures''. \icmlEqualContribution} % otherwise use the standard text.

\begin{abstract}

We present our position on the elusive quest for a general-purpose framework for specifying and studying deep learning architectures. Our opinion is that the key attempts made so far lack a coherent bridge between specifying \emph{constraints} which models must satisfy and specifying their \emph{implementations}. Focusing on building a such a bridge, we propose to apply category theory---precisely, the universal \emph{algebra of monads} valued in a 2-category of \emph{parametric maps}---as a single theory elegantly subsuming both of these flavours of neural network design. To defend our position, we show how this theory recovers constraints induced by geometric deep learning, as well as implementations of many architectures drawn from the diverse landscape of neural networks, such as RNNs. We also illustrate how the theory naturally encodes many standard constructs in computer science and automata theory.%, making it an ideal candidate not only for classifying existing work, but for supporting future work in algorithmic alignment.

%\todo[inline]{Paul: I think I am against the word gadget - it's not a hill I'll die on, but I do feel that it does a disservice exactly the kind of thing that mathematical theory really is.}

% Moreover, we explain how this more general theory expands the class of priors beyond symmetries, relational structures like graphs, etc. to include inductive and coinductive data types. We give examples of lists, trees, streams, and Moore and Mealy machines. Our program promises to synthesise deep learning with traditional aspects of computer science through the use of category theory.
\end{abstract}

\section{Introduction}

One of the most coveted aims of deep learning theory is to provide a guiding \emph{framework} from which all neural network architectures can be principally and usefully derived. Many elegant attempts have recently been made, offering frameworks to categorise or describe large swathes of deep learning architectures: \citet{cohen2019general,xu2019can,bronstein2021geometric,chami2022machine,papillon2023architectures,jogl2023expressivity,weiler2023EquivariantAndCoordinateIndependentCNNs} to name a few.

We observe that there are, typically, two broad ways in which deep learning practitioners describe models. Firstly, neural networks can be specified in a \emph{top-down} manner, wherein models are described by the \emph{constraints} they should satisfy (e.g. in order to respect the structure of the data they process). Alternatively, a \emph{bottom-up} approach describes models by their \emph{implementation}, i.e. the sequence of tensor operations required to perform their forward/backward pass.

\subsection{Our Opinion}
%\todo[inline]{Bruno: My understanding is that opinions need not be supported by evidence, but claims do. Given that we present almost an entire page worth of arguments, I'm finding it uncharitable to ourselves to classify what follows as opinion, and not 'claim'.}

It is our \textbf{opinion} that ample effort has already been given to both the top-down and bottom-up approaches \emph{in isolation}, and that there hasn't been sufficiently expressive theory to address them both \emph{simultaneously}. \emph{If we want a \textbf{general} guiding framework for \textbf{all} of deep learning, this needs to change.} To substantiate our opinion, we survey a few ongoing efforts on both sides of the divide.

One of the most successful examples of the top-down framework is \emph{geometric deep learning} \citep[GDL]{bronstein2021geometric}, which uses a group- and representation-theoretic perspective to describe neural network layers via symmetry-preserving constraints. The actual realisations of such layers are derived by solving 
\emph{equivariance constraints}. 

GDL proved to be powerful: allowing, e.g., to cast \emph{convolutional layers} as an exact solution to linear translation equivariance in grids \citep{fukushima1983neocognitron,lecun1998gradient}, and \emph{message passing} and \emph{self-attention} as instances of permutation equivariant learning over graphs \citep{gilmer2017neural,vaswani2017attention}. It also naturally extends to exotic domains such as \emph{spheres} \citep{cohen2018spherical}, \emph{meshes} \citep{de2020gauge} and \emph{geometric graphs} \citep{fuchs2020se}. While this elegantly covers many architectures of practical interest, GDL also has inescapable constraints.

% I removed  irreparable limitations because limits are not repaired to mean they are exceeded.

%\todo[inline]{Paul: I would rather rephrase deep sets as identifying that deep sets are permutation invariant models over vectors which precludes order dependent inference and thus models set inference.... I know this is getting into the weeds here, but the basic problem is without the symmetries you aren't talking about sets, you're talking about vectors.}

Firstly, usability of GDL principles to \emph{implement} architectures directly correlates with how easy it is to resolve equivariance constraints. While \texttt{PyG} \citep{fey2019fast}, \texttt{DGL} \citep{wang2019deep} and \texttt{Jraph} \citep{jraph2020github} have had success for permutation-equivariant models, and \texttt{e3nn} \citep{e3nn_paper} for $\mathrm{E}(3)$-equivariant models, it is hard to replicate such success for areas where it is not known how to resolve equivariance constraints.

Because of its focus on groups, GDL is only able to represent equivariance to symmetries, but not all operations we may wish neural networks to align to are invertible \citep{worrall2019deep} or fully compositional \citep{de2020natural}. This is not a small collection of operations either; if we'd like to align a model to an arbitrary \emph{algorithm} \citep{xu2019can}, it is fairly common for the target algorithm to irreversibly transform data, for example when performing any kind of a path-finding contraction \citep{dudzik2022graph}. Generally, in order to reason about alignment to constructs in \underline{\textbf{computer science}}, we must go beyond GDL.

On the other hand, bottom-up frameworks are most commonly embodied in \emph{automatic differentiation} packages, such as \texttt{TensorFlow} \citep{abadi2016tensorflow}, \texttt{PyTorch} \citep{paszke2019pytorch} and \texttt{JAX} \citep{bradbury2018jax}. These frameworks have become indispensable in the implementation of deep learning models at scale. Such packages often have grounding in \emph{functional programming}: perhaps \texttt{JAX} is the most direct example, as it is marketed as \emph{``composable function transformations''}, but such features permeate other deep learning frameworks as well. Treating neural networks as ``pure functions'' allows for rigorous analysis on their computational graph, allowing a degree of type- and shape-checking, as well as automatic tensor shape inference and fully automated backpropagation passes.

The issues, again, happen closer to the boundary between the two directions---specifying and controlling for \emph{constraint satisfaction} is not simple with tensor programming. Inferring general properties (\emph{semantics}) of a program from its implementation (\emph{syntax}) alone is a substantial challenge for all but the simplest programs, pointing to a need to model more abstract properties of \underline{\textbf{computer science}} than existing frameworks can offer directly. The similarity of the requirement on both sides leads us to our present position. %While it is possible to do ad-hoc verification of certain properties, inferring more general properties (\emph{semantics}) of a program from its implementation (\emph{syntax}) alone is a substantial challenge for all but the simplest programs. There is a need to model more abstract properties of \underline{\textbf{computer science}} than existing frameworks can offer directly.  %As an example, deriving the functional form of any program with a conditional loop requires finding \emph{fixed points} in partially-ordered sets of functions. There appears to be a need to model more abstract properties of \underline{\textbf{computer science}} than existing frameworks can offer directly. 

%\todo[inline]{Paul: got rid of simple because the previous sentence already had simple in it}

\subsection{Our Position}

It is our \textbf{position} that constructing a guiding framework for all of deep learning, requires robustly \emph{bridging} the top-down and bottom-up approaches to neural network specification with a \emph{unifying mathematical theory}, and that the concepts for this bridging should be coming from \underline{\textbf{computer science}}. \emph{Moreover, such a framework must generalise both \textbf{group theory} and \textbf{functional programming}---and a natural candidate for achieving this is \textbf{category theory}.}

%\todo[inline]{Paul: I think the solution is more properly the subsumption of these CS concepts into a larger mathematical theory. The snarky point is I think it is mathematics and not cs which is the secret sauce.}

It is worth noting that ours is not the first approach to either (a) observe neural networks through the lens of computer science constructs \citep{baydin18}, (b) explore the connection between syntax and semantics in neural networks \citep{sonoda2023deep,sonoda2023joint,sonoda2024unified} (b) apply Category Theory to machine learning \citep{gavranovic2024github}. 

However, we are unaware of any prior work that tackles the connection of neural network architectures and the algebras of parametric maps, as we will do in this paper. Further, prior art in syntax-semantics connections either assumes that the operations are taking place in some topological space or that neural network architectures have a very specific form---our framework assumes neither. Lastly, prior papers exploring Category Theory and Machine Learning are fragmented, scarce, and not cohesive---our paper seeks to establish a common, unifying framework for how category theory can be applied to AI.

To defend our position, we will demonstrate a unified categorical framework that is expressive enough to rederive standard GDL concepts (invariance and equivariance), specify implementations of complex neural network building blocks (recurrent neural networks), as well as model other intricate deep learning concepts such as weight tying. %By changing the underlying category of choice, it is even possible to model probabilistic settings.

%Our aim is to present sufficient, worked, examples to motivate the investment of further resources into this research, defend our position, and stimulate fruitful discussion within the community.

\subsection{The Power of Category Theory}

To understand where we are going, we must first put the field of category theory in context.
%Category theory is a structuralist mathematics that has been flourishing since its mid-century invention, unbeknownst to most deep learning researchers and the general scientific population.
%It is an entire language and an approach to abstraction that is markedly different than others.
Minimally, it may be conceived of as a battle-tested system of \emph{interfaces} that are learned once, and then reliably applied across scientific fields.
Originating in abstract mathematics, specifically algebraic topology, category theory has since proliferated, and been used to express ideas from numerous fields in an uniform manner, helping reveal their previously unknown shared aspects.
Other than modern pure mathematics, which it thoroughly permeates, these fields include \emph{systems theory} \citep{capucci_towards_2022, niu_polynomial_2023}, \emph{bayesian learning} \citep{braithwaite_compositional_2023,cho_disintegration_2019}, and \emph{information theory and probability} \citep{leinster_entropy_2021, bradley_entropy_2021,sturtz_categorical_2015, heunen_convenient_2017, perrone_markov_2022}.
%\emph{digital and electric circuit theory} \citep{ghica_diagrammatic_2017, boisseau_string_2022}, \emph{game theory} \citep{ghani_compositional_2018}, even \emph{version control} \citep{mimram_categorical_2013} and \emph{trading protocols} \citep{genovese_escrows_2021}.

This growth has resulted in a reliable set of mature theories and tools; from algebra, geometry, topology, combinatorics to recursion and dependent types, etc.\ all of them with a mutually compatible interface.
Recently category theory has started to be applied to machine learning, in \emph{automatic differentiation} \citep{vakar_chad_2022,alvarez-picallo_functorial_2021,gavranovic_space-time_2022,elliott_simple_2018}, \emph{topological data analysis} \citep{guss_characterizing_2018}, \emph{natural language processing} \citep{lewis_compositionality_2019}, \emph{causal inference} \citep{jacobs_causal_2019,cohen2022towards}, even producing an entire categorical picture of gradient-based learning -- from architectures to backprop -- in \citet{cruttwell_categorical_2022,fundamental_components}, with a more implementation-centric view in \citet{Nguyen_2022}, and important earlier work \citep{fong_backprop_2021}.

%GDL itself acknowledges category theory as a natural continuation of the \emph{Erlangen Programme}: a far-reaching approach to geometry which emphasises the study of \emph{symmetry}.
%In the words of the creators of category theory Samuel Eilenberg and Saunders Mac Lane:

%\say{Category theory may be regarded as a continuation of the Klein Erlangen Programm, in the sense that a geometrical space with its group of transformations is generalized to a category with its algebra of mappings.} \citep{marquis_category_2009}

%It is in precisely this sense that we advocate for generalisation of the study of symmetry in GDL to that of categorical algebra, suggesting category theory as a fruitful and underutilised approach which unifies both deep learning and theoretical computer science.

\subsubsection{Essential Concepts}

Before we begin, we recall three essential concepts in category theory, that will be necessary for following our exposition. First, we define a \emph{category}, an elegant axiomatisation of a compositional structure.

\begin{definition}[Category]
A category, $\mathcal{C}$, consists of a collection\footnote{The term ``collection'', rather than set, avoids Russell's paradox, as objects may themselves be sets. Categories that can be described with sets are known as \emph{small categories}.} of \emph{objects}, and a collection of \emph{morphisms} between pairs of objects, such that:
\begin{itemize}
    \item For each object $A\in\cC$, there is a unique \emph{identity} morphism $\id_A : A\rightarrow A$.
    \item For any two morphisms $f : A\rightarrow B$ and $g : B\rightarrow C$, there must exist a unique morphism which is their \emph{composition} $g\circ f : A\rightarrow C$ .
\end{itemize}
subject to the following conditions:
\begin{itemize}
    \item For any morphism $f : A\rightarrow B$, it holds that $\id_B\circ f = f\circ \id_A = f$.
    \item For any three composable morphisms $f : A\rightarrow B$, $g : B\rightarrow C$, $h : C\rightarrow D$, composition is \emph{associative}, i.e., $h\circ(g\circ f)=(h\circ g)\circ f$.
\end{itemize}

We denote by $\cC(A, B)$ the collection of all morphisms from $A\in\cC$ to $B\in\cC$.
\end{definition}

%To specify a category, it suffices to describe its objects, morphisms and, if ambiguous, how the morphisms compose.
We provide a typical first example:
\begin{example}[The $\Set$ Category] $\Set$ is a category whose objects are \emph{sets}, and morphisms are \emph{functions} between them.
\end{example}
%the man paul, not the paul man

And another example, important for geometric DL:
\begin{example}[Groups and monoids as categories]
\label{ex:delooping}
A group, $G$, can be represented as a category, $\cB G$, with a single object ($G$), and morphisms $g: G\rightarrow G$ corresponding to elements $g\in G$, where composition is given by the group's binary operation. Note that $G$ is a group if and only if these morphisms are \emph{isomorphisms}, that is, for each $g : G\rightarrow G$ there exists $h : G\rightarrow G$ such that $h\circ g = g\circ h = \id_G$. More generally, we can identify one-object categories, whose morphisms are not necessarily invertible, with monoids.
\end{example}
The power of category theory starts to emerge when we allow different categories to \emph{interact}. Just as there are functions of sets and homomorphisms of groups, there is a more generic concept of \emph{structure preserving maps} between categories, called \emph{functors}.
\begin{definition}[Functor]
    Let $\cC$ and $\cD$ be two categories. Then, $F : \cC\rightarrow\cD$ is a functor between them, if it maps each object and morphism of $\cC$ to a corresponding one in $\cD$, and the following two conditions hold:
\begin{itemize}
    \item For any object $A\in\cC$, $F(\id_A) = \id_{F(A)}$.
    \item For any composable morphisms $f, g$ in $\cC$, $F(g\circ f) = F(g)\circ F(f)$.
\end{itemize}
An \emph{endofunctor} on $\cC$ is a functor $F : \cC\rightarrow\cC$.
\end{definition}
% The most simple endofunctor generalises identity functions:
% \begin{example}[Identity endofunctor]
% For a category $\cC$, its \emph{identity endofunctor}, $\id_\cC : \cC\rightarrow\cC$ maps every object and morphism back to themselves.
% \end{example}
Just as a functor is an interaction between categories, a \emph{natural transformation} specifies an interaction between functors; this is the third and final concept we cover here.

%previously: Just as a functor specifies how two categories may interact, a \emph{natural transformation} specifies how two functors can interact. It is the third and final concept we cover in this preliminary material.

\begin{definition}[Natural transformation]
Let $F : \cC\rightarrow\cD$ and $G : \cC\rightarrow\cD$ be two functors between categories $\cC$ and $\cD$. A natural transformation $\alpha : F\Rightarrow G$ consists of a choice, for every object $X\in\cC$, of a morphism $\alpha_X : F(X)\rightarrow G(X)$ in $\cD$ such that, for every morphism $f : X\rightarrow Y$ in $\cC$, it holds that $\alpha_Y\circ F(f) = G(f)\circ\alpha_X$.

The morphism $\alpha_X$ is called the component  of the natural transformation $\alpha$ at the object $X$.
\end{definition}

% This defintion had incorrect usage of induced and a few other things. That's why I changed it.
The components of a natural transformation assemble into ``naturality squares'', commutative diagrams:
% https://q.uiver.app/#q=WzAsNCxbMCwwLCJGKFgpIl0sWzEsMCwiRihZKSJdLFswLDEsIkcoWCkiXSxbMSwxLCJHKFkpIl0sWzAsMSwiRihmKSJdLFswLDIsIlxcYWxwaGFfWCIsMl0sWzIsMywiRyhmKSIsMl0sWzEsMywiXFxhbHBoYV9ZIl1d
\[\begin{tikzcd}[ampersand replacement=\&]
	{F(X)} \& {F(Y)} \\
	{G(X)} \& {G(Y)}
	\arrow["{F(f)}", from=1-1, to=1-2]
	\arrow["{\alpha_X}"', from=1-1, to=2-1]
	\arrow["{G(f)}"', from=2-1, to=2-2]
	\arrow["{\alpha_Y}", from=1-2, to=2-2]
\end{tikzcd}\]
where a diagram \emph{commutes} if, for any two objects, any two paths connecting them correspond to the same morphism.

%\todo[inline]{Perhaps this is not needed, but thought I'd write down some choices I made explicitly}
%We use $\product{f}{g}$ for the universal map out of a product, $\coproduct{f}{g}$ for coproduct, $1$ for the one element set, $+$ for coproduct... 

%We use $\Delta_P : P \to P \times P$ to denote the copy map, and $!_P : P \to 1$ to denote the terminal map.

%Following the conventions in functional programming, given the list $[1, 2, 3, 4, 5]$ we refer to $1$ as its \emph{head}, and $[2, 3, 4, 5]$ as its tail.

\section{From Monad Algebras to Equivariance}

Having set up the essential concepts, we proceed on our quest to define a categorical framework which subsumes and generalises geometric deep learning \citep{bronstein2021geometric}.
First, we will define a powerful notion (\textbf{monad algebra homomorphism}) and demonstrate that the special case of monads induced by \emph{group actions} is sufficient to describe \emph{geometric deep learning}.
Generalising from monads and their algebras to arbitrary endofunctors and their algebras, we will find that our theory can express functions that process structured data from computer science (e.g. \emph{lists} and \emph{trees})\footnote{To the best of our knowledge, these ideas were first conjectured in \citet{olah_neural_2015} in the language of functional programming.} and behave in stateful ways like \emph{automata}.

\subsection{Monads and their Algebras}
\begin{definition}[Monad]
    Let $\cC$ be a category.
    A monad on $\cC$ is a triple $(M,\eta,\mu)$ where $M:\cC\to\cC$ is an endofunctor, and $\eta : \id_\cC\Rightarrow M$ and $\mu : M \circ M \Rightarrow M$ are natural transformations (where here  $\_ \circ \_$ is functor composition), making diagrams in \cref{def:monad_diagrams} commute.
\end{definition}

\begin{example}[Group action monad]
  \label{ex:group_action_monad}
  Let $G$ be a group.
  Then the triple $(G \times -, \eta, \mu)$ is a monad on $\Set$, where
  \begin{itemize}
      \item $G \times - : \Set \to \Set$ is an endofunctor mapping a set $X$ to the set $G \times X$;
      \item $\eta: \id_\Set \Rightarrow G \times - : \Set \rightarrow \Set$ whose component at a set $X$ is the function $x \mapsto (e,x)$ where $e$ is the identity element of the group $G$; and
      \item $\mu: G \times G \times - \Rightarrow G \times - : \Set \rightarrow \Set$ whose component at a set $X$ is the function $(g,h,x) \mapsto (gh,x)$ with the implicit multiplication that of the group $G$.
  \end{itemize}
\end{example}
% \begin{example}[Identity monad]\label{ex:identity_monad}
%   For any category $\cC$ the identity endofunctor $\id_\cC : \cC \rightarrow \cC$ together with the obvious identity natural transformations is a monad.
% \end{example}
Group action monads are formal theories of group actions, but they do not allow us to actually \emph{execute} them on data. This is what \emph{algebras} do.

%While a group action monad is an elegant way of tracking (composed) group transformations and the data they act on, we still don't have the machinery to actually perform group actions. For this, we need to introduce 

\begin{definition}[Algebra for a monad]
  \label{def:algebra_for_monad}
  An algebra for a monad $(M, \eta, \mu)$ on a category $\cC$ is a pair $(A, a)$, where $A\in\cC$ is a \emph{carrier object} and $a : M(A) \to A$ is a morphism of $\cC$ (\emph{structure map}) making the following diagram commute:
  % https://q.uiver.app/#q=WzAsNyxbMCwwLCJBIl0sWzEsMSwiQSJdLFsxLDAsIk0oQSkiXSxbMiwwLCJNKE0oQSkpIl0sWzMsMCwiTShBKSJdLFsyLDEsIk0oQSkiXSxbMywxLCJBIl0sWzAsMiwiXFxldGFfQSJdLFsyLDEsImEiXSxbMCwxLCIiLDIseyJsZXZlbCI6Miwic3R5bGUiOnsiaGVhZCI6eyJuYW1lIjoibm9uZSJ9fX1dLFs0LDYsImEiLDJdLFs1LDYsImEiLDJdLFszLDQsIk0oYSkiXSxbMyw1LCJcXG11X0EiLDJdXQ==
  \[\begin{tikzcd}[ampersand replacement=\&]
      A \& {M(A)} \& {M(M(A))} \& {M(A)} \\
      \& A \& {M(A)} \& A
      \arrow["{\eta_A}", from=1-1, to=1-2]
      \arrow["a", from=1-2, to=2-2]
      \arrow[Rightarrow, no head, from=1-1, to=2-2]
      \arrow["a"', from=1-4, to=2-4]
      \arrow["a"', from=2-3, to=2-4]
      \arrow["{M(a)}", from=1-3, to=1-4]
      \arrow["{\mu_A}"', from=1-3, to=2-3]
  \end{tikzcd}\]
\end{definition}  

\begin{example}[Group actions]
    \label{ex:group_actions}
    Group actions for a group $G$ arise as algebras of the aforementioned group action monad $G \times -$.
    Consider the carrier $\R^{\Z_w \times \Z_h}$, thought of as data on a $w\times h$ grid, and any of the usual group actions on $\Z_w \times \Z_h$: translation, rotation, permutation, scaling, or reflections.
 
    Each of these group actions induce an algebra on the carrier set $\R^{\Z_w \times \Z_h}$. 
    For instance, the translation group $(\Z_w \times \Z_h, +, 0)$ induces the algebra
      \[
          \act : \Z_w \times \Z_h \times \R^{\Z_w \times \Z_h} \to \R^{\Z_w \times \Z_h}
      \]
      defined as $((i', j') \act x)(i, j) = x(i - i',j - j')$.
      Here $x$ represents the grid data, $i, j$ specific pixel locations, and $i', j'$ the translation vector.
      We also specifically mention the trivial action of any group $\pi_X : G \times X \to X$ by projection.
\end{example}

%\begin{remark}
%  The action above is an example of the \emph{left regular representation of a group acting on itself} \citep{FultonHarris_1991,cohen2016group}.
%  This arises out of the interaction of the group action monad and the free vector space monad, which we describe in \cref{prop:induced_action_free_vector_space}.
%\end{remark}

%\todo[inline]{Paul: I added the Fulton and Harris reference because I think one goal I might have with this position paper is pointing the ML crowd to literature beyond ML that might inform practice. But I could also be making a tactical mistake for ICML by suggesting such?}

A monad algebra can capture a particular input or output for group equivariant neural networks (as its carrier). That being said, geometric deep learning concerns itself with \emph{linear equivariant layers} between these inputs and outputs. In order to be able to describe those, we need to establish the concept of a \emph{morphism of algebras} for a monad.

\begin{definition}[$M$-algebra homomorphism]
    \label{def:algebra_morphism}
    Let $(M, \mu, \eta)$ be a monad on $\cC$, and $(A, a)$ and $(B, b)$ be $M$-algebras.
    An \newdef{$M$-algebra homomorphism} $(A, a) \to (B, b)$ is a morphism $f : A \to B$ of $\cC$ s.t. the following commutes:
% https://q.uiver.app/#q=WzAsNCxbMCwxLCJBIl0sWzAsMCwiTShBKSJdLFsxLDEsIkIiXSxbMSwwLCJNKEIpIl0sWzEsMCwiYSIsMl0sWzMsKKMiwiYiJdLFswLDIsImYiLDJdLFsxLDMsIk0oZikiXV0=
\[\begin{tikzcd}[ampersand replacement=\&]
  {M(A)} \& {M(B)} \\
  A \& B
  \arrow["a"', from=1-1, to=2-1]
  \arrow["b", from=1-2, to=2-2]
  \arrow["f"', from=2-1, to=2-2]
  \arrow["{M(f)}", from=1-1, to=1-2]
\end{tikzcd}\]
\end{definition}
%\todo[inline]{Paul: I'm ok with group homomorphisms but there's a horrible history, in higher dimensional category theory, of homomorphism meaning a special kind of some more general kind of morphism. I'd really like to banish homomorphism from the lexicon of the paper if at all ammenable to others.}
We recover equivariant maps as morphisms of algebras.

\begin{example}[Equivariant maps]
  \label{eq:equivariance}
  Equivariant maps are group action monad algebra homomorphisms.
  Consider any action from \cref{ex:group_actions}.
  An endomorphism of such an action---that is, a $G$-algebra on $\R^{\Z_w \times \Z_h}$---is an endormorphism of $\R^{\Z_w \times \Z_h}$ which induces a commutative diagram 
  % https://q.uiver.app/#q=WzAsNCxbMCwwLCJHIFxcdGltZXMgXFxSXntcXFpfdyBcXHRpbWVzIFxcWl9ofSJdLFswLDEsIlxcUl57Wl93IFxcdGltZXMgXFxaX2h9Il0sWzEsMCwiRyBcXHRpbWVzIFxcUl57XFxaX3cgXFx0aW1lcyBcXFpfaH0iXSxbMSwxLCJcXFJee1xcWl93IFxcdGltZXMgXFxaX2h9Il0sWzAsMSwiXFxhY3R0IiwyXSxbMiwzLCJcXGFjdHQiXSxbMSwzLCJmIiwyXSxbMCwyLCJHIFxcdGltZXMgZiJdXQ==
\[\begin{tikzcd}[ampersand replacement=\&]
  {G \times \R^{\Z_w \times \Z_h}} \& {G \times \R^{\Z_w \times \Z_h}} \\
  {\R^{Z_w \times \Z_h}} \& {\R^{\Z_w \times \Z_h}}
  \arrow["\act"', from=1-1, to=2-1]
  \arrow["\act", from=1-2, to=2-2]
  \arrow["f"', from=2-1, to=2-2]
  \arrow["{G \times f}", from=1-1, to=1-2]
\end{tikzcd}\]
which, elementwise, unpacks to the equation 
\[
  f(g \act x) = g \act f(x)
\]
The translation example, for instance, recovers the equation
  $f(((i', j') \act x)(i, j)) = (i', j') \act f(x)(i, j)$
which reduces to the usual constraint: $f(x(i - i', j - j')) = f(x)(i - i', j - j')$.
%Intuitively, it states that, for any displacement $(i', j')$, the result of translating the input by $(i', j')$, and then applying the function $f$ is the same as first applying $f$ and then translating the result by $(i', j')$.
\end{example}

The concept of \emph{invariance} --- a special case of equivariance --- is unpacked in the Appendix (\cref{ex:invariance}).
It's worth reflecting on the fact that we have just successfully derived the key aim of \textbf{geometric deep learning}: finding neural network layers that are \emph{monad algebra homomorphisms} of monads associated with \emph{group actions}! 

Indeed, the template illustrated in Example \ref{eq:equivariance} is sufficient to explain \emph{any} architectures which are explained by Geometric DL; should the reader wish to see concrete examples---deriving graph neural networks \citep{velivckovic2023everything}, Spherical CNNs \citep{cohen2018spherical} and G-CNNs \citep{cohen2016group}---they may be found in Appendix \ref{app:gdl_further}.
%\todo[inline]{Bruno: From this it sounds like the key aim of GDL is to find monad algebra homomorphisms. I think it's more like the key aim is to find equivariant layers, and that monad algebra homomorphisms are a way to do that.}

To concretely derive such layers from these constraints, we need to make concrete the category in which $f : A\rightarrow B$ lives. A standard choice is to use $\Vect$, a category where objects are finite-dimensional \emph{vector spaces} and morphisms are \emph{linear maps} between these spaces. In such a setting, morphisms can be specified as \emph{matrices}, and the equivariance condition places constraints on the matrix's entries, resulting in effects such as \emph{weight sharing} or \emph{weight tying}. We provide a detailed derivation for two examples on a two-pixel grid in \cref{app:gdl_examples}.

%\todo[inline]{The idea here is that 2-cells in Para model weight sharing, and studying what weight sharing happens here should inform us of what needs to be done in Para. It also might be yet one more way to motivate Para}

\begin{remark}
\label{rem:natural_graph_networks}

When our monad is of the form $M\times -$, with $M$ a monoid, algebras are equivalent to $M$-actions, i.e. functors $\cB M\to\Set$, where $\cB M$ is the one-object category given in Example \ref{ex:delooping}, and algebra morphisms are equivalent to natural transformations. So in this case, our definition of equivariance coincides with the functorial version given in \citet{de2020natural}. But the connection here is much deeper---for any monad, we can think of its algebras, which we can think of as the semantics of the monad, as functors on certain categories encoding the monad's syntax, such as \emph{Lawvere theories}. For example, \citet{dudzik2022graph} use the fact that functors on the category of finite polynomial diagrams\footnote{Equivalently, the category of finitary dependent polynomial functors.} encode the algebraic structure of commutative semirings. We give further details on this connection in Appendix \ref{app:lawvere_theories}.
\end{remark}

\subsection{Endofunctors and their (Co)algebras}
\label{subsec:endofunctors_algebras}

Geometric deep learning, while elegant, is fundamentally constrained by the axioms of group theory. Monads and their algebras, however, are naturally generalised beyond group actions.
%Moreover, monads can often be specified by more minimal data.
Here we show how, by studying (co)algebras of arbitrary \emph{endofunctors}, we can rediscover standard computer science constructs like lists, trees and automata. This rediscovery is not merely a passing observation; in fact, the endofunctor view of lists and trees turns out to naturally map to \emph{implementations} of neural architectures such as recurrent and recursive neural networks; see Appendix \ref{sec:parametric_endofunctor_algebras}. 

Then, in the next section we'll show how these more minimal structures, endofunctors and their algebras, may be augmented into the more structured notions of monads and their algebras.

%\todo[inline]{Paul:  I rewrote this because I think the story is not really generalization, because the whole algebraically free monad thing locates endofunctor algebras inside of monads and their algebras. Instead, I think the content is that many of the things we want to treat in this framework are simple enough so as not to require all the structure of a monad}

\begin{definition}[Algebra for an endofunctor]
    \label{def:algebra_for_endofunctor}
    Let $\cC$ be a category and $F: \cC\rightarrow\cC$ an endofunctor on $\cC$. An algebra for $F$ is a pair $(A, a)$ where $A$ is an object of $\cC$ and $a : F(A) \to A$ is a morphism of $\cC$.
\end{definition}

Note that, compared to \cref{def:algebra_for_monad}, there are no equations this time;  $F$ is not equipped with any extra structure with which the structure map of an algebra could be compatible. Examples of endofunctor algebras abound \cite{jacobs_introduction_2016}, many of which are familiar to computer scientists.

\begin{example}[Lists]
  \label{ex:list_as_algebra}
  Let $A$ be a set, and consider the endofunctor $1 + A \times - : \Set \to \Set$.
  The set $\List{A}$ of lists of elements of type $A$ together with the map $\coproduct{\Nil}{\Cons} : 1 + A \times \List{A} \to \List{A}$ forms an algebra of this endofunctor.\footnote{$\coproduct{f}{g} : A + B \to C$ is notation for maps out of a coproduct; where $f : A \to C$ and $g : B \to C$}:
  Here $\Nil$ and $\Cons$ are two constructors for lists, allowing us to represent lists as the following datatype:
  \begin{code}
data List a = Nil
            | Cons a (List a)
  \end{code}
  It describes $\List{A}$ \emph{inductively}, as being formed either out of the empty list, or an element of type $A$ and another list.
  In \cref{fig:cells_as_algebras} we will see how this relates to \emph{folding} RNNs.
\end{example}

\begin{example}[Binary trees]
  \label{ex:tree_as_algebra}
  Let $A$ be a set.
  Consider the endofunctor $A + (-)^2 : \Set \to \Set$.
  The set $\Tree{A}$ of binary trees with $A$-labelled leaves, together with the map $\coproduct{\Leaf}{\Node} : A + \Tree{A}^2 \to \Tree{A}$ forms an algebra of this endofunctor.
  Here $\Leaf$ and $\Node$ are constructors for binary trees, enabling the following datatype representation:
\begin{code}
data Tree a = Leaf a
            | Node (Tree a) (Tree a)
\end{code}
It describes $\Tree{A}$ inductively, as being formed either out of a single $A$-labelled leaf or two subtrees.\footnote{This framework can also model any variations, e.g., $n$-ary trees with $A$-labelled leaves as algebras of $A + \List{-}$, or binary trees with $A$-labelled nodes as algebras of $1 + A \times (-)^2$.}
In \cref{fig:cells_as_algebras} we will relate this to recursive neural networks.
\end{example}
Dually, we also study \emph{coalgebras} for an endofunctor (where the structure morphism $a : A\rightarrow F(A)$ points the other way (\cref{def:coalgebra_for_endofunctor}).
Intuitively, while algebras offer us a way to model computation guaranteed to terminate, coalgebras offer us a way to model potentially infinite computation.
They capture the semantics of programs whose guarantee is not termination, bur rather \emph{productivity} \citep{atkey_productive_2013}, and as such are excellent for describing servers, operating systems, and automata \cite{rutten_universal_2000,jacobs_introduction_2016}.
We will use endofunctor coalgebras to describe one such automaton---the Mealy machine \cite{mealy_method_1955}.

\begin{example}[Mealy machines]
  \label{ex:mealy_machine_as_coalgebra}
  Let $O$ and $I$ be sets of possible \emph{outputs} and \emph{inputs}, respectively.
  Consider the endofunctor $(I \to O \times -) : \Set \to \Set$.
  Then the set $\Mealy{O}{I}$ of Mealy machines with outputs in $O$ and inputs in $I$, together with the map $\StreamNext : \Mealy{O}{I} \to (I \to O \times \Mealy{O}{I})$ is a coalgebra of this endofunctor.
  %Here $\StreamNext$ is a destructor for Mealy machines, allowing us to represent them as the following datatype.
\begin{code}
data Mealy o i = MkMealy {
  next :: i -> (o, Mealy o i)
}
\end{code}
This describes Mealy machines \emph{coinductively}, as \emph{systems} which, given an input, produce an output and another Mealy machine. 
In \cref{fig:cells_as_algebras} we will relate this to full recurrent neural networks, and, in \cref{ex:stream_as_coalgebra,ex:moore_machine_as_coalgebra} we coalgebraically express two other fundamental classes of automata: streams and Moore machines.%---by omitting existence inputs $I$, and dependence of outputs $O$ on inputs $I$, respectively.
\end{example}

We have expressed data structures and automata using (co)algebras for an endofunctor. Just as in the case of GDL, in order to describe (linear) \emph{layers of neural networks} between them, we need to establish the concept of a \emph{homomorphism of endofunctor (co)algebras}. The definition of a homomorphism of algebras for an endofunctor mirrors\footnote{Because it does not rely on the extra structure monads have.}  \cref{def:algebra_morphism}, while the definition of a homomorphism of coalgebras has the structure maps pointing the other way.
%Initial algebras as semantics of inductive types, terminal coalgebras as semantics of coinductive types.
\begin{example}[Folds over lists as algebra homomorphisms]
  \label{ex:lists_initial}
  Consider the endofunctor $(1 + A \times -)$ from \cref{ex:list_as_algebra}, and an algebra homomorphism from $(\List{A}, \coproduct{\Nil}{\Cons})$ to any other $(1 + A \times -)$-algebra $(X, \coproduct{r_0}{r_1})$:
% https://q.uiver.app/#q=WzAsNCxbMCwwLCIxICsgQSBcXHRpbWVzIFxcTGlzdHtBfSJdLFswLDEsIlxcTGlzdHtBfSJdLFsyLDAsIjEgKyBBIFxcdGltZXMgWCJdLFsyLDEsIlgiXSxbMCwxLCJcXGNvcHJvZHVjdHtcXE5pbH17XFxDb25zfSIsMl0sWzEsMywiZl9yIiwyXSxbMiwzLCJcXGNvcHJvZHVjdHtyXzB9e3JfMX0iXSxbMCwyLCIxICsgQSBcXHRpbWVzIGZfciJdXQ==
\[\begin{tikzcd}[ampersand replacement=\&]
	{1 + A \times \List{A}} \&\& {1 + A \times X} \\
	{\List{A}} \&\& X
	\arrow["{\coproduct{\Nil}{\Cons}}"', from=1-1, to=2-1]
	\arrow["{f_r}"', from=2-1, to=2-3]
	\arrow["{\coproduct{r_0}{r_1}}", from=1-3, to=2-3]
	\arrow["{1 + A \times f_r}", from=1-1, to=1-3]
\end{tikzcd}\]
Then the map $f_r : \List{A}\to X$ is, necessarily, a \emph{fold} over a list, a concept from functional programming which describes how a single value is obtained by operating over a list of values.
It is implemented by recursion on the input:
  \begin{code}
@f\_r@ :: List a -> x
@f\_r@ Nil = @r\_0@ ()
@f\_r@ (Cons h t) = @r\_1@ h (@f\_r@ t)
  \end{code}
This recursion is \emph{structural} in nature, meaning it satisfies the following two equations which arise by unpacking the algebra homomorphism equations elementwise:
  \begin{align}
    f_r(\Nil) &=  r_0(\bullet) \label{eq:list_equation1} \\
    f_r(\Cons(h, t)) &= r_1(h, f_r(t)) \label{eq:list_equation2}
  \end{align}
  \Cref{eq:list_equation1} tells us that we get the same result if we apply $f_r$ to $\Nil$ or apply $r_0$ to the unique element of the singleton set\footnote{Where \mintinline{haskell}{()} was used to denote it in Haskell notation.}. \cref{eq:list_equation2} tells us that, starting with the head and tail of a list, we get the same result if we concatenate the head to the tail, and then process the entire list with $f_r$, or if we process the tail first with $f_r$, and then combine the result with the head using $r_1$.
\end{example}
It is important to remark that these equations \emph{\textbf{generalise} equivariance constraints} over a list structure. Both group equivariance and \cref{eq:list_equation1,eq:list_equation2} intuitively specify a function that is predictably affected by certain operations---but for the case of lists, these operations (concatenating) are \emph{not} group actions, as attaching an element to the front of the list does \emph{not} leave the list unchanged.

\begin{remark}\label{rmk:freemonoid}
Interestingly, given an algebra $(X, \coproduct{r_0}{r_1})$, there can only ever be \emph{one} algebra homomorphism from lists to it!
This is because $(\List{A}, \coproduct{\Nil}{\Cons})$ is an \emph{initial object} (\cref{def:initial_object}) in the category of $(1 + A \times -)$-algebras.
%\todo[inline]{Petar: It might be worth it to put definitions of initial/terminal objects here. They're short and can be conflated into one definition. TBD.}
The fact that these are initial arises from a deeper fact related to the fact that, in many cases, for a given endofunctor there is a monad whose category of monad algebras is equivalent to the original category of endofunctor algebras.
We note this because the construction which takes us from one to the other, the so-called \emph{algebraically free monad} on an endofunctor, will be seen in Part 3 to derive RNNs and other similar architectures from first principles.
\end{remark}
%not only because it enhances our understanding of the relationship between these two notions of algebra but

\begin{example}[Tree folds as algebra homomorphisms]
  \label{ex:trees_initial}
  Consider the endofunctor $A + (-)^2$ from \cref{ex:tree_as_algebra}, and an algebra homomorphism from $(\Tree{A}, \coproduct{\Leaf}{\Node})$ to any other $(A + (-)^2)$-algebra $(X, \coproduct{r_0}{r_1})$:
  % https://q.uiver.app/#q=WzAsNCxbMCwwLCJBICsgXFxUcmVle0F9XjIiXSxbMCwxLCJcXFRyZWV7QX1eMiJdLFsyLDAsIkEgKyBYXjIiXSxbMiwxLCJYIl0sWzAsMSwiXFxjb3Byb2R1Y3R7XFxMZWFmfXtcXE5vZGV9IiwyXSxbMSwzLCJmX3IiLDJdLFsyLDMsIlxcY29wcm9kdWN0e3JfMH17cl8xfSJdLFswLDIsIkEgKyBmX3JeMiJdXQ==
\[\begin{tikzcd}[ampersand replacement=\&]
	{A + \Tree{A}^2} \&\& {A + X^2} \\
	{\Tree{A}^2} \&\& X
	\arrow["{\coproduct{\Leaf}{\Node}}"', from=1-1, to=2-1]
	\arrow["{f_r}"', from=2-1, to=2-3]
	\arrow["{\coproduct{r_0}{r_1}}", from=1-3, to=2-3]
	\arrow["{A + f_r^2}", from=1-1, to=1-3]
\end{tikzcd}\]
Then the map $f_r$ is necessarily a \emph{fold} over a tree. As with lists, it is implemented by recursion on the input, which is \emph{structural} in nature:
\begin{code}
@f\_r@ :: Tree a -> x
@f\_r@ (Leaf a) = @r\_0@ a
@f\_r@ (Node l r) = @r\_1@ (@f\_r@ l) (@f\_r@ r)
\end{code}
This means that it satisfies the following two equations which arise by unpacking the algebra homomorphism equations elementwise:
\begin{align}
  f_r(\Leaf(a)) &=  r_0(a) \label{eq:tree_equation1} \\
  f_r(\Node(l, r)) &= r_1(f_r(l), f_r(r)) \label{eq:tree_equation2}
\end{align}
These can also be thought as describing generalised equivariance over binary trees, analogously to lists. 
\end{example}

Dual to algebra homomorphisms and folds over inductive data structures, \emph{coalgebra homomorphisms} are categorical semantics of \emph{unfolds} over coinductive data structures.

\begin{example}[Unfolds as coalgebra homomorphisms]
  \label{ex:mealy_terminal}
  Consider the endofunctor $(I \to O \times -)$ from \cref{ex:mealy_machine_as_coalgebra}, and a coalgebra homomorphism from $(\Mealy{O}{I}, \StreamNext)$ to any other $(I \to O \times -)$-coalgebra $(X, n)$:
  % https://q.uiver.app/#q=WzAsNCxbMCwwLCJYIl0sWzAsMSwiKEkgXFx0byBPIFxcdGltZXMgWCkiXSxbMiwwLCJcXE1lYWx5e099e0l9Il0sWzIsMSwiKEkgXFx0byBPIFxcdGltZXMgXFxNZWFseXtPfXtJfSkiXSxbMCwxLCJuIiwyXSxbMiwzLCJcXFN0cmVhbU5leHQiXSxbMCwyLCJmX24iXSxbMSwzLCIoSSBcXHRvIE8gXFx0aW1lcyBmX24pIiwyXV0=
\[\begin{tikzcd}[ampersand replacement=\&]
	X \&\& {\Mealy{O}{I}} \\
	{(I \to O \times X)} \&\& {(I \to O \times \Mealy{O}{I})}
	\arrow["n"', from=1-1, to=2-1]
	\arrow["\StreamNext", from=1-3, to=2-3]
	\arrow["{f_n}", from=1-1, to=1-3]
	\arrow["{(I \to O \times f_n)}"', from=2-1, to=2-3]
\end{tikzcd}\]
The map $f_n$ here can be thought of as a generalised \emph{unfold}, a concept from functional programming describing how a potentially infinite data structure is obtained from a single value.
It is implemented by a \emph{corecursive} function:
\begin{code}
@f\_n@ :: x -> Mealy o i
@f\_n@ x = MkMealy
   \i -> let (o', x') = n x i
          in (o', @f\_n@ x')
\end{code}
which is again structural in nature.
This means that it satisfies the following two equations which arise by unpacking the coalgebra homomorphism equations elementwise:
\begin{align}
  n(x)(i)_1 = \StreamNext(f_n(x))(i)_1 \label{eq:mealy_equation1}\\
  f_n(n(x)(i)_2) = \StreamNext(f_n(x))(i)_2 \label{eq:mealy_equation2}
\end{align}
\Cref{eq:mealy_equation1} tells us that the output of the Mealy machine produced by $f_n$ at state $x$ and input $i$ is given by the output of $n$ at state $x$ and input $i$, and \cref{eq:mealy_equation2} tells us that the next Mealy machine produced at $x$ and $i$ is the one produced by $f_n$ at $n(x)_2(i)$.
\end{example}

This, too, \emph{\textbf{generalises equivariance}} constraints, now describing an interactive automaton which is by no means invertible.
Instead, it is \emph{dynamic} in nature, producing outputs which are dependent on the current state of the machine and previously unknown inputs.
Lastly, in \cref{ex:streams_terminal,ex:moore_terminal}, we show that two kinds of automata---streams and Moore machines---are also examples of coalgebra homomorphisms.
Just as before, we can embed all of our objects and morphisms into $\Vect$ to study the weight sharing constraints induced by such a condition---see \cref{ex:streams_terminal}.

\subsection{Where to Next?}

\begin{figure*}
  \begin{equation*}
    % https://q.uiver.app/#q=WzAsMTUsWzAsMSwiMSArIEEgXFx0aW1lcyBTIl0sWzAsMiwiUyJdLFsyLDEsIlMiXSxbMiwyLCJPIFxcdGltZXMgUyJdLFs0LDEsIkEgKyBTXjIiXSxbNCwyLCJTIl0sWzYsMiwiKEkgXFx0byBPIFxcdGltZXMgUykiXSxbNiwxLCJTIl0sWzgsMiwiTyBcXHRpbWVzIChJIFxcdG8gUykiXSxbOCwxLCJTIl0sWzAsMCwiXFxzdWJzdGFja3tcXHRleHR7Rm9sZGluZyByZWN1cnJlbnR9IFxcXFwgXFx0ZXh0e25ldXJhbCBuZXR3b3JrfX0iXSxbMiwwLCJcXHN1YnN0YWNre1xcdGV4dHtVbm9sZGluZyByZWN1cnJlbnR9IFxcXFwgXFx0ZXh0e25ldXJhbCBuZXR3b3JrfX0iXSxbNCwwLCJcXHN1YnN0YWNre1xcdGV4dHtSZWN1cnNpdmV9IFxcXFwgXFx0ZXh0e25ldXJhbCBuZXR3b3JrfX0iXSxbOCwwLCJcXHN1YnN0YWNre1xcdGV4dHtgYE1vb3JlIG1hY2hpbmUnJ30gXFxcXCBcXHRleHR7bmV1cmFsIG5ldHdvcmt9fSJdLFs2LDAsIlxcc3Vic3RhY2t7XFx0ZXh0e0Z1bGwgcmVjdXJyZW50fSBcXFxcIFxcdGV4dHtuZXVyYWwgbmV0d29ya319Il0sWzAsMSwiKFAsIFxcZm9sZGluZ1JlY3VycmVudENlbGwpIl0sWzIsMywiKFAsIFxcdW5mb2xkaW5nUmVjdXJyZW50Q2VsbCkiXSxbNCw1LCIoUCwgXFxyZWN1cnNpdmVDZWxsKSJdLFs3LDYsIihQLCBcXG1lYWx5Q2VsbCkiXSxbOSw4LCIoUCwgXFxtb29yZUNlbGwpIl1d
    \begin{tikzcd}[ampersand replacement=\&]
    	{\substack{\text{Folding recurrent} \\ \text{neural network}}} \&\& {\substack{\text{Unfolding recurrent} \\ \text{neural network}}} \&\& {\substack{\text{Recursive} \\ \text{neural network}}} \&\& {\substack{\text{Full recurrent} \\ \text{neural network}}} \& {\substack{\text{``Moore machine''} \\ \text{neural network}}} \\
    	{1 + A \times S} \&\& S \&\& {A + S^2} \&\& S \& S \\
    	S \&\& {O \times S} \&\& S \&\& {(I \to O \times S)} \& {O \times (I \to S)}
    	\arrow["{(P, \foldingRecurrentCell)}", from=2-1, to=3-1]
    	\arrow["{(P, \unfoldingRecurrentCell)}", from=2-3, to=3-3]
    	\arrow["{(P, \recursiveCell)}", from=2-5, to=3-5]
    	\arrow["{(P, \mealyCell)}", from=2-7, to=3-7]
    	\arrow["{(P, \mooreCell)}", from=2-8, to=3-8]
    \end{tikzcd}
\end{equation*}
  \begin{minipage}{0.18\textwidth}
    \scaletikzfig[0.55]{folding_rnn_cell}
  \end{minipage}
  \hfill
  \begin{minipage}{0.18\textwidth}
    \scaletikzfig[0.55]{unfolding_recurrent_cell}
  \end{minipage}
  \hfill
  \begin{minipage}{0.18\textwidth}
    \scaletikzfig[0.55]{recursive_cell_lab-leaves_no-nodes}
  \end{minipage}
  \hfill
  \begin{minipage}{0.18\textwidth}
    \scaletikzfig[0.55]{mealy_cell}
  \end{minipage}
  \hfill
  \begin{minipage}{0.18\textwidth}
    \scaletikzfig[0.55]{moore_cell2}
  \end{minipage}
  \caption{Parametric (co)algebras provide a high-level framework for describing structured computation in neural networks.}
  \label{fig:cells_as_algebras}
\end{figure*}
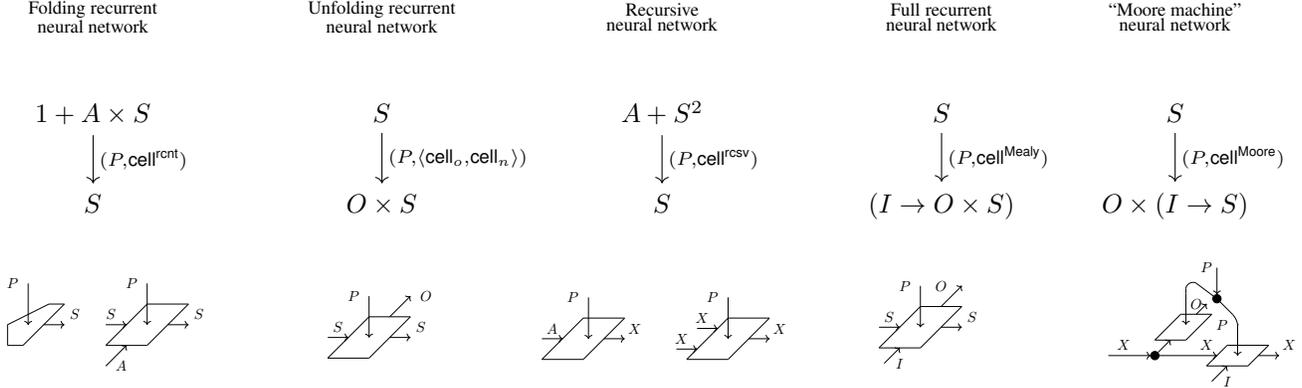

Let's take a step back and understand what we've done.
We have shown that an existing categorical framework \emph{uniformly} captures a number of different data structures and automata, as particular (co)algebras of an endofunctor.
By choosing a well-understood data structure, we induce a structural constraint on the control flow of the corresponding neural network, by utilising homomorphisms of these endofunctor (co)algebras.
These follow the same recipe as monad algebra homomorphisms, and hence can be thought of as \emph{generalising equivariance}---describing functions \emph{beyond} what geometric deep learning can offer.

This is concrete evidence for our position---that \emph{\textbf{categorical algebra homomorphisms} are suitable for capturing various constraints one can place on deep learning architectures}. Our evidence so far rested on \emph{endofunctor algebras}, which are a particularly fruitful variant.

However, this construct leaves much to be desired. One major issue is that, to prescribe any notion of \emph{weight sharing}, for all of these examples we have implicitly assumed homomorphisms to be linear transformations by placing them into the category $\Vect$.
But most neural networks aren't simply linear maps, meaning that these analyses are limited to analysing their individual layers.
In the standard example of recurrent neural networks, an RNN cell is an \emph{arbitrary} differentiable function, usually composed out of a sequence of linear and non-linear maps.
Further, in both practice and theory, neural networks are treated as \emph{parametric} functions.
Specifically, in frameworks like \texttt{JAX}, the parameters often need to be passed \emph{explicitly} to any forward or backward pass of a neural network.

How can we explicitly model parameters and non-linear maps, without abandoning the presented categorical framework?
Furthermore, in practice---with recurrent or recursive \citep{socher_recursive_2013} neural networks, for instance---there are established techniques for weight sharing.
Can we establish formal criteria for when these techniques are \emph{correct}? Just as how we generalised GDL to the setting of category theory, we can go further: to the setting of \emph{2-categories}---the setting we use to study \emph{parametric morphisms}.

\section{2-Categories and Parametric Morphisms}
\label{part:parametric_morphisms}

While category theory is a powerful framework, it leaves much to be desired in terms of \emph{higher-order relationships} between morphisms.
It only deals with \emph{sets} of morphisms, with no possible way to \emph{compare} elements of these sets.
%This is why we turn to \emph{2-category theory} which deals with an entire \emph{category} of morphisms, and the $\Para$ construction which explicitly models parameters and weight sharing.
This is where the theory of \emph{2-categories} comes in, which deals with an entire \emph{category} of morphisms.
While in a (1-)category, one has objects and morphisms between objects, in a 2-category one has objects (known as \emph{0-morphisms}), morphisms between objects (\emph{1-morphisms}), and morphisms between \emph{morphisms} (\emph{2-morphisms}).
We have, in fact, already secretly seen an instance of a 2-category, $\Cat$, when defining the essential concepts of category theory. Specifically, in $\Cat$, objects are \emph{categories}, morphisms are \emph{functors} between them, and 2-morphisms are \emph{natural transformations} between functors.

\subsection{The 2-category $\Para$}

In this section we define an established 2-category $\Para$ \cite{cruttwell_categorical_2022,capucci_towards_2022}, and proceed to unpack the manner which we posit weight sharing can be modelled formally in it.

While it shares objects with the category $\Set$, its 1-morphisms are not functions, but \emph{parametric} functions.
That is, a 1-morphism $A \to B$ here consists of a pair $(P, f)$, where $P \in \Set$ and $f : P \times A \to B$.

\begin{minipage}{0.6\columnwidth}
$\Para$ morphisms admit an elegant graphical formalism. Parameters ($P$) are drawn \emph{vertically}, signifying that they are part of the \emph{morphism}, and not objects.
\end{minipage}
\hfill
\begin{minipage}{0.3\columnwidth}
\begin{figure}[H]
  \scaletikzfig[0.55]{para}
  \label{fig:para_morphism}
\end{figure}
\end{minipage}

The 2-category $\Para$ models the algebra of composition of neural networks; the sequential composition of parametric morphisms composes the parameter spaces in parallel (\cref{fig:para_composition}).
\iffalse This captures the fact that composing two neural networks in sequences creates a neural network whose weights are the weights of both of the constituent networks. \fi
%if they can't infer that, we'll have lost them a long time ago.

The 2-morphisms in $\Para$ capture \emph{reparameterisations} between parametric functions.
Importantly, this allows for the \textbf{explicit treatment of weight tying}, where a parametric morphism $(P \times P, f)$ can have its weights tied by precomposing with the copy map $\Delta_P : P \to P \times P$.
\begin{figure}[H]
  \scaletikzfig[0.4]{weight_tying}
  \label{fig:weight_tying}
\end{figure}
This 2-category\footnote{More precisely, the construction $\Para(\_)$. See \cref{sec:2-categories_para(c)}.}  is one of the key components in the categorical picture of gradient-based learning \cite{cruttwell_categorical_2022}. But we hypothesise that more is true (Appendix \ref{sec:parametric_endofunctor_algebras}):
\begin{center}
It is our position that the 2-category $\Para$ and 2-categorical algebra valued in it provide a \emph{\textbf{formal theory of neural network architectures}, establish \emph{\textbf{formal criteria}} for weight tying correctness and \emph{\textbf{inform design}} of new architectures.}
\end{center}
% \iffalse
% \begin{center}
%   The 2-category $\Para$ and the appropriate notions of 2-categorical algebra in it  \emph{\textbf{foundation of a theory of neural network architectures}}, establishing \emph{\textbf{formal criteria}} for correctness of weight tying, and \emph{\textbf{informing design}} of new architectures.
% \end{center}
% \fi

%\todo[inline]{I'm open to rephrasing the last sentence above, and the claim below. I just thought having this stated very explicitly}

\iffalse
In \cref{sec:parametric_endofunctor_algebras} we present detailed evidence in favour of this claim, by modelling numerous deep learning architectures as \emph{lax algebras of a 2-monad} (\cref{fig:cells_as_algebras}).
\fi

\subsection{2-dimensional Categorical Algebra}
2-category theory is markedly richer than 1-category theory. %with major consequences for 2-categorical algebra.
%Paul: I don't mean algebras. I mean the topic

While diagrams in a 1-category either commute or do not commute, in a 2-category, they serve as a \emph{1-skeleton} to which 2-morphisms \emph{attach}. In \emph{any} 2-category a square may: commute, pseudo-commute, lax-commute, or oplax\footnote{Often also called \emph{colax}.} commute, meaning, respectively, that relevant paths paths are equal, isomorphic, or there is a 2-morphism from one to the other in one direction or the other. The diagrams below present these four options, with 2-morphisms denoted by double arrows.
% https://q.uiver.app/#q=WzAsMTYsWzAsMCwiXFxidWxsZXQiXSxbMSwwLCJcXGJ1bGxldCJdLFsxLDEsIlxcYnVsbGV0Il0sWzAsMSwiXFxidWxsZXQiXSxbMiwwLCJcXGJ1bGxldCJdLFsyLDEsIlxcYnVsbGV0Il0sWzMsMSwiXFxidWxsZXQiXSxbMywwLCJcXGJ1bGxldCJdLFs0LDAsIlxcYnVsbGV0Il0sWzQsMSwiXFxidWxsZXQiXSxbNSwwLCJcXGJ1bGxldCJdLFs1LDEsIlxcYnVsbGV0Il0sWzYsMCwiXFxidWxsZXQiXSxbNiwxLCJcXGJ1bGxldCJdLFs3LDAsIlxcYnVsbGV0Il0sWzcsMSwiXFxidWxsZXQiXSxbMCwxXSxbMSwyXSxbMCwzXSxbMywyXSxbMywxLCI9IiwxLHsic3R5bGUiOnsiYm9keSI6eyJuYW1lIjoibm9uZSJ9LCJoZWFkIjp7Im5hbWUiOiJub25lIn19fV0sWzQsNV0sWzUsNl0sWzQsN10sWzcsNl0sWzcsNSwiXFxjb25nIiwyLHsibGV2ZWwiOjJ9XSxbOCw5XSxbOCwxMF0sWzEwLDExXSxbOSwxMV0sWzEwLDksIiIsMix7ImxldmVsIjoyfV0sWzEyLDEzXSxbMTIsMTRdLFsxNCwxNV0sWzEzLDE1XSxbMTMsMTQsIiIsMSx7ImxldmVsIjoyfV1d
\[\begin{tikzcd}[ampersand replacement=\&,column sep=scriptsize]
	\bullet \& \bullet \& \bullet \& \bullet \& \bullet \& \bullet \& \bullet \& \bullet \\
	\bullet \& \bullet \& \bullet \& \bullet \& \bullet \& \bullet \& \bullet \& \bullet
	\arrow[from=1-1, to=1-2]
	\arrow[from=1-2, to=2-2]
	\arrow[from=1-1, to=2-1]
	\arrow[from=2-1, to=2-2]
	\arrow["{=}"{description}, draw=none, from=2-1, to=1-2]
	\arrow[from=1-3, to=2-3]
	\arrow[from=2-3, to=2-4]
	\arrow[from=1-3, to=1-4]
	\arrow[from=1-4, to=2-4]
	\arrow["\cong"', Rightarrow, from=1-4, to=2-3]
	\arrow[from=1-5, to=2-5]
	\arrow[from=1-5, to=1-6]
	\arrow[from=1-6, to=2-6]
	\arrow[from=2-5, to=2-6]
	\arrow[Rightarrow, from=1-6, to=2-5]
	\arrow[from=1-7, to=2-7]
	\arrow[from=1-7, to=1-8]
	\arrow[from=1-8, to=2-8]
	\arrow[from=2-7, to=2-8]
	\arrow[Rightarrow, from=2-7, to=1-8]
\end{tikzcd}\]
%More, these 2-cells are \emph{data}, which must be tracked, unlike the mere facts of a 1-morphism diagram commuting or not.
In the long run, we expect that \emph{all} of these notions will apply to, either explaining or specifying, aspects of neural architecture past, present and future. Focusing on just one of them, the \emph{lax algebras} are sufficient to derive \emph{recursive}, \emph{recurrent}, and similar neural networks from first principles. Notably, morphisms of lax algebras are also expressive enough to capture \emph{1-cocycles}, used to formalise asynchronous neural networks in \cite{dudzik2023asynchronous}---see Appendix \ref{app:gdl_examples}.

Interestingly, this story of how an individual recurrent, recursive, etc. neural network cell generates a full recursive, recurrent etc. neural network is a particular 2-categorical analogue to the story of \emph{algebraically free monads} on an endofunctor we briefly mentioned in \cref{rmk:freemonoid}.

For all the examples of endofunctors in \cref{subsec:endofunctors_algebras}, there is a monad whose category of algebras $\Free_{\Mnd}(F)$ is equivalent to the category of algebras for the original endofunctor $F$.
We obtain $\Free_{\Mnd}(F)$ by iterating $F$ until it \emph{stabilises}, meaning further application of the endofunctor does not change the composition. Functional programmers may recognise this from the implementation of free monads in Haskell, while formally this is defined using \emph{colimits} (see \cref{sec:Kellys_transfinite_construction}). Using this concept, we can define a functor mapping an $F$-algebra $\left( A,a \right)$ to the $\Free_{\Mnd}(F)$-algebra $(A,\underrightarrow{\mathsf{lim}}(a\circ Fa \circ F^2 a \circ \cdots \circ F^n a))$, connecting appropriate endofunctor algebras to monad algebras.

\iffalse
% https://q.uiver.app/#q=WzAsNCxbMCwwLCJcXG1hdGhzZntBbGd9X1xcbWF0aHNme0VuZH0oRikiXSxbMiwwLCJcXG1hdGhzZntBbGd9X1xcbWF0aHNme01uZH0oXFxtYXRoc2Z7RnJlZV9cXG1hdGhzZntNbmR9KEYpfSkiXSxbMCwxLCJcXGxlZnQoRkFcXG92ZXJzZXR7YX17XFxyaWdodGFycm93fSBBIFxccmlnaHQpIl0sWzIsMSwiXFxsZWZ0KCBcXGNkb3RzXFxyaWdodGFycm93IEZeMiBBXFxvdmVyc2V0e0ZhfXtcXHJpZ2h0YXJyb3d9IEZBIFxcb3ZlcnNldHthfXtcXHJpZ2h0YXJyb3d9IEFcXHJpZ2h0KSJdLFswLDFdLFsyLDMsIiIsMCx7InN0eWxlIjp7InRhaWwiOnsibmFtZSI6Im1hcHMgdG8ifX19XV0=
\[\begin{tikzcd}[ampersand replacement=\&]
	{F\AlgEndo{}} \&\& {\mathsf{Alg}_\Mnd(\mathsf{Free_\Mnd}(F))} \\
	{\left(FA\overset{a}{\rightarrow} A \right)} \&\& {\left( \cdots\rightarrow F^2 A\overset{Fa}{\rightarrow} FA \overset{a}{\rightarrow} A\right)}
	\arrow[from=1-1, to=1-3]
	\arrow[maps to, from=2-1, to=2-3]
\end{tikzcd}\]
\fi

But in the 2-dimensional case, we study %not the relationship between $\AlgEndo{F}$ and $\AlgMnd{\Free_{\Mnd}(F)}$ but instead
the relationship between $\AlgLaxEndo{F}$ and $\AlgLaxMnd{\Free_{\Mnd}(F)}$ and need contend not only with generating the 1-dimensional structure map, but also the 2-cells of the lax algebra for a monad.

To reconcile this with concrete applications, we note that we do not need to study general 2-endofunctors and 2-monads on $\Para$. Rather, examples which concern us arise from specific 1-categorical algebras (group action monads, inductive types, etc.), which are \emph{augmented} into 2-monads on $\Para$. As we prove in \cref{thm:parameters_lax_algebra_comonoids}, the lax cells of such algebras are actually \emph{comonoids}. The fact that we can duplicate or delete entries in vectors---the essence of tying weights---is the informal face of this comonoid structure.

%More, in the instances of $\Para$ which correspond to neural networks, \emph{every} object is endowed with the structure of a comonoid.  % the monoidal category from which $\Para$ is partly built, e.g. $\Vect$ are cartesian, 

We can now describe, even if space constraints prevent us from adequate level of detail, the universal properties of recurrent, recursive, and similar models: they are \emph{lax algebras} for \emph{free parametric monads} generated by \emph{parametric endofunctors}! Having lifted the concept of algebra introduced in Part 2 into 2-categories, we can now describe several influential neural networks \emph{fully} (not just their individual layers!) from first principles of functional programming.

\section{New Horizons}

Our framework gives \emph{the correct definition} of numerous variants of structured networks as universal parametric counterparts of known notions in computer science.
This immediately opens up innumerable avenues for research.

Firstly, any results of categorical deep learning as presented here rely on \emph{choosing} the right category to operate in; much like results in geometric deep learning relied on the choice of symmetry group. However, we have seen that monad algebras---which \emph{generalise} equivariance constraints---can be parametric, and \emph{lax}.
As a consequence, the kinds of equivariance constraints we can \emph{learn} become more general: we hypothesise neural networks that can learn not merely conservation laws (as in \citet{alet2021noether}), but verifiably correct logical argument, or code. This has ramifications for \emph{code synthesis}: we can, for example, specify neural networks  that learn only \emph{well-typed functions} by choosing appropriate algebras as their domain and codomain. %and specifying the neural network to be an algebra homomorphism.

This is made possible by our framework's generality: for example, by choosing \emph{polynomial functors} as endofunctors we get access to \emph{containers} \citep{abbott_categories_2003,hutchison_higher-order_2010}, a uniform way to program with and reason about datatypes and polymorphic functions.
By combining these insights with recent advances enabling \emph{purely functional} differentiation through inductive and coinductive types \citep{nunes_chad_2023}, we open new vistas for \emph{type-safe} design and implementation of neural networks in functional languages.

One major limitation of geometric deep learning was that it was typically only able to deal with individual neural network layers, owing to its focus on \emph{linear} equivariant functions (see e.g. \citet{maron2018invariant} for the case of graphs). All nonlinear behaviours can usually be obtained through composition of such layers with nonlinearities, but GDL typically makes no attempt to explain the significance of the choice of nonlinearity---which is known to often be a significant decision \citep{shazeer2020glu}. Within our framework, we can reason about architectural blocks spanning multiple layers---as evidenced by our weight tying examples---and hence we believe CDL should enable us to have a theory of architectures which properly treats nonlinearities.

Our framework also offers a \emph{proactive} path towards \emph{equitable} AI systems.
GDL already enables the architectural imposition of protected classes invariance (see \citet{Choraria2021BalancingFA} for example). This deals, at least partially, both with issues of inequity in training data and inequity in algorithms since such an invariant model is, by construction, exclusively capable of inference on the dimensions of latent representation which are orthogonal to protected class.

With CDL, we hope to enable even finer grained control. By way of categorical logic, we hope that CDL will lead us to a new and deeper understanding of the relationship between architecture and logic, in particular clarifying the logics of inductive bias. We hope that our framework will eventually allow us to specify the kinds of arguments the neural networks can use to come to their conclusions. This is a level of expressivity permitting reliable use for assessing bias, fairness in the reasoning done by AI models deployed at scale.
We thus believe that this is the right path to AI compliance and safety, and not merely explainable, but verifiable AI.

%While Current regulation tends to be \emph{retroactive}, aiming mainly to explain the decisions the model has already made, we believe that the right path to AI compliance and safety is not merely explainable, but verifiable AI.

% By instantiating our constructs in a category with sufficient structure that can model \emph{differentiation} (e.g. diffeological spaces), we can automatically differentiate through them, easing implementation in deep learning frameworks.

\section*{Impact Statement}

This paper presents work whose goal is to advance the field of Machine Learning. There are many potential societal consequences of our work, none which we feel must be specifically highlighted here.

\section*{Acknowledgments}

The authors wish to thank Razvan Pascanu and Yee Whye Teh for reviewing the paper prior to submission.

\bibliography{main}
\bibliographystyle{icml2024}

%%%%%%%%%%%%%%%%%%%%%%%%%%%%%%%%%%%%%%%%%%%%%%%%%%%%%%%%%%%%%%%%%%%%%%%%%%%%%%%
%%%%%%%%%%%%%%%%%%%%%%%%%%%%%%%%%%%%%%%%%%%%%%%%%%%%%%%%%%%%%%%%%%%%%%%%%%%%%%%
% APPENDIX
%%%%%%%%%%%%%%%%%%%%%%%%%%%%%%%%%%%%%%%%%%%%%%%%%%%%%%%%%%%%%%%%%%%%%%%%%%%%%%%
%%%%%%%%%%%%%%%%%%%%%%%%%%%%%%%%%%%%%%%%%%%%%%%%%%%%%%%%%%%%%%%%%%%%%%%%%%%%%%%
\newpage
\onecolumn
\appendix
\section{Category Theory Basics}

%\todo[inline]{Tamara: The only reference to this proposition has been commented out, so removing for now}
% \begin{proposition}
%     \label{prop:induced_action_free_vector_space}
%     Let $X$ be a set, and let $C$ be a vector space.
%     Then given any group $G$ and a group action $\act : G \times X \to X$ the map $\mathsf{F}_{C-\Vect}(\act)$, interpreted as the map $\coprod_{G}F_{C-\Vect}(X) \rightarrow F_{C-\Vect}(X)$, is an action of $G$  on $F_{C-\Vect} \overset{\sim}{\rightarrow} C^X$. Moreover, in light of the last asserted isomorphism,  it may be written $(g \act x)(u) = x(g^{-1} \act u)$.
% \end{proposition}

% \begin{proof}
%     Purely formal and left to the reader.
% \end{proof}

The natural notion of `sameness` for categories is \textit{equivalence}:

\begin{definition}[Equivalence]
    An \newdef{equivalence} between two categories $\cC$ and $\cD$, written $\cC \overset{\sim}{\rightarrow} \cD$, consists of a pair of functors $F: \cC \rightarrow \cD$ and $G: \cD \rightarrow \cC$ together with natural isomorphisms (natural transformations where every component has an inverse) $F \circ G \cong 1_{\cD}$ and $G \circ F \cong 1_{\cC}$.
\end{definition}

\begin{definition}[Initial object]
    \label{def:initial_object}
    An object $I$ in a category $\cC$ is called \newdef{initial} if for every $X \in \cC$ it naturally holds that $\cC(I, X) \cong 1$, meaning that there is only one map of type $I \to X$.
\end{definition}

% \begin{definition}[Quasi-initial object]
%     \label{def:quasi_initial_object}
%     An object $I$ in a 2-category $\cB$ is \newdef{quasi-initial} if for every object $X$ there is a laxly natural adjunction $R : \cB(I, X) \leftrightarrows \NamedCat{1} : L$ where $\NamedCat{1}$ is the terminal category in $\Cat$.
% \end{definition}

% \begin{corollary}
%     If $I$ is quasi-initial in $\cB$, then for every $X: \cB$ the hom-category $\cB(I, X)$ has a strict initial object.
% \end{corollary}

\begin{definition}[Terminal object]
    \label{def:terminal_object}
    An object $T$ in a category $\cC$ is called \newdef{terminal} if for every $X \in \cC$ it naturally holds that $\cC(X, T) \cong 1$, meaning that there is only one map of type $X \to T$.
\end{definition}

\begin{definition}[Limit]\label{def:limit}
    For categories $\cJ$ and $\cC$, a \textbf{diagram of shape $\cJ$} in $\cC$ is a functor $D : \cJ \to \cC$. A \textbf{cone} to a diagram $D$ consists of an object $C \in \cC$ and a natural transformation from a functor constant at $C$ to the functor $D$, i.e.\ a family of morphisms $c_j: C \to D(j)$ for each object $j \in \cJ$, such that for any $f: i \to j$ in $\cC$ the following diagram commutes:
  \[\begin{tikzcd}[ampersand replacement=\&]
    C \& D(i) \\
    \& D(j)
    \arrow["D(f)", from=1-2, to=2-2]
    \arrow["c_i", from=1-1, to=1-2]
    \arrow["c_j"', from=1-1, to=2-2]
  \end{tikzcd}\]
  A morphism of cones $\theta: (C, c_j) \to (C', c_j')$ is a morphism $\theta$ in $\cC$ making each diagram 
    \[\begin{tikzcd}[ampersand replacement=\&]
    C \& C' \\
    \& D(j)
    \arrow["c'_j", from=1-2, to=2-2]
    \arrow["\theta", from=1-1, to=1-2]
    \arrow["c_j"', from=1-1, to=2-2]
  \end{tikzcd}\]
  commute. The \textbf{limit} of a diagram $D$, written $\varprojlim D$, is the terminal object in the category $\mathsf{Cone}(D)$ of cones to $D$ and morphisms between them.
    
\end{definition}

\begin{definition}[Colimit]\label{def:colimit}
  The \textbf{colimit} $\colim D$ of a diagram $D: \cJ^{op} \to \cC$ is the initial object in the category $\mathsf{Cocone}(D)$ of cocones to $D$, where a cocone $(C, c_j: D(j) \to C)$ has the dual property of a cone (above) with the morphisms reversed. \textbf{Small} (respectively $\kappa$-directed, connected, ...) colimits are colimits for which the indexing category $\cJ$ is small (respectively $\kappa$-directed, connected, ...).
\end{definition}

\begin{example}
    A terminal object in $\cC$ is a limit of the unique diagram from the empty category to $\cC$. Similarly an initial object is an example of a colimit.
\end{example}

\section{1-Categorical Algebra}

\begin{definition}[Monad coherence diagrams]
    \label{def:monad_diagrams}
    A triple $(M,\eta,\mu)$ of:
    \begin{itemize}
    \item an endofunctor $M : \cC \to \cC$;
    \item a natural transformations $\eta : \id_\cC \Rightarrow M$; and
    \item a natural transformation $\mu : M \circ M \Rightarrow M$
    \end{itemize}
    constitute a \newdef{monad} if the following diagrams commute:
    
% https://q.uiver.app/#q=WzAsNyxbMCwwLCJNIl0sWzEsMSwiTSJdLFsxLDAsIk1cXGNpcmMgTSJdLFsyLDAsIk0gXFxjaXJjIE0gXFxjaXJjIE0iXSxbMiwxLCJNXFxjaXJjIE0iXSxbMywxLCJNIl0sWzMsMCwiTSBcXGNpcmMgTSJdLFswLDEsIiIsMCx7ImxldmVsIjoyLCJzdHlsZSI6eyJoZWFkIjp7Im5hbWUiOiJub25lIn19fV0sWzIsMSwiXFxtdSJdLFswLDIsIlxcZXRhIl0sWzMsNCwiXFxtdSIsMl0sWzYsNSwiXFxtdSJdLFszLDYsIk1cXGNpcmMgXFxtdSJdLFs0LDUsIlxcbXUiLDJdXQ==
\[\begin{tikzcd}[ampersand replacement=\&]
	M \& {M\circ M} \& {M \circ M \circ M} \& {M \circ M} \\
	\& M \& {M\circ M} \& M
	\arrow[Rightarrow, no head, from=1-1, to=2-2]
	\arrow["\mu", from=1-2, to=2-2]
	\arrow["\eta", from=1-1, to=1-2]
	\arrow["\mu"', from=1-3, to=2-3]
	\arrow["\mu", from=1-4, to=2-4]
	\arrow["{M\circ \mu}", from=1-3, to=1-4]
	\arrow["\mu"', from=2-3, to=2-4]
\end{tikzcd}\]

%constitute a \newdef{monad} if the following diagrams commute for every $A \in \cC$:
    
% https://q.uiver.app/#q=WzAsNyxbMCwwLCJNKEEpIl0sWzEsMSwiTShBKSJdLFsxLDAsIk0oTShBKSkiXSxbMiwwLCJNKE0oTShBKSkpIl0sWzIsMSwiTShNKEEpKSJdLFszLDEsIk0oQSkiXSxbMywwLCJNKE0oQSkpIl0sWzAsMSwiIiwwLHsibGV2ZWwiOjIsInN0eWxlIjp7ImhlYWQiOnsibmFtZSI6Im5vbmUifX19XSxbMiwxLCJcXG11X0EiXSxbMCwyLCJcXGV0YV9BIl0sWzMsNCwiXFxtdV97TShBKX0iLDJdLFs2LDUsIlxcbXVfQSJdLFszLDYsIk0oXFxtdV9BKSJdLFs0LDUsIlxcbXVfQSIsMl1d
\iffalse
\[\begin{tikzcd}[ampersand replacement=\&]
	{M(A)} \& {M(M(A))} \& {M(M(M(A)))} \& {M(M(A))} \\
	\& {M(A)} \& {M(M(A))} \& {M(A)}
	\arrow[Rightarrow, no head, from=1-1, to=2-2]
	\arrow["{\mu_A}", from=1-2, to=2-2]
	\arrow["{\eta_A}", from=1-1, to=1-2]
	\arrow["{\mu_{M(A)}}"', from=1-3, to=2-3]
	\arrow["{\mu_A}", from=1-4, to=2-4]
	\arrow["{M(\mu_A)}", from=1-3, to=1-4]
	\arrow["{\mu_A}"', from=2-3, to=2-4]
\end{tikzcd}\]
\fi
\end{definition}

% Bruno: I had deleted this accidentaly
\begin{definition}[Coalgebra for an endofunctor]
    \label{def:coalgebra_for_endofunctor}
    Let $\cC$ be a category and $F$ an endofunctor on $\cC$.
    A coalgebra for $F$ is a pair $(A, a)$ where $A$ is an object of $\cC$ and $a : A \to F(A)$ is a morphism of $\cC$.
\end{definition}

\subsection{Well-pointed Endofunctors and Algebraically Free Monads}

\iffalse
\todo[inline]{Bruno: Some clarification should be made about the content that follows. While one can skip entirely the notions of \emph{well-pointed endofunctors and algebraically free monads} here, and focus instead on just initial/final (co)algebras, it turns out that when one generalises to the 2-categorical setting (and Para), initial/final (co)algebras are not enough anymore. Namely, while they allow us to model weight sharing as a morphism, i.e.\ an algebra homomorphism, morphism does not have a universal property in the category of algebras for a 2-endofunctor. Instead, it has the right universal property in the category of algebras for a 2-monad. This might suggest it is possible to delay the introduction of these concepts until much later.}
\fi

In this subsection we'll relate endofunctors and their algebras to monads and their algebras by way of \emph{well-pointed endofunctors} and the transfinite construction for an \emph{algebraically free monad} on such an endofunctor.

\begin{definition}
  A \newdef{pointed endofunctor} $(F,\sigma)$ comprises an endofunctor $F : \cC \to \cC$, and a natural transformation $\sigma : \id_{\cC} \Rightarrow F$.
  A pointed endofunctor is said to be \newdef{well-pointed} if the whiskering of $F$ and $\sigma$ either from the left or the right gives the same result, or in other words, if the natural transformations $\sigma \circ F$ and $F \circ \sigma$ (which are of type $F \Rightarrow F \circ F$) are equal.

  An \newdef{algebra for a pointed endofunctor} $(F, \sigma)$ is an algebra $(A,a)$ for the endofunctor $F$ for which the following diagram commutes:
  % https://q.uiver.app/#q=WzAsMyxbMCwwLCJBIl0sWzEsMCwiRkEiXSxbMSwxLCJBIl0sWzEsMiwiYSJdLFswLDEsIlxcc2lnbWFfQSJdLFswLDIsIlxcaWRfQSIsMl1d
  \[\begin{tikzcd}[ampersand replacement=\&]
    A \& FA \\
    \& A
    \arrow["a", from=1-2, to=2-2]
    \arrow["{\sigma_A}", from=1-1, to=1-2]
    \arrow["{\id_A}"', from=1-1, to=2-2]
  \end{tikzcd}\]

\end{definition}

\begin{remark}
  As well-pointedness is a property, and not structure, algebras for well-pointed endofunctors are just algebras for the pointed endofunctors.
  Morphisms of algebras for pointed endofunctors are morphisms of algebras for the underlying endofunctor.
\end{remark}

\begin{example}[Monads and pointed endofunctors, idempotent monads and well-pointed endofunctors]
  The unit and underlying endofunctor of a monad constitute a pointed endofunctor. If moreover that monad is idempotent, then that pointed endofunctor is well-pointed.
\end{example}

\begin{definition}
  For a given endofunctor $F$, its algebras and algebra homomorphisms form a category we denote by $\AlgEndo{F}$. For $(F,\sigma)$ a (well)-pointed endofunctor we denote by $\AlgPEndo{F}$ the category of algebras for $F$ and homomorphisms thereof. For $(F,\mu,\eta)$ a monad its algebras and homomorphisms thereof form a category we denote by $\AlgMnd{F}$.
\end{definition}

\begin{lemma}
  \label{lemma:endo_alg_to_pointed_endo_alg_iso}
  Suppose $F$ is an endofunctor on a category $\cC$ with coproducts. Then there is an equivalence of categories $\AlgEndo{F} \overset{\sim}{\rightarrow} \AlgPEndo{F+\id_{\cC}}$.
\end{lemma}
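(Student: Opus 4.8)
The plan is to exhibit an explicit functor $\Phi : \AlgEndo{F} \to \AlgPEndo{F + \id_\cC}$ and show it is in fact an isomorphism of categories, which is more than enough to give the claimed equivalence. First I would fix the point making $F + \id_\cC$ into a pointed endofunctor: take $\sigma : \id_\cC \Rightarrow F + \id_\cC$ whose component at $A$ is the right coproduct inclusion $\sigma_A = \inr_A : A \to FA + A$. Naturality is immediate from functoriality of the coproduct. I would remark that $F + \id_\cC$ is pointed but in general \emph{not} well-pointed, which is harmless since the lemma only invokes the pointed structure of $\AlgPEndo{F + \id_\cC}$.

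Next I would unpack the objects. An algebra for the underlying endofunctor is a map $g : FA + A \to A$, which by the universal property of the coproduct factors uniquely as $g = \coproduct{a}{s}$ with $a := g \circ \inl_A : FA \to A$ and $s := g \circ \inr_A : A \to A$. The pointed-algebra axiom $g \circ \sigma_A = \id_A$ reads $g \circ \inr_A = \id_A$, i.e.\ $s = \id_A$. Hence algebras for $(F + \id_\cC, \sigma)$ are exactly the pairs $(A, \coproduct{a}{\id_A})$, which are in evident bijection with $F$-algebras $(A, a)$. This defines $\Phi$ on objects by $(A, a) \mapsto (A, \coproduct{a}{\id_A})$, and I would let $\Phi$ act as the identity on underlying morphisms.

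The key verification is that $\Phi$ is full and faithful: a morphism $f : A \to B$ is an $F$-algebra homomorphism $(A,a) \to (B,b)$ precisely when it is a homomorphism $(A, \coproduct{a}{\id_A}) \to (B, \coproduct{b}{\id_B})$ of the associated endofunctor algebras (recall homomorphisms of pointed-endofunctor algebras are just homomorphisms of the underlying endofunctor algebras). I would expand the latter condition, $f \circ \coproduct{a}{\id_A} = \coproduct{b}{\id_B} \circ (F f + f)$, by precomposing both sides with the inclusions $\inl_A$ and $\inr_A$. The $\inl_A$-component yields exactly $f \circ a = b \circ F f$, the $F$-algebra homomorphism equation, while the $\inr_A$-component yields the tautology $f = f$. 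Since two maps out of $FA + A$ agree iff they agree after precomposition with $\inl_A$ and $\inr_A$, the two homomorphism conditions coincide.

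Finally I would assemble these: $\Phi$ is bijective on objects and on hom-sets by the two correspondences above, and it manifestly preserves identities and composition because it leaves underlying morphisms untouched, so $\Phi$ is an isomorphism of categories and a fortiori the desired equivalence $\AlgEndo{F} \overset{\sim}{\rightarrow} \AlgPEndo{F + \id_\cC}$. I expect no substantial obstacle here; the only care needed is the coproduct bookkeeping---ensuring the decomposition $g = \coproduct{a}{s}$ is canonical (guaranteed by the uniqueness clause of the coproduct universal property) and that the $\inr_A$-component of the homomorphism equation is automatically satisfied. The hypothesis that $\cC$ has binary coproducts is precisely what makes $F + \id_\cC$ and all of these factorisations well-defined.
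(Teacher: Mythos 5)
Your proof is correct, and in fact establishes more than the lemma asks for: the paper states this lemma without any proof at all, so there is nothing to compare against, but your argument is the standard one and it goes through. The key points all check out: pointing $F + \id_\cC$ by $\sigma_A = \inr_A$, using the couniversal property to decompose any structure map $g : FA + A \to A$ as $\coproduct{g \circ \inl_A}{g \circ \inr_A}$, observing that the pointed-algebra axiom forces the second component to be $\id_A$, and then checking the homomorphism condition componentwise (where the $\inr$-component is a tautology, exactly because the paper's own remark makes morphisms of pointed-endofunctor algebras coincide with morphisms of the underlying endofunctor algebras). Your side remark is also accurate: $(F + \id_\cC, \inr)$ is pointed but generally \emph{not} well-pointed, since $\sigma(F+\id)$ lands entirely in the right summand while $(F+\id)\sigma$ sends the $F$-part through the left summand; this is harmless for the lemma itself, though it is worth knowing that the paper's subsequent transfinite-construction theorem is phrased for (well-)pointed endofunctors, so the pipeline built on this lemma leans on the weaker pointed hypothesis. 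The only cosmetic improvement would be to state explicitly that your functor $\Phi$ is an isomorphism of categories, hence in particular the claimed equivalence---which you do say, and which is a strictly stronger conclusion than the lemma records.
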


\begin{definition}
  \label{def:alg_free_monad}
  Given an endofunctor $F:\cC \rightarrow \cC$, an \newdef{algebraically free monad} on $F$ is a monad ${\Free_{\Mnd}(F)}$ together with an equivalence of categories $\AlgEndo{F} \overset{\sim}{\rightarrow}  \AlgMnd{\Free_{\Mnd}(F)}$ which preserves the respective functors to $\cC$ that forget the algebraic structure.
\end{definition}

\subsection{Kelly's Unified Transfinite Construction}\label{sec:Kellys_transfinite_construction}

The existence theorem for algebraically free monads is Kelly's unified transfinite construction \citep{Kelly_1980}.

\begin{definition}[Reflective subcategory]\label{def:reflective_subcategory}
    A full subcategory $\cD$ of a category $\cC$ is \textbf{reflective} if the inclusion functor $F : \cD \to \cC$ admits a left adjoint $G : \cC \to \cD$. A reflective subcategory of a presheaf category is called a \textbf{locally presentable category}.
\end{definition}
\begin{example}[Categories are reflective in graphs]
    The category of small categories is a reflective subcategory of the category of graphs, which is itself a presheaf category.
    \end{example}
\begin{example}[Ubiquity of local presentability]
    Nearly every category often encountered in practice is a locally presentable category. 
    The categories of monoids, groups, rings, vector spaces, and modules are locally presentable. As are the categories of topological spaces, manifolds, metric spaces, and uniform spaces. 
    While its beyond the scope of this document to expound too much upon it, local presentability is a particularly powerful notion of what it means for objects and morphisms to be of things defined by equalities of set-sized expressions.
\end{example}

%\todo[inline]{Tamara: Adjunction hasn't been defined.}

\begin{definition}[Accessible category and accessible functor]
    For an ordinal $\kappa$, a $\kappa$-\textbf{accessible category} $\cC$ is a category such that:
    \begin{itemize}
        \item $\cC$ has $\kappa$-directed colimits; and
        \item there is a set of $\kappa$-compact objects which generates $\cC$ under $\kappa$-directed colimits.
    \end{itemize}
    A $\kappa$-\textbf{accessible functor} $F:\cC \rightarrow \cD$ is an functor between $\kappa$-accessible functor which preserves $\kappa$-filtered colimits.  
\end{definition}

\begin{remark}
    The accessibility of a functor can be thought of as an upper-bound on the arity of the operations which it abstracts. For example finite sums of finite sums are again finite sums.
\end{remark}

\begin{definition}
    Let $\cC$ be a $\kappa$-accessible locally presentable category and $F:\cC \rightarrow \cC$ a pointed $\kappa$-accessible endofunctor. Let $F^\kappa$ be the $\kappa$-directed colimit of the diagram \[F^0 \rightarrow F^1 \rightarrow F^2 \rightarrow \dots \rightarrow F^\kappa\] where $F^0 = \id_\cC$ and $F^{\alpha+1} = F \circ F^\alpha$ for $\alpha < \kappa$ and $F^\alpha = \colim_{\beta < \alpha} F^\beta$ for $\alpha$ a limit ordinal.
\end{definition}

\begin{lemma}\label{lem:transfinite_monad_from_well_pointed_endo_construction}
    For $\cC$ and $F$ as above, $F^\kappa$ is a monad. The unit is the canonical inclusion of $\id_\cC$ into the colimit $F^\kappa$ and the multiplication comes from the preservation of $\kappa$-filtered colimits by $F$.
\end{lemma}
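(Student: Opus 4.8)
The plan is to equip the colimit $F^\kappa$ with a unit and a multiplication and then verify the monad coherence diagrams (\cref{def:monad_diagrams}), exploiting exactly the two hypotheses that do the work: local presentability of $\cC$ guarantees that the relevant colimits exist, while $\kappa$-accessibility of $F$ guarantees that $F$ commutes with the colimit defining $F^\kappa$. Throughout I take $\kappa$ regular, so that the ordinal chain $(F^\alpha)_{\alpha<\kappa}$ with its transition maps $t_\alpha : F^\alpha \Rightarrow F^{\alpha+1}$ (obtained at successor stages by whiskering the point $\sigma : \id_\cC \Rightarrow F$ with $F^\alpha$, and at limit stages by the colimit coprojections) is a genuine $\kappa$-filtered diagram in the endofunctor category $[\cC,\cC]$. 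Since $\cC$ is cocomplete, so is $[\cC,\cC]$ (colimits being computed pointwise), whence $F^\kappa = \colim_{\alpha<\kappa} F^\alpha$ exists together with coprojections $i_\alpha : F^\alpha \Rightarrow F^\kappa$.

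The unit requires nothing more than the first coprojection: set $\eta := i_0 : \id_\cC = F^0 \Rightarrow F^\kappa$.

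The heart of the argument is the multiplication, and this is where $\kappa$-accessibility enters. Because $F$ preserves $\kappa$-filtered colimits, we obtain the stabilisation isomorphism
\[
F F^\kappa \;=\; F\bigl(\colim_{\alpha<\kappa} F^\alpha\bigr) \;\cong\; \colim_{\alpha<\kappa} F F^\alpha \;=\; \colim_{\alpha<\kappa} F^{\alpha+1} \;\cong\; \colim_{\beta<\kappa} F^\beta \;=\; F^\kappa,
\]
the last step because the successor ordinals are cofinal in $\kappa$; call this natural isomorphism $\varphi : F F^\kappa \xrightarrow{\sim} F^\kappa$. To promote $\varphi$ to a multiplication $\mu : F^\kappa F^\kappa \Rightarrow F^\kappa$, I first observe that precomposition with the fixed functor $F^\kappa$ preserves such colimits (they are pointwise), so $F^\kappa F^\kappa \cong \colim_{\alpha<\kappa}(F^\alpha F^\kappa)$. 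It then suffices to build a compatible cocone $m_\alpha : F^\alpha F^\kappa \Rightarrow F^\kappa$ by transfinite induction: take $m_0 = \id_{F^\kappa}$; at successors set $m_{\alpha+1} := \varphi \circ (F m_\alpha)$; and at limits let $m_\alpha$ be the map induced on the colimit by the $m_\beta$ for $\beta<\alpha$. The multiplication $\mu$ is the morphism out of $F^\kappa F^\kappa$ classified by this cocone.

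Finally I would verify the three monad coherence conditions of \cref{def:monad_diagrams}. Via the universal property of the colimit $F^\kappa$, each reduces to an equality of natural transformations out of $F^\kappa$, $F^\kappa F^\kappa$, or $F^\kappa F^\kappa F^\kappa$, and each such equality is then settled by transfinite induction on the index, using the recursion $m_{\alpha+1} = \varphi \circ (F m_\alpha)$ together with the naturality of $\varphi$ and of the coprojections $i_\alpha$. The left unit law is immediate from $m_0 = \id_{F^\kappa}$, whereas the right unit law and associativity are where the stabilisation isomorphism must be used in earnest. I expect the genuine obstacle to be precisely this coherence bookkeeping---checking that the cocones $(m_\alpha)$ are well defined at limit ordinals and that $\varphi$ is compatible with every transition map $t_\alpha$---rather than the existence of the data, which the accessibility and cocompleteness hypotheses hand us almost for free.
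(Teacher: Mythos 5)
You should know at the outset that the paper contains no proof of this lemma: it is stated as a recollection of Kelly's unified transfinite construction and deferred entirely to the citation of Kelly (1980), so your argument has to be judged on its own merits. Its overall architecture---unit given by the zeroth coprojection $i_0$, multiplication induced from a stabilisation map $\varphi \colon F F^\kappa \to F^\kappa$ via a transfinitely constructed cocone $m_\alpha \colon F^\alpha F^\kappa \Rightarrow F^\kappa$, coherence verified by induction along the chain---is the right shape. The gap is in the construction of $\varphi$ itself, and it is not bookkeeping. Preservation of $\kappa$-filtered colimits gives $F F^\kappa \cong \colim_{\alpha<\kappa} F F^\alpha$ \emph{where the transition maps of the diagram are $F t_\alpha$}, i.e.\ $F$ applied to the chain maps; cofinality of the successor ordinals gives $F^\kappa \cong \colim_{\alpha<\kappa} F^{\alpha+1}$ \emph{where the transition maps are the chain maps $t_{\alpha+1} = \sigma F^{\alpha+1}$ themselves}. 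Your middle equality $\colim_{\alpha<\kappa} F F^\alpha = \colim_{\alpha<\kappa} F^{\alpha+1}$ silently identifies these two diagrams: they have the same objects but different arrows, and they agree precisely when $F t_\alpha = t_{\alpha+1}$ for every $\alpha$, i.e.\ when $F\sigma = \sigma F$. That is exactly \emph{well-pointedness}, a hypothesis you never state or use. (The paper's preceding definition does say only ``pointed'', but the subsection title, the lemma's own label, and \cref{Kelly_unified_applied_to_well_pointed_endo} make plain that the well-pointed case is intended; Kelly's treatment of merely pointed endofunctors is a different and substantially more involved construction.)

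Without well-pointedness the step genuinely fails, not merely your proof of it. Take $\cC = \Set$, $A = \{a_0, a_1\}$, and $F = A \times -$ pointed by $\sigma_X(x) = (a_0, x)$; this is accessible but not well-pointed, since $F\sigma$ inserts $a_0$ in the second coordinate while $\sigma F$ inserts it in the first. Here $F^\omega X \cong B \times X$, where $B$ is the set of words over $A$ with no leading $a_0$ and each coprojection $i_n$ strips leading $a_0$'s. The family $(i_{n+1})_n$---which is what your $\varphi$ would be classified by---is \emph{not} a cocone over the $F$-image diagram: evaluating $i_{n+2} \circ F t_n$ and $i_{n+1}$ on $(a_1, \vec a, x)$ gives $(a_1 a_0 \vec a,\, x)$ and $(a_1 \vec a,\, x)$ respectively, which differ. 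So no canonical map $F F^\omega \to F^\omega$ of the required kind exists, the phrase ``the multiplication comes from the preservation of $\kappa$-filtered colimits'' has no referent for this $F$, and correspondingly \cref{Kelly_unified_applied_to_well_pointed_endo} fails for it. The repair is to assume well-pointedness from the start: then $F t_\alpha = t_{\alpha+1}$, your $\varphi$ is well defined, and your remaining steps do go through---indeed they simplify, since the identity $\varphi \circ \sigma F^\kappa = \id_{F^\kappa}$ (which is also what makes your cocone $(m_\alpha)$ compatible, via naturality of $\sigma$) exhibits $\sigma F^\kappa$ as an isomorphism, so that each $F^\kappa X$ is an algebra and $F^\kappa$ is an \emph{idempotent} monad; this is the route Kelly actually takes, with the coherence laws falling out of idempotency rather than transfinite induction.
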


%\begin{definition}
 % Given a well-pointed endofunctor $(F,\sigma)$ let $F^\omega$ be the colimit:
 % \[ \colim \{\id \overset{\sigma}{\Rightarrow} F \overset{F\sigma}{\Rightarrow} FF \Rightarrow \cdots\} \]
%\end{definition}

\begin{theorem}
    \label{Kelly_unified_applied_to_well_pointed_endo} Assume the hypotheses of \cref{lem:transfinite_monad_from_well_pointed_endo_construction} with $\kappa$ the ordinal in those hypotheses. Then $\AlgPEndo{F}$ is equivalent to $\AlgMnd{F^{\kappa}}$ - i.e. $F^{\kappa}$ is an algebraically free monad for $F$.
\end{theorem}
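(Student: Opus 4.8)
The plan is to exhibit a concrete isomorphism of categories $\AlgPEndo{F} \cong \AlgMnd{F^\kappa}$ that commutes with the forgetful functors to $\cC$, which by \cref{def:alg_free_monad} is exactly what it means for $F^\kappa$ to be algebraically free on $F$. Throughout I would lean on well-pointedness of $(F,\sigma)$, i.e. $\sigma F = F\sigma$, together with the data packaged in \cref{lem:transfinite_monad_from_well_pointed_endo_construction}: the colimit has stabilised, the unit $\eta$ of $F^\kappa$ is the inclusion $\iota^0 : F^0 = \id_\cC \Rightarrow F^\kappa$, the successor transition $F^\beta \Rightarrow F^{\beta+1}$ of the defining diagram is the whiskered point $\sigma F^\beta$, and $\mu$ is the map induced by $F$ preserving the $\kappa$-filtered colimit $F^\kappa$.

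First I would build the easy direction $\Phi : \AlgMnd{F^\kappa} \to \AlgPEndo{F}$. Given a monad algebra $(A,\alpha)$, restrict the structure map along the colimit inclusion $\iota^1 : F = F^1 \Rightarrow F^\kappa$, setting $a := \alpha \circ \iota^1_A : FA \to A$. Since the transition $\id_\cC \Rightarrow F$ in the diagram is $\sigma$, we have $\iota^1 \circ \sigma = \iota^0 = \eta$, whence $a \circ \sigma_A = \alpha \circ \eta_A = \id_A$ by the unit law; so $(A,a)$ is an algebra for the pointed endofunctor. This is manifestly functorial and acts as the identity on carriers.

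The substantive direction is $\Psi : \AlgPEndo{F} \to \AlgMnd{F^\kappa}$. Given $(A,a)$, define maps $a^{(\beta)} : F^\beta A \to A$ by transfinite recursion: $a^{(0)} = \id_A$, $a^{(\beta+1)} = a \circ F(a^{(\beta)})$, and colimits at limit stages. The crucial point is that these form a cocone under $(F^\beta A)_\beta$: against the successor transition one computes $a^{(\beta+1)} \circ \sigma_{F^\beta A} = a \circ F(a^{(\beta)}) \circ \sigma_{F^\beta A} = a \circ \sigma_A \circ a^{(\beta)} = a^{(\beta)}$, the middle equality being naturality of $\sigma$ at $a^{(\beta)}$ and the last the algebra law $a \circ \sigma_A = \id_A$, with well-pointedness ensuring that the left and right whiskerings of $\sigma$ used here coincide. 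The universal property then yields a unique $\alpha : F^\kappa A \to A$ with $\alpha \circ \iota^\beta_A = a^{(\beta)}$. I must then check $(A,\alpha)$ is a genuine monad algebra: the unit law is immediate since $\alpha \circ \eta_A = a^{(0)} = \id_A$, while the multiplication law $\alpha \circ \mu_A = \alpha \circ F^\kappa(\alpha)$ I expect to prove by transfinite induction, testing both composites against the cocone $\iota^\beta_A$ and unfolding the explicit description of $\mu$ as induced by $F$-preservation of the colimit, reducing at each stage to the recursion $a^{(\beta+1)} = a \circ F(a^{(\beta)})$. Propagating this identity through successor and limit ordinals against the somewhat implicit $\mu$ is where I anticipate the real work, and the main obstacle.

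Finally I would verify that $\Phi$ and $\Psi$ are mutually inverse over $\cC$. One composite is trivial, since $\Phi\Psi(A,a)$ recovers $a^{(1)} = a \circ F(a^{(0)}) = a$; for the other, starting from $(A,\alpha)$ I would show by transfinite induction that $\alpha \circ \iota^\beta_A$ obeys the very recursion defining $a^{(\beta)}$ — using the monad axioms at successors and the colimit form of $\mu$ at limits — so that rebuilding from $a = \alpha \circ \iota^1_A$ returns the original $\alpha$ by colimit uniqueness. As both functors fix carriers and carry homomorphisms to homomorphisms, they form an isomorphism of categories over $\cC$, identifying $F^\kappa$ as the algebraically free monad on $F$. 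As a cleaner alternative, at the cost of a separate lemma, I note that well-pointedness forces $F^\kappa$ to be idempotent, and both $\AlgPEndo{F}$ and $\AlgMnd{F^\kappa}$ then identify with the reflective full subcategory of objects $A$ on which $\sigma_A$ (equivalently $\eta_A$) is invertible.
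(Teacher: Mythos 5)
First, a point of calibration: the paper itself contains no proof of this theorem---it is stated as an instance of Kelly's unified transfinite construction and deferred wholesale to \cite{Kelly_1980}---so your attempt can only be judged against that standard argument. Your architecture is the right one, and the parts you actually carry out are correct: the restriction functor $\Phi(A,\alpha)=(A,\alpha\circ\iota^1_A)$, the transfinite recursion $a^{(0)}=\id_A$, $a^{(\beta+1)}=a\circ F(a^{(\beta)})$, the cocone verification at successors, the unit law, and $\Phi\Psi=\id$. But the step you defer as ``the real work'' is a genuine gap rather than a routine verification, and it is essentially the entire content of the theorem. To prove $\alpha\circ\mu_A=\alpha\circ F^\kappa(\alpha)$ (and the other half of $\Psi\Phi=\id$) you must first have $\mu$ in hand, and \cref{lem:transfinite_monad_from_well_pointed_endo_construction} only gestures at it. Constructing $\mu$ requires the coherence fact that well-pointedness propagates up the chain: $\sigma F^\beta=F^\beta\sigma$ for every $\beta$ (induction from $\sigma F=F\sigma$, continuity at limits), so that applying $F^\beta$ to the defining chain reproduces the chain shifted by $\beta$, giving identifications $F^\beta\circ F^\gamma\cong F^{\gamma+\beta}$ compatible with \emph{all} connecting maps. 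Only with that in place does testing both sides against the colimit cocone reduce the monad-algebra law to the identity $a^{(\gamma+\beta)}=a^{(\beta)}\circ F^\beta(a^{(\gamma)})$, which then does follow by induction on $\beta$ (the successor step being exactly your recursion). Without this coherence analysis, the transfinite induction you propose has nothing concrete to recurse against.

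Relatedly, you misplace where well-pointedness enters. Your displayed cocone computation uses only naturality of $\sigma$ and the unit law $a\circ\sigma_A=\id_A$; it would go through verbatim for any pointed endofunctor, yet for merely pointed $F$ the naive chain does not compute free algebras---this is precisely why Kelly's treatment of the general pointed case is so much more involved. Well-pointedness is needed exactly in the step you left open: it is what makes the chain stable under application of $F$, hence what makes $\mu$ exist and the reduction above valid. Your closing remark is the right instinct and is, in fact, Kelly's actual proof rather than a ``cleaner alternative'': for well-pointed $F$, an algebra structure forces $\sigma_A$ to be invertible (naturality of $\sigma$ at $a$ plus $\sigma_{FA}=F\sigma_A$ give $\sigma_A\circ a=\id_{FA}$, so $a=\sigma_A^{-1}$), whence $\AlgPEndo{F}$ is the full subcategory of $\cC$ on objects with $\sigma_A$ invertible; $F^\kappa$ is an idempotent monad, so $\AlgMnd{F^\kappa}$ is the full subcategory on objects with $\eta_A$ invertible; and the two subcategories coincide by the naturality square of $\sigma$ at $\eta_A$ together with stabilisation $\sigma F^\kappa\colon F^\kappa\cong F^{\kappa+1}$. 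Completing that lemma would be the shortest honest route to the theorem, and I would recommend restructuring your proof around it rather than around the explicit construction of $\mu$.
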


\begin{remark}
  For the endofunctors $F$ we study here, $\Free(F) \overset{\sim}{\rightarrow} (F+\id)^\omega$ is the underlying endofunctor of an algebraically free monad for $F$, $\Free(F)$.
\end{remark}

The above formula can be related to the explicit formula for computing free monads, which has a dual formula in the case of cofree comonads \citep{ghani_algebras_2001}.

\begin{proposition}[(Co)free (co)monads, explicitly]
  \label{prop:free_monads_explicitly}
  Let $F : \cC \to \cC$ be an endofunctor.
  Then $\Free_{\Mnd}(F)$, the free monad on $F$ is given by $\Free_{\Mnd}(F)(Z) = \Fix(X \mapsto F(X) + Z)$.
  Dually, we compute $\Cofree_{\Cmnd}(F)$, the cofree comonad on $F$ as $\Cofree_{\Cmnd}(F)(Z) = \Fix(X \mapsto F(X) \times Z)$.
\end{proposition}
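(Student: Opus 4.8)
The plan is to recognise $\Fix(X \mapsto F(X) + Z)$ as the carrier of the \emph{free $F$-algebra on $Z$}, and then to assemble these carriers, as $Z$ varies, into the algebraically free monad of \cref{def:alg_free_monad}; the comonadic statement will follow by reversing all arrows. First I would fix an object $Z \in \cC$ and set $G_Z := F(-) + Z : \cC \to \cC$. Since $\cC$ has coproducts, giving a $G_Z$-algebra $G_Z(X) \to X$ is, by the universal property of the coproduct, exactly the same as giving a pair consisting of an $F$-algebra structure $F(X) \to X$ together with a morphism $Z \to X$; a $G_Z$-algebra homomorphism is then precisely an $F$-algebra homomorphism commuting with these chosen maps out of $Z$. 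By definition $\Fix(G_Z)$ is the initial $G_Z$-algebra, and by Lambek's lemma its structure map is an isomorphism $F(\Fix(G_Z)) + Z \cong \Fix(G_Z)$, whose two coproduct legs equip $\Fix(G_Z)$ with an $F$-algebra structure and a canonical insertion $\eta_Z : Z \to \Fix(G_Z)$.

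Next I would verify the expected universal property. Given any $F$-algebra $(A, \alpha)$ and any $h : Z \to A$, the copairing $\coproduct{\alpha}{h} : F(A) + Z \to A$ makes $A$ a $G_Z$-algebra, so initiality of $\Fix(G_Z)$ yields a unique $G_Z$-algebra homomorphism $\Fix(G_Z) \to A$, equivalently a unique $F$-algebra homomorphism extending $h$ along $\eta_Z$. This exhibits $Z \mapsto \Fix(G_Z)$ as the object-assignment of a left adjoint $L$ to the forgetful functor $U : \AlgEndo{F} \to \cC$, and the monad $U L$ induced by this adjunction has underlying endofunctor $Z \mapsto \Fix(X \mapsto F(X) + Z)$, with unit $\eta$ and with multiplication obtained by flattening nested free algebras.

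To conclude that this monad is precisely $\Free_{\Mnd}(F)$, I would check that $UL$ is \emph{algebraically free} in the sense of \cref{def:alg_free_monad}: because $U$ is monadic over $\cC$, its Eilenberg--Moore category $\AlgMnd{UL}$ is equivalent to $\AlgEndo{F}$ compatibly with the forgetful functors to $\cC$, which is exactly the required property; alternatively one identifies $UL$ with the colimit $F^\kappa$ furnished by \cref{Kelly_unified_applied_to_well_pointed_endo}. The comonadic statement is then the formal dual, run in $\cC^{op}$: a coalgebra for $H_Z := F(-) \times Z$ is an $F$-coalgebra $X \to F(X)$ together with a map $X \to Z$, the terminal such coalgebra $\Fix(H_Z)$ is the cofree $F$-coalgebra on $Z$, and the resulting right adjoint to the forgetful functor on $F$-coalgebras induces the cofree comonad $\Cofree_{\Cmnd}(F)(Z) = \Fix(X \mapsto F(X) \times Z)$.

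The main obstacle is not any of these formal identifications, which are a routine unpacking of universal properties, but the \emph{existence} of the fixed points themselves: $\Fix(G_Z)$ exists only when the initial-algebra chain for $G_Z$ converges, and this is where the accessibility and local-presentability hypotheses underlying Kelly's transfinite construction do the genuine work. I would therefore either assume $\cC$ and $F$ satisfy those hypotheses (so that \cref{Kelly_unified_applied_to_well_pointed_endo} applies and guarantees convergence), or restrict to the concrete endofunctors of interest, where the fixed point is computed explicitly as the colimit of the initial chain; once existence is granted, the adjunction argument above proceeds unchanged.
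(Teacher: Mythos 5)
Your proof is correct, but it cannot be matched line-by-line against the paper, because the paper never proves this proposition: it is stated as an imported result, attributed to \citep{ghani_algebras_2001}, and connected to the surrounding material only by the remark that $\Free(F)\overset{\sim}{\rightarrow}(F+\id)^{\omega}$ arises from Kelly's transfinite construction. What you give is the standard direct argument, and it is a genuinely more self-contained route: you read $\Fix(X\mapsto F(X)+Z)$ as the initial algebra of $G_{Z}=F(-)+Z$, observe via the coproduct property that a $G_{Z}$-algebra is an $F$-algebra equipped with a point $Z\to X$, conclude that the initial one is the free $F$-algebra on $Z$, assemble these into a left adjoint to $U\colon\AlgEndo{F}\to\cC$, and identify the induced monad as algebraically free in the sense of \cref{def:alg_free_monad}. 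Two remarks. First, the one step you assert rather than establish---that $U$ is monadic as soon as it has a left adjoint---is true but is precisely the content of Barr's theorem on free triples (equivalently: $U$ creates coequalizers of $U$-split pairs, so Beck's monadicity theorem applies); written as ``because $U$ is monadic'' it reads as assuming what needs justification, so either cite it or include the short creation-of-split-coequalizers check. Second, your final paragraph correctly isolates what the paper's statement glosses over: for an arbitrary endofunctor the fixed points need not exist, and existence is exactly what the accessibility and local-presentability hypotheses of \cref{Kelly_unified_applied_to_well_pointed_endo} secure. The trade-off between the two routes is then clear: the paper's transfinite construction buys existence and the colimit formula $(F+\id)^{\kappa}$ in one stroke but delegates the equivalence of algebra categories to the literature, whereas your universal-property argument explains \emph{why} the fixed-point formula computes the free monad and dualises immediately to the cofree-comonad half, at the price of having to invoke existence separately.
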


\begin{example}[Free monad on $1 + A \times -$]
  \label{ex:free_monad_on_lists}
  Free monad on the endofunctor $1 + A \times -$ is $\ListM{-}{A} : \Set \to \Set$, the endofunctor mapping an object $Z$ to the set $\ListM{Z}{A}$ of lists of elements of type $A$ whose last element is not $[]$, i.e.\ an element of type $1$, but instead an element of type $Z + 1$.
  That is, these are lists which end potentially with an element of $Z$.
  % Another way to understand them is a degenerate right-skewed tree with $A$-labelled nodes, and a $Z + 1$-labelled unique leaf.
\end{example}

\begin{example}[Free monad on $A + (-)^2$]
  \label{ex:free_monad_on_trees}
  The free monad on the endofunctor $A + (-)^2$ is given by $\Tree{A + -} : \Set \to \Set$, mapping a set $Z$ to the set of trees with $A + Z$ labelled leaves.
\end{example}

\begin{example}[Cofree comonad on $O \times -$]
  \label{ex:cofree_comonad_stream}
  The cofree comonad on the endofunctor $O \times -$ is $\Stream{O \times -}$, mapping an object $Z$ to the set of streams whose outputs are of type $O \times Z$.
\end{example}

\begin{example}[Cofree comonad on $(I \to O \times -)$]
  \label{ex:cofree_comonad_on_mealy}
  The cofree comonad of the endofunctor $(I \to O \times -)$ is $\Fix(X \mapsto (I \to O \times X) \times -)$, mapping a set $Z$ to a set of hybrids of Moore and Mealy machines, outputting an additional element of $Z$ at each step which does not depend on $I$.
\end{example}

\section{Additional Geometric Deep Learning Examples}

\label{app:gdl_further}

To further illustrate the power of categorical deep learning as a framework that subsumes geometric deep learning \citep{bronstein2021geometric}, as well as make the reader more comfortable in manipulating monad algebras and their homomorphisms, we provide three additional examples deriving equivariance constraints of established geometric deep learning architectures, leveraging the framework of CDL.

All of these examples should be familiar to Geometric DL practitioners, and are covered in detail by prior papers \citep{maron2018invariant,cohen2018spherical,thomas2018tensor,cohen2016group}, hence we believe that relegating their exact derivations to appendices is appropriate in our work.

Before we begin, we recall the core template of our work: that we represent neural networks $f : A\rightarrow B$ as monad algebra homomorphisms between two algebras $(A,a)$ and $(B,b)$, for a monad $(M, \eta, \mu)$:
\[\begin{tikzcd}[ampersand replacement=\&]
  {M(A)} \& {M(B)} \\
  A \& B
  \arrow["a"', from=1-1, to=2-1]
  \arrow["b", from=1-2, to=2-2]
  \arrow["f"', from=2-1, to=2-2]
  \arrow["{M(f)}", from=1-1, to=1-2]
\end{tikzcd}\]
and that geometric deep learning can be recovered by making our monad be the \emph{group action monad}; $M(X) =G\times X$.

\subsection{\emph{Permutation}-equivariant Learning on \emph{Graphs}}
leading to graph neural networks \citep{velivckovic2023everything}.
\[\begin{tikzcd}
	{\Sigma_n\times\mathbb{R}^n\times\mathbb{R}^{n\times n}} && {\Sigma_n\times\mathbb{R}^n} \\
	\\
	{\mathbb{R}^n\times\mathbb{R}^{n\times n}} && {\mathbb{R}^n}
	\arrow["{\Sigma_n\times f}", from=1-1, to=1-3]
	\arrow["{P_{X,A}}"', from=1-1, to=3-1]
	\arrow["{P_X}", from=1-3, to=3-3]
	\arrow["f"', from=3-1, to=3-3]
\end{tikzcd}\]
In this case:
\begin{itemize}
    \item The group $G=\Sigma_n$ is the permutation group of $n$ elements,
    \item The carrier object for the algebras includes (scalar) node features $\mathbb{R}^n$ and, potentially, adjacency matrices $\mathbb{R}^{n\times n}$,
    \item The structure map for the first algebra, $P_{X,A} : \Sigma_n\times\mathbb{R}^n\times\mathbb{R}^{n\times n}\rightarrow \mathbb{R}^n\times\mathbb{R}^{n\times n}$, executes the permutation: $P_{X,A}(\sigma, {\bf X}, {\bf A}) = ({\bf P}(\sigma){\bf X}, {\bf P}(\sigma){\bf A}{\bf P}(\sigma)^\top)$, where ${\bf P}(\sigma)$ is the permutation matrix specified by $\sigma$.
    \item The structure map for the second algebra, $P_X$, still executes the permutation, but only over the node features. This reduction is not strictly necessary, but is often the standard when designing graph neural networks—as they are often assumed to not modify their underlying computational graph. We deliberately assume this reduction here to illustrate how our framework can handle neural networks transitioning across different algebras.
\end{itemize}

\subsection{\emph{Rotation}-equivariant Learning on \emph{Spheres}}
leading to the first layer of spherical CNNs \citep{cohen2018spherical}.
\[\begin{tikzcd}
	{\mathrm{SO}(3)\times (S^2\rightarrow\mathbb{R})} && {\mathrm{SO}(3)\times (\mathrm{SO(3)}\rightarrow\mathbb{R})} \\
	\\
	{S^2\rightarrow\mathbb{R}} && {\mathrm{SO}(3)\rightarrow\mathbb{R}}
	\arrow["{\mathrm{SO}(3)\times f}", from=1-1, to=1-3]
	\arrow["{\rho_{S^2}}"', from=1-1, to=3-1]
	\arrow["{\rho_{\mathrm{SO}(3)}}", from=1-3, to=3-3]
	\arrow["f"', from=3-1, to=3-3]
\end{tikzcd}\]
In this case:
\begin{itemize}
    \item The group $G = \mathrm{SO}(3)$ is the special orthogonal group of 3D rotations.
    \item The carrier object for the first algebra is $S^2\rightarrow\mathbb{R}$; (scalar) data defined over the sphere. Note that in practice, this will usually be discretised, so we will be able to represent it using matrices.
    \item The structure map of the first algebra, $\rho_{S^2} : \mathrm{SO}(3)\times (S^2\rightarrow\mathbb{R})\rightarrow (S^2\rightarrow\mathbb{R})$, executes a 3D rotation on the spherical data, as follows: $\rho_{S^2}((\alpha, \beta, \gamma), \psi) = \phi$, such that $\phi({\bf x}) = \psi({\bf R}({\alpha, \beta, \gamma})^{-1}{\bf x})$, where ${\bf R}({\alpha, \beta, \gamma})$ is the rotation matrix specified by the ZYZ-Euler angles $(\alpha, \beta, \gamma)\in\mathrm{SO}(3)$. This (inverse) rotation is applied to points ${\bf x}\in S^2$ on the sphere.
    \item The carrier object for the second algebra is $\mathrm{SO}(3)\rightarrow\mathbb{R}$; (scalar) data defined over rotation matrices. Once again, this will usually be discretised in practice.
    \item The structure map of the second algebra, $\rho_{\mathrm{SO}(3)} : \mathrm{SO}(3)\times (\mathrm{SO}(3)\rightarrow\mathbb{R})\rightarrow (\mathrm{SO}(3)\rightarrow\mathbb{R})$ now executes a 3D rotation over the rotation-matrix data, as follows: $\rho_{\mathrm{SO}(3)}((\alpha, \beta, \gamma), \psi) = \phi$ such that $\phi({\bf R}({\alpha', \beta', \gamma'})) = \psi({\bf R}({\alpha, \beta, \gamma})^{-1}{\bf R}({\alpha', \beta', \gamma'}))$.
\end{itemize}

\subsection{$G$-equivariant Learning on $G$}
leading to G-CNNs \citep{cohen2016group}, as well as the subsequent layers of spherical CNNs \citep{cohen2018spherical}.
\[\begin{tikzcd}
	{G\times (G\rightarrow\mathbb{R})} && {G\times (G\rightarrow\mathbb{R})} \\
	\\
	{G\rightarrow\mathbb{R}} && {G\rightarrow\mathbb{R}}
	\arrow["{G\times f}", from=1-1, to=1-3]
	\arrow["{A_G}"', from=1-1, to=3-1]
	\arrow["{A_G}", from=1-3, to=3-3]
	\arrow["f"', from=3-1, to=3-3]
\end{tikzcd}\]
In this case:
\begin{itemize}
    \item The group $G$ is also the domain of the carrier objects ($G\rightarrow\mathbb{R}$).
    \item Both algebras' structure map follows the execution of the regular representation of $G$, $A_G : G\times (G\rightarrow\mathbb{R})\rightarrow(G\rightarrow\mathbb{R})$, by composition, as follows: $A_G(g, \psi)(h) = \psi(g^{-1}h)$.
\end{itemize}
\section{Lawvere Theories and Syntax}

\label{app:lawvere_theories}

\citet{de2020natural} expanded the theory of equivariant layers in neural networks using the abstraction of natural transformations of functors. In this section, we will explain how to understand morphisms of monad algebras in the same terms.

Indeed, this comparison is crucial to understanding the \emph{syntax} of monads, in addition to the \emph{semantics} given by their category of algebras.

\begin{definition}
If $\mathcal{C}$ is a category, a \newdef{presheaf on $\mathcal{C}$} is a functor $\mathcal{C}^{op} \to \Set$. The category whose objects are presheaves and morphisms are natural transformations is denoted $\Psh(\mathcal{C})$.
\end{definition}

Fix a monad $T$ on $\Set$ and let $\mathcal{C}_T$ denote its category of algebras and algebra homomorphisms.

Given an algebra $A\in\mathcal{C}_T$, the easiest way to interpret $A$ as a functor is via the \emph{Yoneda embedding} $\mathcal{C}_T\to \Psh(\mathcal{C}_T)$, which identifies $A$ with the presheaf $[-,A]$. It is a standard result that the Yoneda functor is fully faithful, which means that we can identify morphisms as algebras with morphisms as presheaves.

Furthermore, these presheaves have a special property. If $\varinjlim j$ is a small colimit in $\mathcal{C}_T$, then $[\varinjlim j, A]=\varprojlim [j, A]$, essentially by the definition of limits and colimits.

\begin{definition}
    If $\mathcal{C}$ is a category and $J$ is a class of small colimits in $\mathcal{C}$, then a \newdef{$J$-continuous presheaf on $\mathcal{C}$} is a presheaf $F\in\Psh(\mathcal{C})$ satisfying $F(\varinjlim j)=\varprojlim F(j)$ for all $j\in J$. The corresponding full subcategory of $\Psh(\mathcal{C})$ is denoted $\Sh_J(\mathcal{C})$, or simply $\Sh(\mathcal{C})$ in the case that $J$ is the class of all small colimits. We will also use $\coprod$ to refer to the class of all coproducts, $+$ to refer to the class of binary coproducts, and $\emptyset$ to refer to the empty class.
\end{definition}

It turns out that $\mathcal{C}_T$ has a nice property: it is a \emph{strongly compact} category, meaning that every continuous presheaf is representable as above by an object of $\mathcal{C}_T$. In other words, we have the identification $\mathcal{C}_T = \Sh(\mathcal{C}_T)$.

However, it is usually impractical to work with the entire category $\mathcal{C}_T$. When possible, we want to reason in terms of a more tractable subcategory. These will provide us with workable \emph{syntax} for our monad. Following \cite{brandenburg2021}, we make use of Ehresmann's concept of a ``colimit sketch'': a category equipped with a restricted class of colimits. Rather than giving the general definition, we will describe a few special cases of prime interest.

\begin{definition}
    The \newdef{Kleisli category $\mathcal{K}_T$} is the full subcategory of $\mathcal{C}_T$ on the free algebras. The \newdef{finitary Kleisli category $\mathcal{K}^\mathbb{N}_T$} is the full subcategory on the free algebras of the form $TS$ for $S$ a finite set. And the \newdef{unary Kleisli category $\mathcal{K}^1_T$} is the full one-object subcategory on the free algebra $T1$.
\end{definition}

We have the following sequence of nested full subcategories:

\[\mathcal{C}_T \supset \mathcal{K}_T \supset \mathcal{K}^\mathbb{N}_T \supset \mathcal{K}^1_T\]

By composing with these inclusions, we get a sequence of functors between presheaf categories:

\[\Psh(\mathcal{C}_T) \to \Psh(\mathcal{K}_T) \to \Psh(\mathcal{K}^\mathbb{N}_T) \to \Psh(\mathcal{K}^1_T)\]

By restricting our class of colimits, we can restrict this to the corresponding continuous presheaf categories:

\[\mathcal{C}_T \cong \Sh(\mathcal{C}_T) \to \Sh_{\amalg}(\mathcal{K}_T) \to \Sh_{+}(\mathcal{K}^\mathbb{N}_T) \to \Sh_{\emptyset}(\mathcal{K}^1_T)\]

It turns out that in many cases, these arrows are equivalences of categories.

\begin{definition}
    In the case that $\mathcal{C}_T \to \Sh_{\amalg}(\mathcal{K}_T)$ is an equivalence, we say that $\mathcal{K}_T^{op}$ is the \newdef{infinitary Lawvere theory} for $T$.
\end{definition}

\begin{remark}
In fact, all monads have an infinitary Lawvere theory. For a proof, see \cite{brandenburg2021}.
\end{remark}

\begin{definition}
    In the case that $\mathcal{C}_T \to \Sh_{+}(\mathcal{K}_T^{\mathbb{N}})$ is an equivalence, we say that $(\mathcal{K}_T^{\mathbb{N}})^{op}$ is the \newdef{Lawvere theory} for $T$, and $T$ is a \newdef{finitary monad}.
\end{definition}

\begin{example}
If $T$ is the monad sending a set $S$ to the free commutative semiring on $S$, then $\mathcal{C}_T$ is the category of commutative semirings. Since the axioms for commutative semirings consist of equations with a finite number of variables, $T$ has a Lawvere theory $(\mathcal{K}_T^{\mathbb{N}})^{op}$.

It is well known that this is the category of ``finite polynomials'', see e.g. \cite{gambino2013polynomial}. This connection was observed in \cite{dudzik2022graph} to relate message passing in Graph Neural Networks to polynomial functors.

In fact, we could further restrict this theory to just the four objects $\{T0, T1, T2, T3\}$, because we can fully axiomatise commutative semirings with only three variables, e.g. $a(b+c)=ab+ac$.
\end{example}

\begin{example}[Monoids]
    If $\mathcal{C}_T \to \Sh_{\emptyset}(\mathcal{K}_T^1)$ is an equivalence, then in fact $\mathcal{C}_T \cong \Psh(\mathcal{K}_T^1)$, which is the category of $M$-sets for the monoid $M=(\mathcal{K}_T^1)^{op}$. So we can see that ``unary Lawvere theories'' exactly correspond to monads of the form $M\times -$, where $M$ is a monoid.
\end{example}

\begin{example}[Suplattices]
    Not all monads are finitary; that is, not all monads have an associated Lawvere theory.

    For an example of a non-finitary monad, let $\mathcal{P}:\Set\to\Set$ be the covariant powerset functor. That is, $\mathcal{P}(S)$ is the set of all subsets of $S$, and if $f:S\to T$ is a function, then $\mathcal{P}(f):\mathcal{P}(S)\to\mathcal{P}(T)$ is defined by $\mathcal{P}(f)(A) := \{f(a) \mid a\in A\}$.

    We equip $\mathcal{P}$ with the monad structure given by the unit $1\to\mathcal{P}$ sending $s\in S$ to $\{s\}\in\mathcal{P}(S)$ and the composition $\mathcal{P}^2\to\mathcal{P}$ sending $\mathcal{A}\subset \mathcal{P}(S)$ to $\bigcup_{A\in \mathcal{A}} A$.

    The category of $\mathcal{P}$-algebras $\mathcal{C}_\mathcal{P}$ is the category of \newdef{suplattices}, which is the category of posets with all least upper bounds, and morphisms preserving them. Equivalently, a suplattice is given by a set $L$ together with a join map $\bigvee \mathcal{P}(L)\to L$ satisfying unit and composition axioms.

    Implementing $\bigvee$ requires implementing operations $L^\kappa \to L$ for arbitrarily large cardinal numbers $\kappa$, so we can see intuitively that suplattices are not described by a finitary theory.

    Note that the Kleisli category $\mathcal{K}_\mathcal{P}$ is equivalent to the category of sets and relations.
\end{example}

\begin{example}[Semilattices]
    While $\mathcal{P}$ above isn't finitary, it has a finitary counterpart $\mathcal{P}_{fin}$, the functor that takes each set to its \emph{finite} subsets.
    
    Algebras for $\mathcal{P}_{fin}$ are sometimes called \newdef{join-semilattices}, partial orders where every finite subset has a least upper bound. It is a nice exercise to show that a join-semilattice is equivalently a commutative idempotent monoid.
\end{example}

\section{Monoidal Categories and Actegories}

Monoidal categories and actegories are the `categorified` version of monoids and actions.

\begin{definition}[{Strict monoidal category, \citep[Def. 1.2.1]{johnson_2-dimensional_2021}}]
    \label{def:strict_monoidal_category}
    Let $\cM$ be a category.
    We call $\cM$ a \newdef{strict monoidal category} if it is equipped with the following data a functor $\otimes : \cM \times \cM \to \cM$ called \emph{the monoidal product}; an object $I \in \cM$ called \newdef{the monoidal unit},
    such that
    \begin{itemize}
      \item $A \otimes (B \otimes C) = (A \otimes B) \otimes C$ for all $A, B, C \in \cM$;
      \item $A \otimes I = A = I \otimes A$ for all $A \in \cM$;
      \item $f \otimes (g \otimes h) = (f \otimes g) \otimes h$ for all $f, g, h \in \cM$;
      \item $\id \otimes f = f = f \otimes \id$ for all $f \in \cM$.
    \end{itemize}
\end{definition}

\begin{definition}[{Actegories, see \cite{capucci_actegories_2023}}]\label{def:actegory}
  Let $(\cM, \otimes, I, \alpha , \lambda, \rho)$ be a monoidal category.
  An $\cM$-\newdef{actegory} $\cC$ is a category $\cC$ together with a functor $\act : \cM \times \cC \rightarrow \cC$ together with natural isomorphisms $\eta_X : I \act X \cong X $ and $\mu_{M, N} : (M \otimes N) \act X \cong M \act (N \act X)$, such that:
    \begin{itemize}
      \item \textbf{Pentagonator.} For all $M, N, P\in\cM$ and $C\in\cC$ the following diagram commutes.

  % https://q.uiver.app/#q=WzAsOCxbMCwwLCIgKE0gXFxvdGltZXMgTiApIFxcb3RpbWVzIFAgXFxhY3QgQyAiXSxbMiwwLCJNIFxcb3RpbWVzIE4gXFxhY3QgUCBcXGFjdCBDIl0sWzQsMSwiTSBcXGFjdCBOIFxcYWN0IFAgXFxhY3QgQyJdLFswLDIsIk0gXFxvdGltZXMgKE4gXFxvdGltZXMgUCkgXFxhY3QgQyAgIl0sWzAsMV0sWzIsMiwiTSBcXGFjdCBOIFxcb3RpbWVzIFAgXFxhY3QgQyJdLFszLDBdLFszLDJdLFswLDEsIlxcbXVfe00gXFxvdGltZXMgTiwgUCwgQ30iXSxbMSwyLCJcXG11X3tNLE4sUCBcXGFjdCBDfSJdLFswLDMsIlxcYWxwaGFfe00sIE4sIFB9IFxcYWN0IEMiLDJdLFszLDUsIlxcbXVfe0MsIE0gXFxvdGltZXMgTiwgUH0iLDJdLFs1LDIsIk0gXFxhY3QgXFxtdV97TixQLEN9ICIsMl1d      
  \begin{equation}
    \label{eq:act_pentagonator}
    \begin{tikzcd}[ampersand replacement=\&]
	{ (M \otimes N ) \otimes P \act C } \&\& {M \otimes N \act P \act C} \& {} \\
	{} \&\&\&\& {M \act N \act P \act C} \\
	{M \otimes (N \otimes P) \act C  } \&\& {M \act N \otimes P \act C} \& {}
	\arrow["{\mu_{M \otimes N, P, C}}", from=1-1, to=1-3]
	\arrow["{\alpha_{M, N, P} \act C}"', from=1-1, to=3-1]
	\arrow["{\mu_{M,N,P \act C}}", from=1-3, to=2-5]
	\arrow["{\mu_{C, M \otimes N, P}}"', from=3-1, to=3-3]
	\arrow["{M \act \mu_{N,P,C} }"', from=3-3, to=2-5]
\end{tikzcd}
  \end{equation}
    \item \textbf{Left and right unitors.} For all $C\in\cC$ and $M\in\cM$
      The diagrams below commute.
      % https://q.uiver.app/#q=WzAsNixbMCwwLCJJIFxcb3RpbWVzIE0gXFxhY3QgQyJdLFswLDIsIklcXGFjdCBNIFxcYWN0IEMiXSxbMiwyLCJNIFxcYWN0IEMiXSxbMywyLCJNIFxcYWN0IEMiXSxbNSwwLCIoTSBcXG90aW1lcyBJKSBcXGFjdCBDIl0sWzUsMiwiTSBcXGFjdCBJIFxcYWN0IEMiXSxbMCwxLCJcXG11X3tJLE0sQ30iLDJdLFswLDIsIlxcbGFtYmRhX00gXFxhY3QgQyJdLFsxLDIsIlxcZXRhX3sgTSBcXGFjdCBDfSIsMl0sWzQsMywiXFxyaG9fTSBcXGFjdCBDIiwyXSxbNCw1LCJcXG11X3tNLEksQ30iXSxbNSwzLCJNIFxcYWN0IFxcZXRhX0MiXV0=
  \begin{equation}
    \begin{tikzcd}[ampersand replacement=\&]
	{I \otimes M \act C} \&\&\&\&\& {(M \otimes I) \act C} \\
	\\
	{I\act M \act C} \&\& {M \act C} \& {M \act C} \&\& {M \act I \act C}
	\arrow["{\mu_{I,M,C}}"', from=1-1, to=3-1]
	\arrow["{\lambda_M \act C}", from=1-1, to=3-3]
	\arrow["{\rho_M \act C}"', from=1-6, to=3-4]
	\arrow["{\mu_{M,I,C}}", from=1-6, to=3-6]
	\arrow["{\eta_{ M \act C}}"', from=3-1, to=3-3]
	\arrow["{M \act \eta_C}", from=3-6, to=3-4]
\end{tikzcd}
  \end{equation}
      
  \end{itemize}
    
  \end{definition}

\begin{remark}
   Just as one may assume a monoidal category to be strict, via MacLane's coherence theorem, we may assume actegories are strict as well \citep[Remark 3.4]{capucci_actegories_2023}. That is to say, we may assume for an $\cM$ actegory $(\cC,\act)$ that:
  \begin{itemize}
    \item The unitor $\eta_X$ is an equality, i.e.\ that $I\act (\_) = \id_{\cC}$ are equal as functors of type $\cC \rightarrow \cC$; and
    \item The multiplicator $\mu_{M, N}$ is an equality, i.e.\ that $((\_) \otimes (\_) \act (\_) = (\_) \act (\_) \act (\_)$ are equal as functors of type $ \cM \times \cM \times \cC \rightarrow \cC$
  \end{itemize}
  We will call such a structure a \newdef{strict actegory}. We will use these later on to simplify exposition and some theorems.
\end{remark}

\begin{example}[Monoidal action]
Any monoidal category gives rise to a self-action.
\end{example}

\begin{example}[Families actegories]
  Any category $\cC$ with coproducts has an action $\act : \Set \times \cC \to \cC$ which maps $(X, A)$ to the coproduct of $\left| X \right|$ copies of $A$.
\end{example}

\subsection{Morphisms of Actegories}

\iffalse
So far we the main characters in this paper have been monads --- allowing us to talk about generalised notions of equivariance --- and actegories --- allowing us to talk about parameterisation.

We will now be interested in studying both of them at the same time.
That is, we will have a $\cM$-actegory $(\cC, \act)$ and also a monad $(T, \eta, \mu)$ on $\cC$.
This will require a level of compatibility between them, one we can obtain with the notion of a morphism of actegories.
\fi

\begin{definition}[Actegorical strong monad]
  Let $(\cC, \act)$ be a $\cM$-actegory.
  A monad $(T,\mu,\eta)$ on  $\cC$ is called \newdef{strong}\footnote{Another name for this is a $\cM$-linear morphism, used in \cite{capucci_actegories_2023}} if it is equipped with a natural transformation $\sigma_{P,A}:P \act T(A) \rightarrow T(P \act A)$, called \textbf{strength} such that diagrams in \cref{def:actegorical_monad_strength} commute.
\end{definition}

% \begin{remark}
%   In the case where the action is the self action of a monoidal category then our notion here of actegorical strong monad reduces to the usual notion of (tensorial) strong monad.
% \end{remark}

\begin{example}
  All monads of the form $A \times - : \Set \to \Set$ are strong for the actegory $(\Set,\times)$, for any monoid $A$.
  This includes $G \times - : \Set \to \Set$ from \cref{ex:group_action_monad}.\footnote{The strength is in fact a natural isomorphism.}
\end{example}

\begin{definition}[Actegorical strong endofunctor]
  \label{def:actegorical_endofunctor}
  Let $(\cC, \act)$ be a $\cM$-actegory.
  An endofunctor $F$ on $\cC$ is called \newdef{strong} if it is equipped with a natural transformation $\sigma_{P,A}:P \act F(A) \rightarrow F(P \act A)$, called a \textbf{strength} such that diagrams \textbf{AS1} and \textbf{AS2} in \cref{def:actegorical_monad_strength} commute.
\end{definition}

\begin{definition}[Actegorical strong monad coherence diagrams]
    \label{def:actegorical_monad_strength}
    A monad $(T,\mu,\eta)$ on the category $\cC$ of a $\cM$-actegory $(\cC, \act)$ is called \newdef{strong} if it is equipped with a natural transformation $\sigma_{P,A}:P \act T(A) \rightarrow T(P \act A)$, called \textbf{strength} making the diagrams below commute:
    
    \begin{minipage}[t]{.5\textwidth}
        \textbf{AS1: Compatibility of strength and the monoidal unit.}
                % https://q.uiver.app/#q=WzAsMyxbMCwwLCJJIFxcYWN0dCBUKEEpIl0sWzEsMCwiVChJIFxcYWN0dCBBICkiXSxbMSwxLCJUKEEpIl0sWzAsMSwiXFxzaWdtYV97SSxBfSJdLFsxLDIsIlQoXFxsYW1iZGFfQSkiXSxbMCwyLCJcXGxhbWJkYV97VChBKX0iLDJdXQ==
                \[\begin{tikzcd}[ampersand replacement=\&]
                    {I \act T(A)} \& {T(I \act A )} \\
                    \& {T(A)}
                    \arrow["{\sigma_{I,A}}", from=1-1, to=1-2]
                    \arrow["{T(\lambda_A)}", from=1-2, to=2-2]
                    \arrow["{\lambda_{T(A)}}"', from=1-1, to=2-2]
                \end{tikzcd}\]
    \end{minipage}%
    \begin{minipage}[t]{0.5\textwidth}
        \textbf{AS2: Compatibility of strength and actegory multiplicator.}
        % https://q.uiver.app/#q=WzAsNSxbMCwwLCIoUCBcXG90aW1lcyBRKSBcXGFjdHQgVChYKSAiXSxbMCwyLCJUKChQIFxcb3RpbWVzIFEpIFxcYWN0dCBYKSJdLFsyLDAsIlAgXFxhY3R0IChRIFxcYWN0dCBUKFgpKSJdLFsyLDIsIlQoUCBcXGFjdHQgKFEgXFxhY3R0IFgpKSJdLFsyLDEsIlAgXFxhY3R0IFQoUSBcXGFjdHQgWCkiXSxbMCwxLCJcXHNpZ21hX3tQXFxvdGltZXMgUSxYfSIsMl0sWzAsMiwiXFxhbHBoYV97UCxRLFQoWCl9Il0sWzIsNCwiUCBcXGFjdHQgXFxzaWdtYV97USxYfSJdLFs0LDMsIlxcc2lnbWFfe1AsUSBcXGFjdHQgWH0iXSxbMSwzLCJUKFxcYWxwaGFfe1AsUSxYfSkiLDJdXQ==
        \[\begin{tikzcd}[ampersand replacement=\&]
        	{(P \otimes Q) \act T(X) } \&\& {P \act (Q \act T(X))} \\
        	\&\& {P \act T(Q \act X)} \\
        	{T((P \otimes Q) \act X)} \&\& {T(P \act (Q \act X))}
        	\arrow["{\sigma_{P\otimes Q,X}}"', from=1-1, to=3-1]
        	\arrow["{\alpha_{P,Q,T(X)}}", from=1-1, to=1-3]
        	\arrow["{P \act \sigma_{Q,X}}", from=1-3, to=2-3]
        	\arrow["{\sigma_{P,Q \act X}}", from=2-3, to=3-3]
        	\arrow["{T(\alpha_{P,Q,X})}"', from=3-1, to=3-3]
        \end{tikzcd}\]
    \end{minipage}
    \begin{minipage}[t]{.5\textwidth}
        \textbf{AS3: Compat. of strength and monad multiplication.}
       % https://q.uiver.app/#q=WzAsNSxbMCwwLCJBIFxcYWN0dCBUKFQoWCkpIl0sWzIsMCwiQSBcXGFjdHQgVChYKSJdLFsyLDIsIlQoQSBcXGFjdHQgWCkiXSxbMCwxLCJUKEEgXFxhY3R0IFQoWCkpIl0sWzAsMiwiVChUKEEgXFxhY3R0IFgpKSJdLFswLDEsIkEgXFxhY3R0IFxcbXVfWCJdLFsxLDIsIlxcc2lnbWFfe0EsIFh9Il0sWzAsMywiXFxzaWdtYV97QSwgVChYKX0iLDJdLFszLDQsIlQoXFxzaWdtYV97QSwgVChYKX0pIiwyXSxbNCwyLCJcXG11X3tBIFxcYWN0dCBYfSIsMl1d
       \[\begin{tikzcd}[ampersand replacement=\&]
       	{A \act T(T(X))} \&\& {A \act T(X)} \\
       	{T(A \act T(X))} \\
       	{T(T(A \act X))} \&\& {T(A \act X)}
       	\arrow["{A \act \mu_X}", from=1-1, to=1-3]
       	\arrow["{\sigma_{A, X}}", from=1-3, to=3-3]
       	\arrow["{\sigma_{A, T(X)}}"', from=1-1, to=2-1]
       	\arrow["{T(\sigma_{A, T(X)})}"', from=2-1, to=3-1]
       	\arrow["{\mu_{A \act X}}"', from=3-1, to=3-3]
       \end{tikzcd}\] 
    \end{minipage}%
    \begin{minipage}[t]{0.5\textwidth}
        \textbf{AS4: Compatibility of strength and monad unit.}
        % https://q.uiver.app/#q=WzAsMyxbMSwwLCJNIFxcYWN0dCBYIl0sWzAsMSwiTSBcXGFjdHQgVChYKSJdLFsyLDEsIlQoTSBcXGFjdHQgWCkiXSxbMCwxLCJNIFxcYWN0dCBcXGV0YV9YIiwyXSxbMSwyLCJcXHNpZ21hX3tNLCBYfSIsMl0sWzAsMiwiXFxldGFfe00gXFxhY3R0IFh9Il1d
        \[\begin{tikzcd}[ampersand replacement=\&]
        	\& {M \act X} \\
        	{M \act T(X)} \&\& {T(M \act X)}
        	\arrow["{M \act \eta_X}"', from=1-2, to=2-1]
        	\arrow["{\sigma_{M, X}}"', from=2-1, to=2-3]
        	\arrow["{\eta_{M \act X}}", from=1-2, to=2-3]
        \end{tikzcd}\]
    \end{minipage}
\end{definition}

\begin{example}
  \label{ex:strong_endofunctors}
  The examples of endofunctors from \cref{ex:list_as_algebra,ex:tree_as_algebra,ex:stream_as_coalgebra,ex:mealy_machine_as_coalgebra,ex:moore_machine_as_coalgebra} are all strong endofunctors, and their appropriate free monads are strong too.
  See Appendix (\cref{ex:strengths_endofunctors_free_monads})
\end{example}

\begin{example}
  \label{ex:strengths_endofunctors_free_monads}
  Below we present a list the data of \emph{strengths} $\sigma_{P, X}$ of some relevant endofunctors and (on the left) and their corresponding free monads (on the right).

\begin{minipage}[t]{.5\textwidth}
  \begin{center}
    Given the endofunctor $1 + A \times - : \Set \to \Set$, its strength $\sigma_{P, X} : P \times (1 + A \times X) \to 1 + A \times P \times X$ is given by:
  \end{center}
    \begin{align*}
        &\sigma_{P, X}(p, \inl(\bullet)) = \inl(\bullet)\\
        &\sigma_{P, X}(p, \inr(a, x)) = \inr(a, p, x)
    \end{align*}
\end{minipage}%
\begin{minipage}[t]{0.5\textwidth}
  \begin{center}
    Given the free monad $\ListM{-}{A} : \Set \to \Set$, its strength $\sigma_{P, Z} : P \times \ListM{Z}{A} \to \ListM{P \times Z}{A}$ is given by:
  \end{center}
      \begin{align*}
        &\sigma_{P, Z}(p, \Nil) = \Nil\\
        &\sigma_{P, Z}(p, z) = (p, z)\\
        &\sigma_{P, Z}(p, \Cons(a, as)) = \Cons(a, \sigma_{P, Z}(p, as))
      \end{align*}
\end{minipage}

\begin{minipage}[t]{.5\textwidth}
  \begin{center}
    Given the endofunctor $A + (-)^2$, its strength $\sigma_{P, X} : P \times (A + X^2) \to A + (P \times X)^2$ is given by:
  \end{center}
    \begin{align*}
      &\sigma_{P, X}(p, \inl(a)) = \inl(a)\\
      &\sigma_{P, X}(p, \inr(x, x')) = \inr((p, x), (p, x'))\\
    \end{align*}
\end{minipage}%
\begin{minipage}[t]{0.5\textwidth}
  \begin{center}
    Given the free monad $\Tree{A + -} : \Set \to \Set$, its strength $\sigma_{P, Z} : P \times \Tree{A + Z} \to \Tree{A + P \times Z}$ is given by:
  \end{center}
      \begin{align*}
        &\sigma_{P, Z}(p, \Leaf(\inl(a))) = \Leaf(a)\\
        &\sigma_{P, Z}(p, \Leaf(\inr(z))) = \Leaf(\inr(p, z))\\
        &\sigma_{P, Z}(p, \Node(l, r)) = \Node(\sigma_{P, Z}(p, l),\sigma_{P, Z}(p, r))\\
      \end{align*}
\end{minipage}

\iffalse
\begin{minipage}[t]{.5\textwidth}
  \begin{center}
    Given the endofunctor $O \times - : \Set \to \Set$, its strength $\sigma_{P, X} : P \times (O \times X) \to O \times (P \times X)$ is given by:
  \end{center}
    \begin{align*}
      &\sigma_{P, X}(p, o, x) = (o, p, x)\\
    \end{align*}
\end{minipage}%
\begin{minipage}[t]{0.5\textwidth}
  \begin{center}
    todo cofree comonad should be here?
  \end{center}
\end{minipage}
\fi
\end{example}

\section{2-Categorical Algebra}

Good references for 2-categorical algebra are \cite{Lack_2_categories_companion} or \cite{Kelly_enriched_categories}. The latter deals with the more general notion of enriched categories of which 2-categories are a particular example.

\subsection{2-monads and their Lax Algebras}

A 2-monad can conscisely be defined as a $\Cat$-enriched monad (see \citet[Sec. 6.5]{johnson_2-dimensional_2021})
An unpacking of its definition follows.

\begin{definition}[2-monad]
  A \newdef{2-monad} on a 2-category $\cC$ comprises:
  \begin{itemize}
    \item A 2-endofunctor $T$ on $\cC$;
    \item A 2-natural transformation $\mu : T^2 \Rightarrow T$; and
    \item A 2-natural transformation $\eta : \id_{\cC} \Rightarrow T$;
  \end{itemize}
  such that the ($\Cat$-enriched variant of) the axioms in \cref{def:monad_diagrams} hold.

  \iffalse
  \todo[inline]{todo, though I don't think we need a full definition here. We could point to one in the literature, or even just say "monads can be generalised to a higher-dimensional setting, where the coherence laws of a monad become explicit data...
    }
\fi
\end{definition}

\begin{definition}[Lax $T$-algebra for a 2-monad]
  Let $(T, \eta, \mu)$ be a 2-monad on $\cC$.
  A lax $T$-algebra is a pair $(A, r, \epsilon_A, \delta_A)$ where $A$ is an object of $\cC$, $a : T(A) \to A$ is a morphism in $\cC$, and $\epsilon_A$ and $\delta_A$ are the following 2-morphisms in $\cC$:
  % https://q.uiver.app/#q=WzAsNyxbMCwwLCJUKEEpIl0sWzAsMSwiQSJdLFsxLDEsIkEiXSxbMiwwLCJUKFQoQSkpIl0sWzIsMSwiVChBKSJdLFszLDAsIlQoQSkiXSxbMywxLCJBIl0sWzAsMSwiYSIsMl0sWzEsMiwiIiwyLHsibGV2ZWwiOjIsInN0eWxlIjp7ImhlYWQiOnsibmFtZSI6Im5vbmUifX19XSxbMiwwLCJcXGV0YV9BIiwyXSxbNSw2LCJhIl0sWzQsNiwiYSIsMl0sWzMsNSwiXFxtdV9BIl0sWzMsNCwiVChhKSIsMl0sWzQsNSwiXFxkZWx0YV9BIiwwLHsibGV2ZWwiOjJ9XSxbMSw5LCJcXGVwc2lsb25fQSIsMCx7InNob3J0ZW4iOnsidGFyZ2V0IjoyMH19XV0=
\[\begin{tikzcd}[ampersand replacement=\&]
	{T(A)} \&\& {T(T(A))} \& {T(A)} \\
	A \& A \& {T(A)} \& A
	\arrow["a"', from=1-1, to=2-1]
	\arrow[Rightarrow, no head, from=2-1, to=2-2]
	\arrow[""{name=0, anchor=center, inner sep=0}, "{\eta_A}"', from=2-2, to=1-1]
	\arrow["a", from=1-4, to=2-4]
	\arrow["a"', from=2-3, to=2-4]
	\arrow["{\mu_A}", from=1-3, to=1-4]
	\arrow["{T(a)}"', from=1-3, to=2-3]
	\arrow["{\delta_A}", Rightarrow, from=2-3, to=1-4]
	\arrow["{\epsilon_A}", shorten >=2pt, Rightarrow, from=2-1, to=0]
\end{tikzcd}\]
such that the lax unity and lax associativity conditions \citep[Eq. 6.5.6 and 6.5.7]{johnson_2-dimensional_2021} are satisfied.
\end{definition}

\iffalse
Now we turn our attention to strong monads from \cref{ex:strong_endofunctors} and the corresponding 2-monad on $\Para_{\act}(\Set)$.
Let's study lists, and the induced 2-monad $\Para(\ListM{-}{A}) : \Para(\Set) \to \Para(\Set)$.
A lax algebra consists of an object $Z$ of $\Set$, a choice of a parameter $P$ and a morphism $f : P \times \ListM{Z}{A} \to Z$ and two 2-morphisms $\epsilon_Z$ and $\delta_Z$. 
Let's have a look at $\delta$.
This 2-cell spans the diagram below
% https://q.uiver.app/#q=WzAsNCxbMiwwLCJcXExpc3RNe1p9e0F9Il0sWzAsMiwiXFxMaXN0TXtafXtBfSJdLFsyLDIsIloiXSxbMCwwLCJcXExpc3RNe1xcTGlzdE17Wn17QX19e0F9Il0sWzAsMiwiKFAsIGYpIl0sWzEsMiwiKFAsIGYpIiwyXSxbMSwwLCJcXGRlbHRhX0EiLDAseyJzaG9ydGVuIjp7InNvdXJjZSI6MjAsInRhcmdldCI6MjB9LCJsZXZlbCI6Mn1dLFszLDAsIlxcbXVfWiJdLFszLDEsIlxcUGFyYShcXExpc3RNey19e0F9KShQLCBmKSIsMl1d
\[\begin{tikzcd}[ampersand replacement=\&]
	{\ListM{\ListM{Z}{A}}{A}} \&\& {\ListM{Z}{A}} \\
	\\
	{\ListM{Z}{A}} \&\& Z
	\arrow["{(P, f)}", from=1-3, to=3-3]
	\arrow["{(P, f)}"', from=3-1, to=3-3]
	\arrow["{\delta_A}", shorten <=15pt, shorten >=15pt, Rightarrow, from=3-1, to=1-3]
	\arrow["{\mu_Z}", from=1-1, to=1-3]
	\arrow["{\Para(\ListM{-}{A})(P, f)}"', from=1-1, to=3-1]
\end{tikzcd}\]

and we can note that the underlying reparameterisation it has to be given by is of type $P \to P \times P$.
Likewise, if we unpack the reparameterisation for $\epsilon_Z$ we will find it has to be given by a reparameterisation $P \to 1$.
This suggests that a lax algebra for $\Para(\ListM{-}{A}) : \Para(\Set) \to \Para(\Set)$ ensures that the parameter becomes a commutative comonoid.

Indeed, this is the case, and not just for lists, but for any strong monad on $\Set$, bringing us to one of the main theorems of the paper.

Analogously to before, there is a morphism of lax algebras...

\fi

\begin{definition}[Lax algebra homomorphism]
  Let $\cE$ be a 2-category, $T$ a 2-monad on $\cE$, and $(A,a, \epsilon_A, \delta_A)$ and $(B,b, \epsilon_B, \delta_B)$ be lax $T$-algebras.
  Then a lax algebra morphism from $(A,a, \epsilon_A, \delta_A) \to (B,b, \epsilon_B, \delta_B)$ is pair $(f, \kappa)$ where  $f : A \rightarrow B$ is a morphism in $\cE$ and $\kappa$ is a 2-morphism spanned by the following diagram
% https://q.uiver.app/#q=WzAsNCxbMCwwLCJUKEEpIl0sWzAsMSwiQSJdLFsxLDEsIkIiXSxbMSwwLCJUKEIpIl0sWzAsMSwiYSIsMl0sWzMsMiwiYiJdLFsxLDIsImYiLDJdLFswLDMsIlQoZikiXSxbMywxLCJcXGthcHBhIiwyLHsic2hvcnRlbiI6eyJzb3VyY2UiOjEwLCJ0YXJnZXQiOjEwfSwibGV2ZWwiOjJ9XV0=
\[\begin{tikzcd}[ampersand replacement=\&]
{T(A)} \& {T(B)} \\
A \& B
\arrow["a"', from=1-1, to=2-1]
\arrow["b", from=1-2, to=2-2]
\arrow["f"', from=2-1, to=2-2]
\arrow["{T(f)}", from=1-1, to=1-2]
\arrow["\kappa"', shorten <=2pt, shorten >=2pt, Rightarrow, from=1-2, to=2-1]
\end{tikzcd}\]

such that the diagrams in \citep[Def. 6.5.9]{johnson_2-dimensional_2021} commute.

Lax algebras and lax algebra homomorphisms for the 2-monad $T$ form the category $\AlgLaxMnd{T}$.
\end{definition}

\begin{example}[2-monad for $M$-modules]
  If $M$ is a monoid, it is useful to study ``$M$-modules'', which we define to be monoids equipped with a compatible action of $M$.\footnote{Contrary to a mere $M$-action $A$, where $A$ is a set, in a $M$-module $A$, $A$ comes equipped with a monoid structure too.} For example, the situation of a group acting on a vector space is fundamental in representation theory. We can interpret such modules in terms of the 2-monad $\mathtt{disc}(M)\times -$ on the 2-category $\Cat$.
  
  Here $\mathtt{disc}(M)$ is the category whose objects are elements of $M$, and whose morphisms are just the identity arrows. Since $M$ is a monoid, $\mathtt{disc}(M)$ is a monoidal category, so $\mathtt{disc}(M)\times -$ is a 2-monad.

As the set of monoid endomorphisms on a monoid $A$ is isomorphic to the set of endofunctors on the category given by the delooping of the said monoid, an $M$-module can be seen to be a one-object algebra for this 2-monad. Indeed, this is just a special case of an actegory.
\end{example}

\begin{example}[Cocycles as lax morphisms]
  The monoid cocycles, or ``crossed homomorphisms'', used in \cite{dudzik2023asynchronous} to study asynchrony in algorithms and networks, can be described as lax morphisms for 2-monad algebras. As above, if $A$ is an $M$-module, then $\cB A$ is an algebra for $\mathtt{disc}(M)\times -$. A lax morphism $1\to \cB A$ is a natural transformation of the unique functor $M\to \cB A$, which is equivalently just a set map $D: M\to A$. The axiom for compositionality is exactly the (right) 1-cocycle condition for $D$, so 1-cocycles are the same as lax morphisms $1\to \cB A$.
\end{example}

% \begin{example}
%  One example using free monads? 
% \end{example}

\section{The 2-category $\Para(\cC)$}\label{sec:2-categories_para(c)}

\begin{definition}[{Para, compare \cite{cruttwell_categorical_2022,capucci_towards_2022} }]
  Given a monoidal category $(\cM,\otimes,I)$ and an $\cM$-actegory $\cC$, let $\Para_{\act}(\cC)$ be the $2$-category whose:
  \begin{itemize}
      \item Objects are the objects of $\cC$;
      \item Morphisms $X \rightarrow Y$ are pairs $(P,f)$, where $P$ is an object of $\cM$ and $f:P \act X \rightarrow Y$ is a morphism of $\cC$;
      \begin{figure}[H]
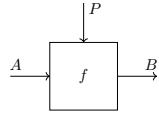

        \scaletikzfig[0.6]{para}
        \caption{String diagram representation of a parametric morphism. We often draw the parameter wire on the \emph{vertical} axis to signify that the parameter object is part of the data of the morphism.}
        \label{fig:app_para_morphism}
      \end{figure}
      \item $2$-morphisms $(P,f) \Rightarrow (P^\prime,f^\prime):X \rightarrow Y$ are $1$-morphisms $r:P^\prime \rightarrow P$ of $\cM$ such that the triangle
      
      % https://q.uiver.app/#q=WzAsMyxbMCwyLCJQXlxccHJpbWVcXHRyaWFuZ2xlcmlnaHQgWCJdLFswLDAsIlBcXHRyaWFuZ2xlcmlnaHQgWCJdLFsyLDEsIlkiXSxbMCwxLCJyXFx0cmlhbmdsZXJpZ2h0IFgiXSxbMCwyLCJmXlxccHJpbWUiLDJdLFsxLDIsImYiXV0=
      \[\begin{tikzcd}[ampersand replacement=\&]
	{P\triangleright X} \\
	\&\& Y \\
	{P^\prime\triangleright X}
	\arrow["{r\triangleright X}", from=3-1, to=1-1]
	\arrow["{f^\prime}"', from=3-1, to=2-3]
	\arrow["f", from=1-1, to=2-3]
      \end{tikzcd}\]

      commutes (equivalently an equality of parametric string diagrams as in \cref{fig:para_reparam}).
      
      \begin{figure}[H]
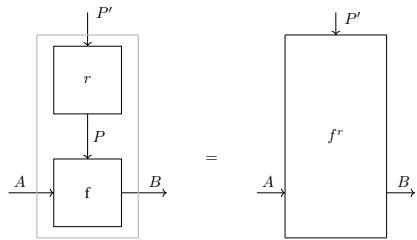

        \scaletikzfig[0.6]{para_reparametrisation}
        \caption{String diagram of reparameterisation. The reparameterisation map $r$ is drawn vertically.}
        \label{fig:para_reparam}
      \end{figure}
      \item Identity morphism on $X$ is the parametric map $(I, {\eta_X}^{-1})$, where $\eta$ is the unitor of the underlying actegory.
  \end{itemize}

  where composition of morphisms $(P,f):X \rightarrow Y$ and $(Q,g):Y \rightarrow Z$ is $(Q \otimes P, h)$ where $h$ is the composite
    \[
        (Q \otimes P) \act X \xrightarrow{\mu_{Q,P,X}} Q \act (P \act X) \xrightarrow{Q \act f} Q \act Y \xrightarrow{g} Z
    \]
  and composition of 2-morphisms is given by the composition of $\cM$.
    \begin{figure}[H]
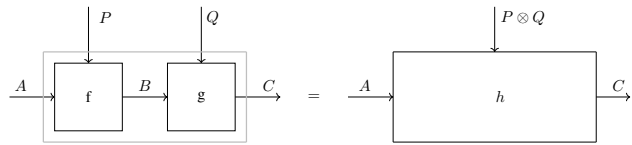

      \scaletikzfig[0.6]{para_composition}
      \caption{String diagram representation of the composition of parametric morphisms. By treating parameters on a separate axis we obtain an elegant graphical depiction of their composition.}
      \label{fig:para_composition}
    \end{figure}
\end{definition}

\begin{remark}
    When the actegory is strict the identities of the actegory reduce to being the parametric morphisms of the form $(I,\id_X)$
\end{remark}

\begin{example}[Real Vector Spaces and Smooth Maps]
  Consider the cartesian category $\Smooth$ whose objects are real vector spaces, and morphisms are smooth functions.
  As this category is cartesian, we can form $\Para(\Smooth)$ modelling parametric smooth functions.
\end{example}

\begin{lemma}[{Embedding of $\cC$ into $\Para(\cC)$}]
  \label{lemma:embedding}
  There is a 2-functor $\gamma: \cC \to \Para(\cC)$ which is identity-on-objects and treats every morphism $f : A \to B$ as trivially parametric, i.e.\ as $(I, f \circ \eta^{-1}_A)$.\footnote{Here we are treating $\cC$ as 2-category with only identity 2-morphisms.}
\end{lemma}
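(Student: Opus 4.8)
The plan is to verify directly that the assignment $\gamma$ satisfies the axioms of a strict 2-functor $\cC \to \Para_{\act}(\cC)$, where $\cC$ is regarded as a 2-category with only identity 2-cells. Since $\gamma$ is identity-on-objects, all the content lives in the 1-cell and 2-cell data. First I would check well-definedness on 1-cells: the unitor of the actegory identifies $I \act A$ with $A$ via $\eta_A$, so (in the convention used in the definition of $\Para_{\act}(\cC)$) the map $f \circ \eta_A^{-1} : I \act A \to B$ is a legitimate morphism of $\cC$ with parameter object $I \in \cM$. Hence $(I, f \circ \eta_A^{-1})$ is a genuine 1-morphism $A \to B$ of $\Para_{\act}(\cC)$.

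Preservation of identities is then immediate: $\gamma(\id_A) = (I, \id_A \circ \eta_A^{-1}) = (I, \eta_A^{-1})$, which is exactly the identity 1-morphism on $A$ prescribed in the definition of $\Para_{\act}(\cC)$.

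The substantive step is preservation of composition. Given $f : A \to B$ and $g : B \to C$, unwinding the definition of composition in $\Para_{\act}(\cC)$, the composite $\gamma(g) \circ \gamma(f)$ has parameter $I \otimes I$ and structure map
\[
(I \otimes I) \act A \xrightarrow{\mu_{I,I,A}} I \act (I \act A) \xrightarrow{I \act (f \circ \eta_A^{-1})} I \act B \xrightarrow{g \circ \eta_B^{-1}} C,
\]
whereas $\gamma(g \circ f) = (I, (g\circ f)\circ \eta_A^{-1})$. To identify the two I would invoke strictness: by the coherence theorem for actegories (the remark following \cref{def:actegory}) I may assume $(\cC, \act)$ is a strict actegory, so $I \otimes I = I$ and both $\mu$ and the unitor $\eta$ are identities. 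Then the displayed composite collapses to $g \circ f$, the parameter objects agree on the nose, and $\gamma(g)\circ\gamma(f) = (I, g\circ f) = \gamma(g\circ f)$. Without strictness the same identity would hold only up to the canonical isomorphism $I \otimes I \cong I$, so appealing to the strict model is precisely what upgrades $\gamma$ from a pseudofunctor to an honest 2-functor.

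Finally, the 2-dimensional axioms are essentially vacuous. Each hom-category $\cC(A,B)$ is discrete, so the only 2-cell to transport is an identity, which I would send to the identity reparameterization $\id_I : I \to I$; its reparameterization triangle commutes trivially. Preservation of identity 2-cells and of vertical and horizontal composition of 2-cells then follows automatically, since there is nothing to compose beyond identities. The only genuine obstacle is the bookkeeping around the parameter object $I \otimes I$ versus $I$ in the composition axiom, and reducing to the strict actegory—as the cited coherence result licenses—is exactly what makes this bookkeeping disappear and delivers a strict, rather than merely pseudo, 2-functor.
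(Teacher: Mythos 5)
The paper states \cref{lemma:embedding} without proof, so there is no official argument to compare against; your direct verification supplies exactly the routine check the authors leave implicit, and it is correct: identity preservation is definitional (the identity of $\Para_{\act}(\cC)$ is $(I,\eta_X^{-1})$), the 2-cell axioms are vacuous because $\cC$ is locally discrete, and the only real content is that $\gamma(g)\circ\gamma(f)$ carries parameter $I\otimes I$ while $\gamma(g\circ f)$ carries parameter $I$, which you resolve by passing to the strict model---precisely the convention the paper adopts in the remark following \cref{def:actegory} and in the remarks surrounding the lemma, and which is indeed what makes $\gamma$ a strict 2-functor rather than a mere pseudofunctor. One small wording correction: the identification $I\otimes I=I$ follows from strictness of the monoidal category $\cM$, not from strictness of the actegory (which only makes $\eta$ and $\mu$ identities); since the paper's coherence remark strictifies both structures at once, your argument stands once this is noted.
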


\begin{remark}
    When the actegory $(\cC,\act)$ is strict the trivially parametric functions do not require precomposition with the unitor of actegory. That is to say a morphism $f:X\rightarrow Y$ of $\cC$ is sent to the parametric morphism $(I,f):X \rightarrow Y$.
\end{remark}

\begin{theorem}[{$\gamma$ preserves connected colimits}]
  Suppose $(\cM,\eta,\mu)$ is a strict monoidal category, that $(\cC,\act)$ is an $\cM$-actegory, and that $X\act (\_):\cC \rightarrow \cC $ preserves connected colimits. Then, the 2-functor $\gamma:\cC \rightarrow \Para(\cC)$ preserves connected colimits.
\end{theorem}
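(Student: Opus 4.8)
The plan is to unpack, using connectedness, exactly what a cocone over $\gamma \circ D$ in $\Para(\cC)$ is, and then to reduce its universal property to the one that $X \act (\_)$ is assumed to enjoy. Invoking the coherence theorem for actegories, I would first assume $(\cC, \act)$ is \emph{strict}, so that $I \act (\_) = \id_\cC$ and all multiplicators $\mu$ are identities; then $\gamma$ sends $f : A \to B$ to $(I, f)$, and the composite of $(P, f)$ with $(Q, g)$ is simply $(Q \otimes P,\, g \circ (Q \act f))$. I will also read the hypothesis as: for every object $X$ of $\cM$, the functor $X \act (\_)$ preserves connected colimits.

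Fix a connected diagram $D : \cJ \to \cC$ with colimit cocone $c_j : D(j) \to \colim D$, and unpack a cocone over $\gamma \circ D$ with apex $Y$. Such a cocone is a family $(Q_j, g_j) : D(j) \to Y$, with $g_j : Q_j \act D(j) \to Y$, satisfying $(Q_j, g_j) \circ \gamma(D\phi) = (Q_i, g_i)$ for every $\phi : i \to j$. Computing the composite shows it equals $(Q_j,\, g_j \circ (Q_j \act D\phi))$, so the cocone condition forces $Q_i = Q_j$ together with $g_i = g_j \circ (Q_j \act D\phi)$. \textbf{This is the crux of the argument:} since $\cJ$ is connected, the equalities $Q_i = Q_j$ across every arrow propagate along zigzags, forcing all the $Q_j$ to coincide with a single object $Q \in \cM$. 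Hence a cocone over $\gamma \circ D$ is precisely the data of an object $Q \in \cM$ together with an ordinary cocone $\{g_j : Q \act D(j) \to Y\}$ over the diagram $j \mapsto Q \act D(j)$ in $\cC$.

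Next I would apply the hypothesis. Because $Q \act (\_)$ preserves connected colimits and $\cJ$ is connected, $\{Q \act c_j : Q \act D(j) \to Q \act \colim D\}$ is a colimit cocone for $j \mapsto Q \act D(j)$. The $\gamma$-image of the original colimit cocone is $\{(I, c_j) : D(j) \to \colim D\}$, which under the correspondence above is exactly the case $Q = I$ paired with the universal cocone $\{c_j\}$ for $D = I \act D(\_)$. To verify universality, take any cocone presented as $(Q, \{g_j\})$: a mediating $1$-morphism $(R, m) : \colim D \to Y$ satisfies $(R, m) \circ (I, c_j) = (Q, g_j)$ for all $j$ if and only if $R = Q$ and $m \circ (Q \act c_j) = g_j$. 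Since $\{Q \act c_j\}$ is a colimit cocone, there is a unique $m : Q \act \colim D \to Y$ with $m \circ (Q \act c_j) = g_j$ for all $j$, so $(Q, m)$ is the unique mediating morphism, establishing the $1$-dimensional universal property of $\colim D$ in $\Para(\cC)$.

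The main obstacle is genuinely the connectedness step: without it, the parameter objects $Q_j$ of a cocone could differ across components, and there would be no single $Q$ to which the assumption on $Q \act (\_)$ could be applied; this is also why $\gamma$ is \emph{not} expected to preserve the initial object, the colimit of the empty (hence disconnected) diagram. A secondary point to discharge is the $2$-dimensional aspect: one checks that $2$-cells between cocones, namely reparameterisations $R' \to R$ compatible with the $g_j$, correspond bijectively to reparameterisations of mediating morphisms, so that the cocone is preserved as a conical $2$-colimit and not merely in the underlying $1$-category. This last verification follows formally from the same parameter-matching correspondence once the $1$-dimensional part is in place.
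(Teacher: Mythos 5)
Your proof is correct and takes essentially the same approach as the paper's: both arguments hinge on the observation that, over a connected diagram, the cocone conditions force all parameter objects to coincide with a single $Q \in \cM$, and then transfer the universal property through the hypothesis that $Q \act (\_)$ preserves connected colimits. The paper packages exactly this as a chain of hom-set isomorphisms, from parametric maps out of the colimit through $\coprod_{P \in \cM} \cC\left(P \act \colim X_d, Y\right)$ to the limit of the parametric hom-sets, whereas you verify the universal property directly by classifying cocones and exhibiting the unique mediating morphism $(Q, m)$ --- the same content in a more explicit presentation.
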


\begin{remark}
    While in $\cC$ they are the only kind of colimits, the colimits we mean in $\Para(\cC)$ are strict colimits. The relationship to other flavours of colimit is a topic of ongoing research. 
\end{remark}

\begin{proof}
    Note that for a cone over a trivially parameterised connected diagram to commute, the parameters of the 1-cells of the cone must all be the same. Then, since $\act$ preserves connected colimits by hypothesis, we may assemble the following chain of isomorphisms.

\begin{eqnarray*}
\mathsf{Tr}_{1}\left(\mathsf{Para}\left(\mathcal{C}\right)\right)\left(\underrightarrow{\mathsf{lim}}_{d\in\mathcal{D}}X_{d},Y\right) & \overset{\sim}{\rightarrow} & \coprod_{P\in\mathcal{M}}\mathcal{C}\left(P\triangleright\underrightarrow{\mathsf{lim}}_{d\in\mathcal{D}}X_{d},Y\right)\\
 & \overset{\sim}{\rightarrow} & \coprod_{P\in\mathcal{M}}\mathcal{C}\left(\underrightarrow{\mathsf{lim}}_{d\in\mathcal{D}}P\triangleright X_{d},Y\right)\\
 & \overset{\sim}{\rightarrow} & \coprod_{P\in\mathcal{M}}\underleftarrow{\mathsf{lim}}_{d\in\mathcal{D}}\mathcal{C}\left(P\triangleright X_{d},Y\right)\\
 & \overset{\sim}{\rightarrow} & \mathsf{\mathsf{Ob}\left(\mathcal{M}\right)\times\underleftarrow{\mathsf{lim}}_{d\in\mathcal{D}}\mathcal{C}\left(P\triangleright X_{d},Y\right)}\\
 & \overset{\sim}{\rightarrow} & \underleftarrow{\mathsf{lim}}_{d\in\mathcal{D}}\left(\mathsf{Ob}\left(\mathcal{M}\right)\times\mathcal{C}\left(P\triangleright X_{d},Y\right)\right)\\
 & \overset{\sim}{\rightarrow} & \underleftarrow{\mathsf{lim}}_{d\in\mathcal{D}}\left(\coprod_{P\in\mathcal{M}}\mathcal{C}\left(P\triangleright X_{d},Y\right)\right)\\
 & \overset{\sim}{\rightarrow} & \underleftarrow{\mathsf{lim}}_{d\in\mathcal{D}}\left(\mathsf{Tr}_{1}\left(\mathsf{Para}\left(\mathcal{C}\right)\right)\left(X_{d},Y\right)\right)
\end{eqnarray*}

\end{proof}

%\todo[inline]{DESTROY TWO HEAD RIGHT ARROWS - they are ambiguous. These just commute don't they?}

\begin{example}
  \label{prop:para_pseudomonad}
  Fix $\cM$-actegory $(\cC,\act)$, and a monad $(T,\mu,\eta)$ on $\cC$ with actegorical strength $\sigma: M \act T(X) \to T(M \act X)$.
  Then this gives us a 2-monad on $\Para_{\act}(T)$ on $\Para_{\act}(\cC)$ whose underlying 2-endofunctor:
  \begin{itemize}
      \item Acts as $T$ does on objects\footnote{This is well defined since $\Ob{\Para(\cC)} = \Ob{\cC}$}; and
      \item Sends $(P,f):X \rightarrow Y$ to $(P, f')$, where $f'$ is the composite
      \[
        P \act T(X) \xrightarrow{\sigma_{P, X}} T(P\act X) \xrightarrow{T(f)} T(Y)
      \]
      \item Sends $r : P \to P'$ to itself.
  \end{itemize}
  and the unit and multiplication 2-natural transformations are defined as follows:
    \begin{itemize}
        \item Unit $\Para_{\act}(\eta) : \id_{\Para_{\act}(\cC)} \Rightarrow \Para_{\act}(T)$ whose component at $X : \cC$ is the element of $\Para_{\act}(\cC)(X, T(X))$ given by the parametric morphism $(I, u_X)$, where $u_X$ is the composite
        \[
            I \act X \xrightarrow{{\eta^{\act}_X}^{-1}} X \xrightarrow{\eta^T_X} T(X)
        \]
        For every parametric map $(P, f)\in\Para(\cC)(X, Y)$ we have the strictly natural square in \cref{eq:para_unit_naturality_square}.
        The 2-cell is given by the reparameterisation $\beta_{P, I} : P \otimes I \to I \otimes P$ which is identity since we are assuming $\cM$ is strict monoidal.
        It can be checked that this indeed satisfies the conditions of a 2-morphism in $\Para_{\act}(\cC)$.
        This makes $\Para_{\act}(\eta)$ a 2-natural transformation.
        \begin{equation}
            \label{eq:para_unit_naturality_square}
            % https://q.uiver.app/#q=WzAsNCxbMCwyLCJUKFgpIl0sWzIsMiwiVChZKSJdLFswLDAsIlgiXSxbMiwwLCJZIl0sWzAsMSwiVChmKSBcXGNpcmMgXFxzaWdtYV97UCxYfSIsMl0sWzIsMCwiKEksIHUpIiwyXSxbMiwzLCIoUCwgZikiXSxbMywxLCIoSSwgdV9ZKSJdLFszLDAsIlxcYmV0YV97UCxJfSIsMix7InNob3J0ZW4iOnsic291cmNlIjoyMCwidGFyZ2V0IjoyMH0sImxldmVsIjoyLCJzdHlsZSI6eyJ0YWlsIjp7Im5hbWUiOiJhcnJvd2hlYWQifX19XV0=
\begin{tikzcd}[ampersand replacement=\&]
	X \&\& Y \\
	\\
	{T(X)} \&\& {T(Y)}
	\arrow["{T(f) \circ \sigma_{P,X}}"', from=3-1, to=3-3]
	\arrow["{(I, u)}"', from=1-1, to=3-1]
	\arrow["{(P, f)}", from=1-1, to=1-3]
	\arrow["{(I, u_Y)}", from=1-3, to=3-3]
	\arrow["{\beta_{P,I}}"', shorten <=11pt, shorten >=11pt, Rightarrow,  to=1-3, from=3-1]
\end{tikzcd}
        \end{equation}
        \item Multiplication $\Para_{\act}(\mu) : \Para_{\act}(T)^2 \Rightarrow \Para_{\act}(T)$ whose component at $X\in\Para_{\act}(\cC)$ is the parametric morphism $(I, m_X)$ where $m_X$ is the composite
        \[
            I \act T^2(X) \xrightarrow{{\eta^{\act}_X}^{-1}} T^2(X) \xrightarrow{\mu^T_X} T(X)
        \]
        For every parametric morphism $(P, f)\in\Para_{\act}(\cC)(X, Y)$ we have the strictly natural square in \cref{eq:para_mult_naturality_square}.
        It is again given by $\beta_{P, I}$, and is strict because $\cM$ is strict monoidal.
        It can be checked that this too satisfies the conditions of a 2-morphism in $\Para_{\act}(\cC)$
        This makes $\Para_{\act}(\mu)$ a 2-natural transformation.
        \begin{equation}\label{eq:para_mult_naturality_square}
            % https://q.uiver.app/#q=WzAsNCxbMCwwLCJUXjIoWCkiXSxbMCwyLCJUKFgpIl0sWzQsMiwiVChZKSJdLFs0LDAsIlReMihZKSJdLFsxLDIsIlQoZikgXFxjaXJjIFxcc2lnbWFfe1AsIFh9IiwyXSxbMCwxLCIoSSwgbV94KSIsMl0sWzMsMiwiKEksIG1fWSkiXSxbMCwzLCIoUCwgKFQoZikgXFxjaXJjIFxcc2lnbWFfe1AsIFh9KSBcXGNpcmMgXFxzaWdtYV97UCwgVChYKX0pIl0sWzMsMSwiXFxiZXRhX3tQLCBJfSIsMix7InNob3J0ZW4iOnsic291cmNlIjozMCwidGFyZ2V0IjozMH0sImxldmVsIjoyLCJzdHlsZSI6eyJ0YWlsIjp7Im5hbWUiOiJhcnJvd2hlYWQifX19XV0=
\begin{tikzcd}[ampersand replacement=\&]
	{T^2(X)} \&\&\&\& {T^2(Y)} \\
	\\
	{T(X)} \&\&\&\& {T(Y)}
	\arrow["{T(f) \circ \sigma_{P, X}}"', from=3-1, to=3-5]
	\arrow["{(I, m_x)}"', from=1-1, to=3-1]
	\arrow["{(I, m_Y)}", from=1-5, to=3-5]
	\arrow["{(P, (T(f) \circ \sigma_{P, X}) \circ \sigma_{P, T(X)})}", from=1-1, to=1-5]
	\arrow["{\beta_{P, I}}"', shorten <=33pt, shorten >=33pt, Rightarrow, to=1-5, from=3-1]
\end{tikzcd}
        \end{equation}
    \end{itemize}
    Lastly, we need to check that this indeed satisfies the 2-monad coherence conditions.
    Since the parameters of the components of $\Para_{\act}(\eta)$ and $\Para_{\act}(\mu)$ are all trivial, this becomes straightforward to check.
\end{example}

\begin{remark}
    For strictly monoidal $\cM$ and a strict actegory $(\cC,\act)$ the unit and multiplication of $\Para(T)$ are exactly the unit and multiplication of $T$.
\end{remark}

\iffalse
More specifically, by picking any of the strong monads from \cref{ex:strong_endofunctors} we obtain a 2-monad on $\Para_{\act}(\Set)$.
These 2-monads will be instrumental in understanding weight sharing.

\begin{remark}
  As we'll see in the next section it is in fact this theorem, applied to the transfinite compositions which define free monads on endofunctors, which provides the crucial step in demonstrating that recurrent neural networks, and indeed many more examples, arise as lax-algebras induced by parametric algebras for endofunctors.
\end{remark}

The benefit of the two dimensional representation can be seen in the following figure describing weight sharing.
todo elaborate

\begin{figure}[H]
  \scaletikzfig[0.4]{weight_tying}
  \caption{String diagram representation of weight tying.}
  \label{fig:weight_tying}
\end{figure}
\fi

The following theorem shows us how out of the abstract 2-categorical framework the notion of weight tying (usage of the same weight in two different places obtained by copying the value) arises automatically as the structure of a comonoid induced by a lax algebra of a strong actegorical monad on $\cC$.

\begin{restatable}[Lax (co)algebras for $\Para_{\act}(T)$ induce comonoids]{theorem}{ComonoidsLaxAlgebras}
  \label{thm:parameters_lax_algebra_comonoids}
  Let $(\cC, \act)$ be a $\cM$-actegory and $T : \cC \to \cC$ a strong actegorical monad on $\cC$.
  Consider a lax algebra $(A, (P, a), \epsilon_A, \delta_A)$ for the induced 2-monad $\Para(T)$.
  Then $P$ is a comonoid in $\cM$ where $\epsilon_A$ is the data of its counit, and $\delta_A$ the data of its comultiplication, and the comonoid laws follow from lax algebra coherence conditions.
  Dually, the same statement holds for a lax coalgebra of a $\Para(T)$ comonad.
\end{restatable}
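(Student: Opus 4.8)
The plan is to unpack the lax-algebra data in $\Para_\act(\cC)$ and observe that, because $2$-morphisms in $\Para_\act(\cC)$ are reparameterisations — maps of $\cM$ between parameter objects, pointing \emph{against} the direction of the $2$-cell — the structure $2$-cells $\epsilon_A$ and $\delta_A$ are literally morphisms of $\cM$ out of $P$. The lax-algebra coherence then forces these maps to obey the comonoid axioms. First I would spell out the data: a lax algebra consists of $A \in \cC$, a $1$-cell $(P,a)\colon \Para(T)(A) \to A$ (that is, $P \in \cM$ and $a\colon P \act T(A) \to A$), together with $\epsilon_A$ and $\delta_A$. The unit $2$-cell $\epsilon_A$ fills the triangle comparing $\id_A$ with $(P,a)\circ \Para(\eta)_A$, and the associativity $2$-cell $\delta_A$ compares $(P,a)\circ \Para(T)(P,a)$ with $(P,a)\circ \Para(\mu)_A$.

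The crucial bookkeeping is in the parameter objects, and here I would use the description of $\Para(T)$ in \cref{prop:para_pseudomonad}: the components $\Para(\eta)_A$ and $\Para(\mu)_A$ carry the \emph{trivial} parameter $I$, the functor $\Para(T)$ leaves the parameter of $(P,a)$ equal to $P$, and composition in $\Para_\act(\cC)$ tensors parameters. Assuming $\cM$ strict (as the paper does) this yields $\id_A = (I,\id_A)$, parameter $P\otimes I = P$ for each of $(P,a)\circ\Para(\eta)_A$ and $(P,a)\circ\Para(\mu)_A$, and parameter $P\otimes P$ for $(P,a)\circ\Para(T)(P,a)$. Since a $2$-cell $(P',f')\Rightarrow(P'',f'')$ is a map $P''\to P'$ of $\cM$, the unit cell $\epsilon_A\colon (I,\id_A)\Rightarrow(P,-)$ is exactly a map $\epsilon\colon P \to I$, and the associativity cell $\delta_A\colon (P\otimes P,-)\Rightarrow(P,-)$ is exactly a map $\delta\colon P \to P\otimes P$. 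Note that the strength $\sigma$ enters only the underlying $\cC$-component of these composites, never the parameter, so it is invisible to the identification of $\epsilon$ and $\delta$.

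The heart of the argument — and the step I expect to be the main obstacle — is translating the lax-algebra coherence conditions \citep[Eq.\ 6.5.6, 6.5.7]{johnson_2-dimensional_2021} into the comonoid axioms. I would paste these conditions and project onto their parameter components. Because composition of $2$-cells in $\Para_\act(\cC)$ is composition in $\cM$, because the structural naturality $2$-cells of $\Para(\eta)$ and $\Para(\mu)$ are the identity reparameterisations $\beta_{P,I}$ (again using strictness, per \cref{prop:para_pseudomonad}), and because whiskering by $\Para(T)$ leaves a reparameterisation unchanged, the two lax-unit conditions collapse to the counit laws $(\epsilon\otimes\id_P)\circ\delta = \id_P = (\id_P\otimes\epsilon)\circ\delta$ and the lax-associativity condition collapses to coassociativity $(\delta\otimes\id_P)\circ\delta = (\id_P\otimes\delta)\circ\delta$. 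The care required is to check that every $T$-, $\mu$-, and $\sigma$-decorated piece of the pasting diagrams lives in the fixed underlying $\cC$-component and thus contributes an identity on parameters, so that only the $\cM$-arrows $\epsilon$ and $\delta$ survive in the parameter projection; this is what makes the collapse exact rather than merely up to the $\cC$-data.

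Finally, the dual claim for a lax coalgebra of a $\Para(T)$-comonad follows by the same computation in the $2$-cell-dual setting. Reversing the structure $2$-cells reverses the defining triangles, but since reparameterisations are already contravariant in the parameter, the induced $\cM$-arrows again point $P\to I$ and $P\to P\otimes P$, so one recovers a comonoid and not a monoid. I would therefore conclude by remarking that dualising the $2$-cells and invoking the comonad coherence conditions produces the counit and coassociativity laws verbatim.
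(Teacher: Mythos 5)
Your proposal is correct and follows essentially the same route as the paper's own proof: unpack the lax-algebra $2$-cells as reparameterisations in $\cM$, identify them (using strictness and the trivial parameters of $\Para(\eta)$ and $\Para(\mu)$) with maps $!_P : P \to I$ and $\Delta_P : P \to P \otimes P$, and observe that the lax unity and associativity coherences project onto exactly the comonoid axioms in $\cM$. In fact you supply more of the bookkeeping than the paper does (contravariance of $2$-cells, invisibility of the strength $\sigma$ to the parameter component, whiskering by $\Para(T)$ acting as the identity on reparameterisations), where the paper simply calls this check ``relatively tedious, but straightforward.''
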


\begin{proof}
    We start by unpacking the data of the lax algebra 2-cells.

% https://q.uiver.app/#q=WzAsNyxbMCwyLCJYIl0sWzEsMCwiVChYKSJdLFsyLDIsIlgiXSxbNSwwLCJUKFQoWCkpIl0sWzUsMiwiVChYKSJdLFs3LDIsIlgiXSxbNywwLCJUKFgpIl0sWzIsMCwiKEksIFxcbGFtYmRhX1gpIiwwLHsibGV2ZWwiOjIsInN0eWxlIjp7ImhlYWQiOnsibmFtZSI6Im5vbmUifX19XSxbMiwxLCIoSSwgXFxldGFeVF9YIFxcY2lyYyB7XFxldGFee1xcYWN0fV9YfV57LTF9KSIsMix7ImxhYmVsX3Bvc2l0aW9uIjoxMH1dLFsxLDAsIihQLCBhKSIsMl0sWzMsNiwiKEksIG1feCkiXSxbMyw0LCIoUCwgVChhKSBcXGNpcmMgXFxzaWdtYV97UCxYfSkiLDJdLFs0LDUsIihQLCBhKSIsMl0sWzYsNSwiKFAsIGEpIl0sWzQsNiwiXFxkZWx0YV9QIiwwLHsic2hvcnRlbiI6eyJzb3VyY2UiOjIwLCJ0YXJnZXQiOjIwfSwibGV2ZWwiOjJ9XSxbNywxLCJcXGVwc2lsb25fUCIsMCx7InNob3J0ZW4iOnsic291cmNlIjoyMCwidGFyZ2V0IjoyMH19XV0=
\[\begin{tikzcd}[ampersand replacement=\&]
	\& {T(X)} \&\&\&\& {T(T(X))} \&\& {T(X)} \\
	\\
	X \&\& X \&\&\& {T(X)} \&\& X
	\arrow[""{name=0, anchor=center, inner sep=0}, "{(I, \lambda_X)}", Rightarrow, no head, from=3-3, to=3-1]
	\arrow["{(I, \eta^T_X \circ {\eta^{\act}_X}^{-1})}"'{pos=0.1}, from=3-3, to=1-2]
	\arrow["{(P, a)}"', from=1-2, to=3-1]
	\arrow["{(I, m_x)}", from=1-6, to=1-8]
	\arrow["{(P, T(a) \circ \sigma_{P,X})}"', from=1-6, to=3-6]
	\arrow["{(P, a)}"', from=3-6, to=3-8]
	\arrow["{(P, a)}", from=1-8, to=3-8]
	\arrow["{\delta_P}", shorten <=12pt, shorten >=12pt, Rightarrow, from=3-6, to=1-8]
	\arrow["{\epsilon_P}", shorten <=7pt, shorten >=7pt, Rightarrow, from=0, to=1-2]
\end{tikzcd}\]

We can see that they are given by two reparameterisations: $\epsilon_P : P \otimes I \to I$ and $\delta_P : I \otimes P \to P \otimes P$.
These uniquely determine the morphisms which we call $!_P : P \to I$ and $\Delta_P : P \to P \otimes P$ respectively.
To show that this is indeed a comonoid, we need to unpack the lax algebra laws.
We note that all the conditions unpack only to conditions on morphisms in $\cM$.
It is relatively tedious, but straightforward to check that these conditions ensure that $!_P$ and $\Delta_P$ satisfy the comonoid laws.
\end{proof}

\begin{corollary}
    The functor $\AlgLaxMnd{\Para(T)} \rightarrow \mathsf{CoMon}(\cC) \times \mathsf{Lax}(\rightarrow,\cC)$ which takes lax algebras for $\Para(T)$ to the pair of the underlying comonoid parameter and the $\cC$-morphism which interprets the parametric structure map is full-and-faithful.
\end{corollary}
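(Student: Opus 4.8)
The plan is to establish full-faithfulness hom-set by hom-set: fixing two lax $\Para(T)$-algebras $\mathbb{A} = (A,(P,a),\epsilon_A,\delta_A)$ and $\mathbb{B} = (B,(Q,b),\epsilon_B,\delta_B)$, it suffices to show that the functor restricts to a bijection between $\AlgLaxMnd{\Para(T)}(\mathbb{A},\mathbb{B})$ and the hom-set of the target between their images. By \cref{thm:parameters_lax_algebra_comonoids} these images are well defined: $P$ and $Q$ carry comonoid structures $(!_P,\Delta_P)$ and $(!_Q,\Delta_Q)$, and $a : P \act T(A) \to A$, $b : Q \act T(B) \to B$ are the $\cC$-arrows recorded in $\mathsf{Lax}(\rightarrow,\cC)$.

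First I would unpack the data of a lax algebra homomorphism $(f,\kappa) : \mathbb{A} \to \mathbb{B}$. Since $f$ is a $1$-morphism of $\Para(\cC)$ it is a pair $(R,\phi)$ with $\phi : R \act A \to B$, and $\kappa$ is a $2$-cell of $\Para(\cC)$ filling the lax-homomorphism square; exactly as the $2$-cells $\epsilon_A,\delta_A$ were resolved into reparameterisations in the proof of \cref{thm:parameters_lax_algebra_comonoids}, $\kappa$ resolves into an $\cM$-reparameterisation between the parameter objects of the two composites $b \circ T(f)$ and $f \circ a$, together with the $\cC$-equation witnessing it. The functor then reads off two independent pieces: the induced comonoid homomorphism on parameters, and the underlying $\cC$-arrow data of $\phi$ and $\kappa$ as a morphism of $\mathsf{Lax}(\rightarrow,\cC)$.

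Faithfulness follows immediately from this unpacking: every constituent of $(f,\kappa)$---the parameter $R$, the map $\phi$, and the reparameterisation comprising $\kappa$---is literally one of the two recorded components, so nothing is discarded and distinct homomorphisms have distinct images. For fullness I would take an arbitrary pair consisting of a comonoid homomorphism $(P,!_P,\Delta_P) \to (Q,!_Q,\Delta_Q)$ and a compatible morphism of $\mathsf{Lax}(\rightarrow,\cC)$, reassemble it into a candidate $(f,\kappa)$, and verify the lax algebra homomorphism coherence axioms of \citep[Def. 6.5.9]{johnson_2-dimensional_2021}. As in the theorem, each axiom unpacks to an equation among $\cM$-morphisms: the comonoid-homomorphism equations supply precisely the compatibility of $\kappa$ with $\epsilon_A,\delta_A,\epsilon_B,\delta_B$, while the $\mathsf{Lax}(\rightarrow,\cC)$ component supplies the $1$-dimensional structure-map square.

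The main obstacle is this last verification: showing that the lax-homomorphism coherence conditions impose \emph{no} constraint beyond ``comonoid homomorphism on the parameter'' together with ``lax square on the structure maps'', so that every compatible pair in the target genuinely lifts. This is the $2$-dimensional counterpart of the diagrammatic bookkeeping already carried out for \cref{thm:parameters_lax_algebra_comonoids}, and although each individual diagram chase is routine, it is here that all the actual content of the corollary resides; I would organise it by matching the unit coherence of $\kappa$ against the counit laws and the multiplication coherence against the comultiplication laws.
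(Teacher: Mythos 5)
Your outline follows the natural strategy (hom-wise unpacking, then coherence checking), and for what it is worth the paper offers nothing to compare against: its entire ``proof'' is the sentence ``The proof is formal and left to the reader.'' But your proposal has a genuine gap at its very first step: you never say what the functor does on \emph{morphisms}, and that is where all the difficulty lives. A lax algebra morphism $(f,\kappa) : \mathbb{A} \to \mathbb{B}$ has underlying 1-cell $f = (R,\phi)$ with an \emph{arbitrary} parameter object $R \in \cM$, and $\kappa$ resolves (as you correctly note) into a reparameterisation between the parameters of $b \circ \Para(T)(f)$ and $f \circ a$, i.e.\ an $\cM$-morphism $r : R \otimes P \to Q \otimes R$. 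This is \emph{not} a morphism $P \to Q$, so the ``induced comonoid homomorphism on parameters'' that your functor is supposed to read off does not exist in general. It exists precisely when $R$ is trivial (or at least carries a compatible counit): for $R = I$ the lax-morphism unit axiom of the cited Def.~6.5.9 reduces to $!_Q \circ r = !_P$ and the multiplication axiom to $\Delta_Q \circ r = (r \otimes r) \circ \Delta_P$, i.e.\ $r$ \emph{is} a comonoid homomorphism. Your instinct to match unit coherence with counit laws and multiplication coherence with comultiplication laws is exactly right, but only after this reduction, which you never perform. (Incidentally, the comonoid lives in $\cM$, not $\cC$; the corollary's target should be read as $\mathsf{CoMon}(\cM) \times \mathsf{Lax}(\rightarrow,\cC)$.)

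This unresolved $R$ breaks both halves of your argument. Faithfulness does not ``follow immediately'': a morphism of the target is a pair consisting of one $\cM$-arrow and one square in $\cC$, so it has no slot in which $R$, $\phi : R \act A \to B$, and $r$ can all be ``literally recorded''; the functor necessarily forgets or transforms data, and injectivity on hom-sets then needs an actual argument (note that knowing $\phi$ as an abstract $\cC$-arrow with domain $R \act A$ does not determine the object $R$ of $\cM$). Fullness is likewise underdetermined: given a comonoid homomorphism $h : P \to Q$ and a square in $\cC$, your ``reassembly'' must choose a parameter for the candidate 1-cell (presumably $R = I$), and you then owe an argument that \emph{every} lax algebra morphism in the source arises this way---otherwise you have produced a section of the functor on a subcategory of the domain, not fullness of the stated functor. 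The repair is to first prove (or build into the definition of the domain category) that lax algebra morphisms of $\Para(T)$-algebras can be taken to have trivially parametric underlying 1-cells; with that lemma in place, your coherence-matching plan does go through.
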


\begin{proof}
    The proof is formal and left to the reader.
\end{proof}

\begin{conjecture}\label{equivalence_of_lax_algebras}
    Given a cartesian monoidal category $(\cM,\times,1)$, an $\cM$-actegory $(\cC,\act)$, and a strong endofunctor $F$ on $\cC$ such that $F$ and $\cC$ satsify the hypotheses of \cref{Kelly_unified_applied_to_well_pointed_endo}, then the assignment of a lax algebra $(X, (P, f))$ to
    \[
    (X,((P,!_P:P \rightarrow 1,\Delta_P : P \rightarrow P \times P),\underrightarrow{\mathsf{laxlim}}((P,f)\circ \Para(F)(P,f)\circ \cdots \circ \Para(F)^\alpha(P.f)):F^\kappa X \rightarrow X))\] defines an equivalence of categories \[\AlgLaxEndo{\Para(F)} \rightarrow \AlgLaxMnd{F^\kappa}\] 
\end{conjecture}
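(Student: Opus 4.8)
The plan is to reduce this 2-categorical statement to the 1-categorical algebraically free monad equivalence of \cref{Kelly_unified_applied_to_well_pointed_endo}, augmented by a systematic treatment of the parameter object. The structural fact I would exploit is that, because $\cM$ is \emph{cartesian}, every object $P\in\cM$ carries a \emph{unique} comonoid structure with counit $!_P : P \to 1$ and comultiplication the diagonal $\Delta_P : P \to P\times P$; by \cref{thm:parameters_lax_algebra_comonoids} this is precisely the data that the 2-cells $\epsilon$ and $\delta$ of a lax $\Para(T)$-algebra must encode. Thus the 2-dimensional content of a lax algebra is not free data but is \emph{forced} by the choice of parameter, which is exactly what makes a correspondence with the parameter-free endofunctor algebras possible. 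A second essential ingredient is that $\gamma : \cC \to \Para(\cC)$ preserves connected colimits, so that the transfinite $\kappa$-directed colimit computing $F^\kappa$ (a connected colimit) agrees whether formed in $\cC$ or in $\Para(\cC)$; consequently the 2-monad induced on $\Para(\cC)$ by the strong monad $F^\kappa$ (via \cref{prop:para_pseudomonad}) coincides with the transfinite iteration of the 2-endofunctor $\Para(F)$, after first passing to the well-pointed $F+\id_\cC$ as in \cref{lemma:endo_alg_to_pointed_endo_alg_iso}.

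First I would construct the forward functor explicitly. Given a $\Para(F)$-algebra $(X,(P,f))$ with $f : P\act F(X)\to X$, I iterate the structure map: the composite $(P,f)\circ \Para(F)(P,f)\circ\cdots\circ\Para(F)^\alpha(P,f)$ lives in $\Para(\cC)(F^{\alpha+1}X, X)$ but accumulates $\alpha$-many tensor copies of $P$ in its parameter. Here the cartesian structure does the real work: the comultiplication $\Delta_P$ reparameterises $P^{\otimes n}$ back to $P$, so all stages of the iteration can be presented over the single parameter $P$, connected by reparameterisation 2-cells. Taking the lax colimit $\underrightarrow{\mathsf{laxlim}}$ of this diagram yields a single structure map $F^\kappa X \to X$ over parameter $P$ together with 2-cells, and I would then verify that these 2-cells are precisely the $\epsilon_X$ and $\delta_X$ of a lax algebra for the 2-monad $\Para(F^\kappa)$, with the comonoid laws for $(P,!_P,\Delta_P)$ supplying exactly the lax unity and lax associativity coherence conditions of \citet[Eq. 6.5.6 and 6.5.7]{johnson_2-dimensional_2021} (reading \cref{thm:parameters_lax_algebra_comonoids} in reverse).

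To exhibit the inverse and conclude that this is an equivalence, I would restrict a lax $\Para(F^\kappa)$-algebra along the canonical unit inclusion $\Para(F)\Rightarrow\Para(F^\kappa)$ induced by the inclusion of $\id_\cC$ into the transfinite colimit (cf. \cref{lem:transfinite_monad_from_well_pointed_endo_construction}), recovering a $\Para(F)$-algebra with unchanged parameter. The two round-trips are naturally isomorphic by the same argument proving the 1-categorical equivalence: on carriers and underlying $\cC$-morphisms it is verbatim Kelly's construction, while on parameters it is the identity. For morphisms, a lax algebra homomorphism $(g,\kappa)$ on the $\Para(F)$-side is determined by, and determines, its behaviour on each finite stage of the iteration, so full-faithfulness follows stagewise; the full-and-faithfulness of the functor into $\mathsf{CoMon}(\cC)\times\mathsf{Lax}(\rightarrow,\cC)$ recorded in the corollary to \cref{thm:parameters_lax_algebra_comonoids} handles the comonoid component.

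The hardest part will be making $\underrightarrow{\mathsf{laxlim}}$ precise and establishing that it enjoys the correct universal property. Unlike the strict colimit defining $F^\kappa$ in $\cC$, the parametric iteration genuinely requires 2-cells to glue stages whose parameters differ by reparameterisation, so one must set up the transfinite construction in the \emph{lax}-colimit sense and check stabilisation at stage $\kappa$ in this weaker sense. I expect the coherence bookkeeping---verifying that the lax unity and associativity axioms hold for the colimiting 2-cells compatibly across all limit ordinals---to be the genuine technical obstacle, and the reason the statement is posed as a conjecture rather than a theorem. The cartesian hypothesis on $\cM$ is what keeps this tractable, since it rigidifies the comonoid data and removes every choice from the 2-dimensional structure.
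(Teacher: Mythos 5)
The paper does not prove this statement at all---it is deliberately posed as a conjecture, so there is no proof of record to compare yours against; the only question is whether your argument would close it, and it would not, essentially by your own admission. Your structural observations are correct and amount to an explanation of why the conjecture is stated the way it is: in a cartesian $\cM$ every object carries a unique comonoid structure (counit $!_P$, comultiplication $\Delta_P$), so by \cref{thm:parameters_lax_algebra_comonoids} the 2-cell data of a lax algebra for the induced 2-monad is property-like rather than free structure, which is what makes an equivalence with the 2-cell-free category $\AlgLaxEndo{\Para(F)}$ plausible in the first place. But the heart of the proof is exactly the step you defer: $\underrightarrow{\mathsf{laxlim}}$ is not a defined operation anywhere in the paper (its precise meaning is part of what the conjecture asks for), and your sketch does not say in which 2-categorical sense this colimit is taken, why the transfinite diagram of reparameterised iterates---whose comparison squares commute only up to the 2-cells that copy and delete $P$---admits a colimiting cocone, why it stabilises at stage $\kappa$, or why the resulting 2-cells satisfy lax unity and associativity at limit ordinals. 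Calling this ``the genuine technical obstacle'' is an accurate diagnosis, but it means you have produced a plan for a proof, not a proof.

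Beyond that central gap, three steps you treat as routine are not. First, the theorem stating that $\gamma$ preserves connected colimits carries the hypothesis that $X \act (-) : \cC \to \cC$ preserves connected colimits; this does not follow from the hypotheses of \cref{Kelly_unified_applied_to_well_pointed_endo}, so it must either be added to the conjecture or verified for the actegories in play. Second, applying \cref{prop:para_pseudomonad} to $F^\kappa$ requires $F^\kappa$ to be an actegorical \emph{strong} monad, i.e.\ that the strength of $F$ propagates through Kelly's transfinite construction; the paper only exhibits strengths for particular free monads (\cref{ex:strengths_endofunctors_free_monads}), and no general lemma of this kind is available---nor is the parametric analogue of \cref{lemma:endo_alg_to_pointed_endo_alg_iso} that your reduction to the well-pointed case silently invokes. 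Third, your inverse functor restricts along what you call the ``unit inclusion $\Para(F) \Rightarrow \Para(F^\kappa)$ induced by the inclusion of $\id_\cC$ into the colimit'': the inclusion of $\id_\cC$ is the monad unit $\id_\cC \Rightarrow F^\kappa$ and has the wrong type; what you need is the stage-one inclusion $F \Rightarrow F + \id_\cC \Rightarrow F^\kappa$. Relatedly, the claims that the round-trips are ``verbatim Kelly's construction'' and that full-faithfulness ``follows stagewise'' are assertions by analogy: in the lax setting every comparison square carries a nontrivial reparameterisation 2-cell, so the 1-categorical argument does not transfer verbatim, and checking that those 2-cells compose coherently across the transfinite induction is precisely the open content of the conjecture.
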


\iffalse
This is important! 
As 2-cells in $\Para$ model relationships between the weights, this means that a lax algebra contains the data of how weights are connected.
The lax algebra structure ensures the parameter is a comonoid, i.e.\ is an interepretation of duplication and deletion in an arbitrary monoidal category.

Parameterising the usual action monad suggests a way to learn the action in a neural network at the same time as its weights.
Indeed, this has been done recently in (ref), and we hypothesise the following definition captures

\begin{example}[Learning equivariance example]
  \todo[inline]{cite}
\end{example}
\fi

\iffalse
Now we have defined the cells of recurrent, recursive, etc. networks. Here, we'll see that nearly the same transfinite composition story which replaced algebras for endofunctors with algebras for monads, when lifted to 2-dimensional categories and made lax, recovers recurrent, recursive, etc. neural \emph{networks}. In particular, we will see the weight sharing of such networks as the structure of a comonoid on the parameters for the structure map of lax algebras.

Now, since the construction for \cref{def:alg_free_monad} is a connectged colimit, it follows that the 2-functor $\gamma: \cC \rightarrow \Para(\cC)$ preserves them. 
\fi

\section{Weight Tying Examples}

\iffalse
\begin{definition}[Initial object]
    \label{def:initial_object}
    An object $I$ in a category $\cC$ is called \newdef{initial} if for every $X \in \cC$ it naturally holds that $\cC(I, X) \cong 1$, meaning that there is only one map of type $I \to X$.
\end{definition}

\begin{definition}[Quasi-initial object]
    \label{def:quasi_initial_object}
    An object $I$ in a 2-category $\cB$ is \newdef{quasi-initial} if for every object $X$ there is a laxly natural adjunction $R : \cB(I, X) \leftrightarrows \NamedCat{1} : L$ where $\NamedCat{1}$ is the terminal category in $\Cat$.
\end{definition}

\begin{corollary}
    If $I$ is quasi-initial in $\cB$, then for every $X: \cB$ the hom-category $\cB(I, X)$ has a strict initial object.
\end{corollary}

\begin{definition}[Terminal object]
    \label{def:terminal_object}
    An object $T$ in a category $\cC$ is called \newdef{terminal} if for every $X \in \cC$ it naturally holds that $\cC(X, T) \cong 1$, meaning that there is only one map of type $X \to T$.
\end{definition}
\fi

\subsection{Examples from Geometric Deep Learning}\label{app:gdl_examples}

\begin{example}[Linear equivariant layers for a pair of pixels]
Consider the category $\Vect$ of finite-dimensional vector spaces and linear maps.
For simplicity, we will assume that the carrier set for our data is $R^{\Z_2}$, which is a pair of pixels. Consider a linear endofunction on such data, $f_{\bf W} : \R^{\Z_2} \to \R^{\Z_2}$. It is well known that this function can be represented as a multiplication with a $2\times 2$ matrix ${\bf W} = \begin{bmatrix}
        w_1 & w_3 \\
        w_2 & w_4
    \end{bmatrix}$.

Now consider the group of 1D translations $(\Z_2, +, 0)$---which in this case amounts to \emph{pixel swaps}---and the induced action $\act$ on $\R^{\Z_2}$.
If $f:\R^{\Z_2} \rightarrow \R^{\Z_2}$ is equivariant with respect to $\act$, then via \cref{eq:equivariance}, for any input $[x_1, x_2]\in\R^{\Z_2}$ it must hold that
\begin{equation}
    \begin{bmatrix}
        w_1x_2 + w_2x_1 \\
        w_3x_2 + w_4x_1
    \end{bmatrix}
    =
    \begin{bmatrix}
        w_3x_1 + w_4x_2 \\
        w_1x_1 + w_2x_2 
    \end{bmatrix}
\end{equation}

This implies that $w_3 = w_2$ and $w_4 = w_1$, meaning every $\bf W$ is of the form
 
\[
\begin{bmatrix}
    \textcolor{blue}{w_1} & \textcolor{red}{w_2} \\
    \textcolor{red}{w_2} & \textcolor{blue}{w_1}
\end{bmatrix}
\]

where the weight tying makes the matrix a symmetric one. Any neural network $f_{\bf W}$ satisfying this constraint will be a \emph{linear translation equivariant layer} over $\Z_2$, and can be used as a building block to construct geometric deep learning architectures.
\end{example}
\begin{remark}[Circulants and CNNs]
Note that this concept generalises to larger input domains (e.g. $\mathbb{R}^{\Z_k}$ for $k > 2$). Generally, it is a well-known fact in signal processing that, for $f_{\bf W}$ to satisfy a linear translation equivariance constraint, $\mathbf{W}$ must be a \emph{circulant} matrix. Circulant matrices are known in neural networks as \emph{convolutional layers}, the essence of modern CNNs \citep{fukushima1983neocognitron,lecun1998gradient}.
\end{remark}

\begin{example}[Invariant maps]
  \label{ex:invariance}
  Invariant maps are also $(G \times -)$-algebra homomorphisms where the codomain is a trivial group action.
  Specifically, setting the domain to any of the group actions from \cref{eq:equivariance}, the corresponding induced commutative diagram is below:
  % https://q.uiver.app/#q=WzAsNCxbMCwwLCJHIFxcdGltZXMgXFxSXntcXFpfdyBcXHRpbWVzIFxcWl9ofSJdLFswLDEsIlxcUl57Wl93IFxcdGltZXMgXFxaX2h9Il0sWzEsMCwiRyBcXHRpbWVzICBcXFJee1xcWl93IFxcdGltZXMgXFxaX2h9Il0sWzEsMSwiXFxSXntcXFpfdyBcXHRpbWVzIFxcWl9ofSJdLFswLDEsIlxcYWN0dCIsMl0sWzIsMywiXFxwaV8yIl0sWzEsMywiZiIsMl0sWzAsMiwiRyBcXHRpbWVzIGYiXV0=
\[\begin{tikzcd}[ampersand replacement=\&]
	{G \times \R^{\Z_w \times \Z_h}} \& {G \times  \R^{\Z_w \times \Z_h}} \\
	{\R^{Z_w \times \Z_h}} \& {\R^{\Z_w \times \Z_h}}
	\arrow["\act"', from=1-1, to=2-1]
	\arrow["{\pi_2}", from=1-2, to=2-2]
	\arrow["f"', from=2-1, to=2-2]
	\arrow["{G \times f}", from=1-1, to=1-2]
\end{tikzcd}\]

 Elementwise, the commutativity of that diagram unpacks to the equation 
 \[
     f(g \act x) = f(x)
 \]
 The translation example, for instance, recovers the equation $f(((i', j') \act x)(i, j)) = (i', j') \act f(x)(i, j)$ which reduces to  $f(x(i - i', j - j')) = f(x)(i, j)$.
 Intuitively, it states that, for any displacement $(i', j')$, the result of translating the input by $(i', j')$ and then applying the function $f$ is the same as just applying $f$.
\end{example}

We can run the analogous calculation with weight sharing here.
We consider the same linear endofunction $f_{\bf W}\in\R^{\Z_2} \to \R^{\Z_2}$ on a pair of pixels (represented as a matrix $W = \begin{bmatrix}
        w_1 & w_3 \\
        w_2 & w_4
    \end{bmatrix}$), and the same group of translations $(\Z_2, +, 0)$ on the domain, but this time we consider the identity endomorphism on the codomain.
If $f_{\bf W}$ is invariant to $\act$, then \cref{ex:invariance} for any input $[x_1, x_2]\in\R^{\Z_2}$ it has to hold that
\begin{equation}
    \begin{bmatrix}
        w_1x_2 + w_2x_1 \\
        w_3x_2 + w_4x_1
    \end{bmatrix}
    =
    \begin{bmatrix}
        w_1x_1 + w_2x_2 \\
        w_3x_1 + w_4x_2
    \end{bmatrix}
\end{equation}

implying $w_1 = w_2$ and $w_3 = w_4$, meaning the equivariance induced a particular weight tying scheme
 
\[
\begin{bmatrix}
    \textcolor{blue}{w_1} & \textcolor{red}{w_3} \\
    \textcolor{blue}{w_1} & \textcolor{red}{w_3}
\end{bmatrix}
\]

in which the matrix has shared rows. Note that this implies that each of the pixel values in the output of $f_{\bf W}$ will be the same, and hence this layer could also be represented as just a single dot product with $\begin{bmatrix}\textcolor{blue}{w_1} & \textcolor{red}{w_3}\end{bmatrix}$, \emph{eliminating} the grid structure in the original set. Such a layer would not be an endofunction on $\R^{\Z_2}$, however, which was our initial space of exploration.

\subsection{Examples from Automata Theory}
\label{ex:automata_theory}

\subsubsection{Streams}

\begin{example}[Streams]
  \label{ex:stream_as_coalgebra}
  Let $O$ be a set, thought of as the set of outputs.
  Consider the endofunctor from $O \times - : \Set \to \Set$.
  Then the set $\Stream{O}$ of streams\footnote{An infinite sequence of outputs, isomorphic to the set $(\N \to O)$ of functions from the natural numbers into $O$.} with outputs $O$, together with the map $\product{\StreamOutput}{\StreamNext} : \Stream{O} \to O \times \Stream{O}$ forms a coalgebra of this endofunctor.
  They have a representation as the following datatype
\begin{code}
data Stream o = MkStream {
  output :: o
  next :: Stream o
}
\end{code}
This datatype describes streams \emph{coinductively}, as something from which we can always extract an output, and another stream.
In \cref{ex:unfolding_rnn_cell} we will see how this will be related to unfolding recurrent neural networks.
\end{example}

\begin{example}[Unfolds to streams as coalgebra homomorphisms]
  \label{ex:streams_terminal}
  Consider the endofunctor $(O \times -)$ from \cref{ex:stream_as_coalgebra}, and a coalgebra homomorphism from any other $(O \times -)$-coalgebra $(X, \product{o}{n})$ into $(\Stream{O}, \product{\StreamOutput}{\StreamNext}$:

% https://q.uiver.app/#q=WzAsNCxbMCwwLCJYIl0sWzAsMSwiTyBcXHRpbWVzIFgiXSxbMiwwLCJcXFN0cmVhbXtPfSJdLFsyLDEsIk8gXFx0aW1lcyBcXFN0cmVhbXtPfSJdLFswLDEsIlxcbGFuZ2xlIG8sIG4gXFxyYW5nbGUiLDJdLFswLDIsImZfe28sIG59Il0sWzEsMywiTyBcXHRpbWVzIGZfe28sIG59IiwyXSxbMiwzLCJcXGxhbmdsZSBcXFN0cmVhbU91dHB1dCwgXFxTdHJlYW1OZXh0IFxccmFuZ2xlIl1d
\[\begin{tikzcd}[ampersand replacement=\&]
	X \&\& {\Stream{O}} \\
	{O \times X} \&\& {O \times \Stream{O}}
	\arrow["{\langle o, n \rangle}"', from=1-1, to=2-1]
	\arrow["{f_{o, n}}", from=1-1, to=1-3]
	\arrow["{O \times f_{o, n}}"', from=2-1, to=2-3]
	\arrow["{\langle \StreamOutput, \StreamNext \rangle}", from=1-3, to=2-3]
\end{tikzcd}\]

Then the map $f_{o, n} : X \to \Stream{O}$ is necessarily a \emph{unfold} to a stream, a concept from functional programming describing how a stream of vaues can be obtained from a single value.
It is implemented by \emph{corecursion} on the input:

\begin{code}
@$f_{o, n}$@ :: x -> Stream o
@$f_{o, n}$@ x = MkStream (o x) (@$f_{o, n}$@ (n x))
\end{code}

This corecursion is structural in nature, meaning it satisfies the following two equations which arise by unpacking the algebra homomorphism equations elementwise:
\begin{align}
  o(x) = \StreamOutput(f_{o, n}(x)) \label{eq:stream_equation1}\\
  f_{o, n}(n(x)) = \StreamNext(f_{o, n}(x)) \label{eq:stream_equation2}
\end{align}

Here \cref{eq:stream_equation1} tells us that the output of the stream produced by $f_{o, n}$ at $x$ is $o(x)$, and \cref{eq:stream_equation2} tells us that the rest of the stream is the stream produced by $f_{o, n}$ at $x$ is $f_{o, n}(n(x))$.
\end{example}

\begin{remark}
  \label{rem:streams_terminality}
  Analogous to \cref{rmk:freemonoid}, there can only ever be \emph{one} unfold of this type.
  This is because streams are a terminal object in the category of $(O \times -)$-coalgebras.
\end{remark}

We can study the weight sharing induced by streams.
Consider the category $\Vect$ and the coreader comonad $\R \times -$ given by the output set $\R$.
Fix a coalgebra $(\R, \product{o}{n})$, and represent the coalgebra map with scalars $w_o$ and $w_n$, denoting the output and the next state, respectively.
Then the universal coalgebra homomorphism is a linear map $f_{o, n} : \R \to \Stream{\R}$ satisfying \cref{eq:stream_equation1,eq:stream_equation2}.
If we represent this as an infinite-dimensional matrix
\[
\begin{bmatrix}
      w_1 & w_2 & w_3 & w_4 & \dots
\end{bmatrix}
\]

then the induced weight sharing scheme removes any degrees of freedom: the matrix is completely determined by $w_o$ and $w_n$, and is of the form:
\[
\begin{bmatrix}
      \textcolor{blue}{w_o} & \textcolor{red}{w_n}\textcolor{blue}{w_o} & \textcolor{red}{w_n^2}\textcolor{blue}{w_o} & \textcolor{red}{w_n^3}\textcolor{blue}{w_o} & \dots
\end{bmatrix}
\]

% \begin{example}[Cofree comonad on $O \times -$]
%   \label{ex:cofree_monad_on_streams}
%   Cofree monad on $O \times -$ is $\Stream{O \times -} : \Set \to \Set$, the comonad mapping a set $Z$ to the set of streams with outputs in $O \times Z$.
% \end{example}

\subsubsection{Moore Machines.}

\begin{example}[Moore machines]
  \label{ex:moore_machine_as_coalgebra}
  Let $O$ and $I$ be sets, thought of as sets of outputs and inputs, respectively.
  Consider the endofunctor $ O \times (I \to -) : \Set \to \Set$.
  Then the set $\Moore{O}{I}$ of Moore Machines with $O$-labelled outputs and $I$-labelled inputs together with the map $\product{\StreamOutput}{\StreamNext} : \Moore{O}{I} \to O \times (I \to \Moore{O}{I})$ forms a coalgebra for this endofunctor.
  They can be represented as the following datatype.
  \begin{code}
    data Moore o i = MkMoore {
      output :: o,
      nextStep :: (i -> Moore o i)
    }
  \end{code}
  Like with Mealy machines, this description is coinductive.
  From a Moore machine we can always extract an output and a function which given an input produces another Moore machine.
  In \cref{ex:moore_cell} we will see how this will be related to general recurrent neural networks of a particular form.
\end{example}

\begin{example}[Unfolds to Moore machines are terminal $(O \times -)$-coalgebras]
  \label{ex:moore_terminal}
  Consider the endofunctor $O \times (I \to -)$ from \cref{ex:moore_machine_as_coalgebra}, and any other $(O \times -)$-coalgebra $(X, \product{o}{n})$ to $(\Moore{O}{I}, \product{\StreamOutput}{\StreamNext}$:

% https://q.uiver.app/#q=WzAsNCxbMCwwLCJYIl0sWzAsMSwiTyBcXHRpbWVzIChJIFxcdG8gWCkiXSxbMiwwLCJcXE1vb3Jle099e0l9Il0sWzIsMSwiTyBcXHRpbWVzIChJIFxcdG8gXFxNb29yZXtPfXtJfSkiXSxbMCwxLCJcXHByb2R1Y3R7b317bn0iLDJdLFsyLDMsIlxccHJvZHVjdHtcXFN0cmVhbU91dHB1dH17XFxTdHJlYW1OZXh0fSJdLFswLDIsImZfe28sIG59Il0sWzEsMywiTyBcXHRpbWVzIChJIFxcdG8gZl97bywgbn0pIiwyXV0=
\[\begin{tikzcd}[ampersand replacement=\&]
	X \&\& {\Moore{O}{I}} \\
	{O \times (I \to X)} \&\& {O \times (I \to \Moore{O}{I})}
	\arrow["{\product{o}{n}}"', from=1-1, to=2-1]
	\arrow["{\product{\StreamOutput}{\StreamNext}}", from=1-3, to=2-3]
	\arrow["{f_{o, n}}", from=1-1, to=1-3]
	\arrow["{O \times (I \to f_{o, n})}"', from=2-1, to=2-3]
\end{tikzcd}\]
Then the map $f_{o, n}$ is a necessarily an \emph{unfold} to a Moore machine, a function describing how a Moore machine is obtained from a single value.
It is a corecursive function as below:

\begin{code}
@$f_m$@ :: x -> Moore o i
@$f_m$@ x = MkMoore (@$m_1$@ x) (@$\lambda$@i @$\mapsto$@ @$f_m$@ (@$m_2$@ x i))
\end{code}

which is structural in nature, meaning it satisfies the following two equations which arise by unpacking the coalgebra homomorphism equations elementwise:
\begin{align}
  o(x) = \StreamOutput(f_{o, n}(x)) \label{eq:moore_equation1}\\
  f_{o, n}(n(x)(i)) = \StreamNext(f_{o, n}(x))(i) \label{eq:moore_equation2}
\end{align}
Here \cref{eq:moore_equation1} tells us that the output of the Moore machine produced by $f_{o, n}$ at state $x$ is given by the output of $n$ at state $x$, and \cref{eq:moore_equation2} tells us that the next Moore machine produced at $x$ and $i$ is the one produced by $f_{o, n}$ at $n(x)(i)$.
\end{example}

% \begin{example}[Cofree comonad on $O \times (I \to -)$]
%   \label{ex:cofree_monad_on_moore}
%   Cofree monad on $O \times (I \to -)$ is $\Moore{O \times -}{I} : \Set \to \Set$, the comonad mapping a set $Z$ to the set of Moore machines with $O \times $-labelled outputs, and $I$-labelled inputs.
% \end{example}

\section{Parametric 2-endofunctors and their Algebras}
\label{sec:parametric_endofunctor_algebras}

Just like before, often at our disposal we will have a more minimal structure.
Instead of requiring a strong 2-monad, we can instead require merely a strong 2-endofunctor.
The data of an algebra for a 2-endofunctor will be the same as the algebra for an endofunctor \cref{def:algebra_for_endofunctor}.

\begin{example}[Folding RNN cell]
  \label{ex:folding_rnn_cell}
  Consider the endofunctor $1 + A \times -$ from \cref{ex:list_as_algebra}.
  Via strength from \cref{ex:strong_endofunctors} we can form the 2-endofunctor $\Para(1 + A \times -) : \Para(\Set) \to \Para(\Set)$.
  Then a folding recurrent neural network arises as its algebra.

  More concretely, an algebra here consists of the carrier set $S$ (name suggestively chosen to denote \emph{hidden state}) and a parametric map $(P, \foldingRecurrentCell)\in\Para(\Set)(1 + A \times S, S)$.
  Via the isomorphism $P \times (1 + A \times S) \cong P + P \times A \times S$ we can break $\foldingRecurrentCell$ into two pieces: the choice of the initial hidden state $\foldingRecurrentCell_0 : P \to X$ and the folding recurrent neural network cell $\foldingRecurrentCell_1 : P \times A \times X \to X$, as shown in the figure below.
  \begin{figure}[h]
      \begin{subfigure}[c]{0.5\textwidth}
        % https://q.uiver.app/#q=WzAsMixbMCwwLCIxICsgQSBcXHRpbWVzIFMiXSxbMCwxLCJTIl0sWzAsMSwiKFAsIFxccm5uKSJdXQ==
      \[\begin{tikzcd}[ampersand replacement=\&]
      	{1 + A \times S} \\
      	S
      	\arrow["{(P, \foldingRecurrentCell)}", from=1-1, to=2-1]
      \end{tikzcd}\]
      \end{subfigure}%
      \hfill
      \begin{subfigure}[c]{0.5\textwidth}
        \scaletikzfig[0.8]{folding_rnn_cell}
        \label{fig:folding_rnn_cell}
      \end{subfigure}%
      \caption{An algebra for $\Para(1 + A \times -)$ consists of a recurrent neural network cell and an initial hidden state.}
      \label{fig:folding_rnn_cell_algebra}
    \end{figure}
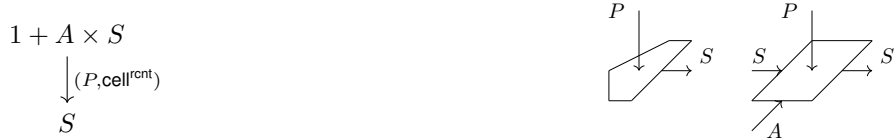
  We will see how iterating this construction will produce a folding recurrent neural network which consumes a sequence of inputs and iteratively updates its hidden state.\footnote{We note here that the hidden state is allowed to depend on the parameter, something which is not possible in the usual definition of a recurrent neural network cell.}
\end{example}

\begin{example}[Recursive NN cell]
  \label{ex:recursive_cell}
  Consider the endofunctor $A + (-)^2$ from \cref{ex:tree_as_algebra}.
  Via strength from \cref{ex:strong_endofunctors} we can form the 2-endofunctor $\Para(A + (-)^2)$ on $\Para(\Set)$.
  Then a recursive neural network arises as its algebra.

  More concretely, an algebra here consists of the carrier set $S$ and a parametric map $(P, \recursiveCell)\in\Para(\Set)(A + S^2, S)$.
  Via the isomorphism $P \times (A + S^2) \cong P + P \times A \times S^2$ we can break $\recursiveCell$ into two pieces: the choice of the initial hidden state $\recursiveCell_0 : P \to S$ and the recursive neural network cell $\recursiveCell_1 : P \times A \times S^2 \to S$, as shown in the figure below.

  \begin{figure}[h]
      \begin{subfigure}[c]{0.5\textwidth}
        % https://q.uiver.app/#q=WzAsMixbMCwwLCJBICsgU14yIl0sWzAsMSwiUyJdLFswLDEsIihQLCBcXFJlY3Vyc2l2ZUNlbGwpIl1d
    \[\begin{tikzcd}[ampersand replacement=\&]
    	{A + S^2} \\
    	S
    	\arrow["{(P, \recursiveCell)}", from=1-1, to=2-1]
    \end{tikzcd}\]
      \end{subfigure}%
      \hfill
      \begin{subfigure}[c]{0.5\textwidth}
        \scaletikzfig[0.8]{recursive_cell_lab-leaves_no-nodes}
        \label{fig:recursive_cell}
      \end{subfigure}%
      \caption{An algebra for $\Para(A + (-)^2)$ consists of a recursive neural network cell and an initial hidden state.}
      \label{fig:recursive_cell_algebra}
    \end{figure}
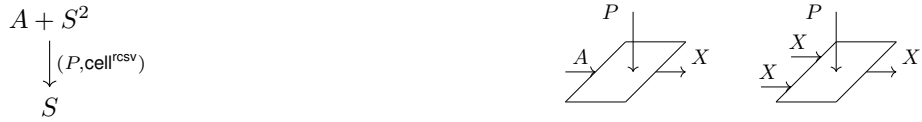
\end{example}
\begin{example}[Unfolding RNN cell]
  \label{ex:unfolding_rnn_cell}
  Consider the endofunctor $O \times -$ from \cref{ex:stream_as_coalgebra}.
  Via strength $\sigma_{P, X} :  P \times (O \times X) \xrightarrow{\cong} O \times P \times X$ we can form the 2-endofunctor $\Para(O \times -) : \Para(\Set) \to \Para(\Set)$.
  Then an unfolding recurrent neural network arises as its coalgebra.

  More concretely, a coalgebra here consists of the carrier set $S$, and a parametric map $(P, \unfoldingRecurrentCell)\in\Para(\Set)(S, O \times S)$.
  Here $\unfoldingRecurrentCell$ consists of maps $\cell_o : P \times S \to O $ which computes the output and $\cell_n : P \times S \to S$ which computes the next state.\footnote{By universal property of the product we can treat them as one cell of type $P \times S \to O \times S$ or as two cells.}
  \begin{figure}[h]
      \begin{subfigure}[c]{0.5\textwidth}
        % https://q.uiver.app/#q=WzAsMixbMCwxLCJPIFxcdGltZXMgUyJdLFswLDAsIlMiXSxbMSwwLCIoUCwgXFx1bmZvbGRpbmdSZWN1cnJlbnRDZWxsKSJdXQ==
       \[\begin{tikzcd}[ampersand replacement=\&]
       	S \\
       	{O \times S}
       	\arrow["{(P, \unfoldingRecurrentCell)}", from=1-1, to=2-1]
       \end{tikzcd}\]
      \end{subfigure}%
      \hfill
      \begin{subfigure}[c]{0.5\textwidth}
        \scaletikzfig[0.8]{unfolding_recurrent_cell}
        \label{fig:unfolding_rnn_cell}
      \end{subfigure}%
      \caption{An algebra for the $\Para(O \times -)$ 2-endofunctor consists of an unfolding recurrent neural network cell.}
      \label{fig:unfolding_rnn_cell_algebra}
    \end{figure}
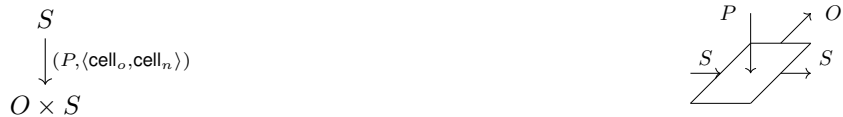
  We will see how iterating this construction will produce an unfolding recurrent neural network which given a starting state produces a stream of outputs $O$. 
\end{example}

\begin{example}[Mealy machine cell / Full RNN cell]
  \label{ex:mealy_cell}
  Consider the endofunctor $I \to O \times -$ from \cref{ex:mealy_machine_as_coalgebra}.
  Via strength from \cref{ex:strong_endofunctors} we can form the 2-endofunctor $\Para(I \to O \times -) : \Para(\Set) \to \Para(\Set)$.
  Then a Mealy machine cell arises as its coalgebra.

  More concretely, a coalgebra here consists of the carrier set $S$ and a parametric map $(P, \mealyCell)\in\Para(\Set)(S, I \to O \times S)$.
  Here $\mealyCell$ can be thought of as a map $P \times S \to I \to O \times S$, as shown in the figure below.
  \begin{figure}[h]
      \begin{subfigure}[c]{0.5\textwidth}
        % https://q.uiver.app/#q=WzAsMixbMCwxLCIoSSBcXHRvIE8gXFx0aW1lcyBTKSJdLFswLDAsIlMiXSxbMSwwLCIoUCwgXFxtZWFseUNlbGwpIl1d
      \[\begin{tikzcd}[ampersand replacement=\&]
      	S \\
      	{(I \to O \times S)}
      	\arrow["{(P, \mealyCell)}", from=1-1, to=2-1]
      \end{tikzcd}\]
      \end{subfigure}%
      \hfill
      \begin{subfigure}[c]{0.5\textwidth}
        \scaletikzfig[0.8]{mealy_cell}
        \label{fig:mealy_cell}
      \end{subfigure}%
      \caption{An algebra for $\Para((I \to O \times -)$ 2-endofunctor consists of an object $S$ and a map $f : P \times S \times I \to O \times S$ which we have taken the liberty to uncurry.}
      \label{fig:mealy_cell_algebra}
    \end{figure}
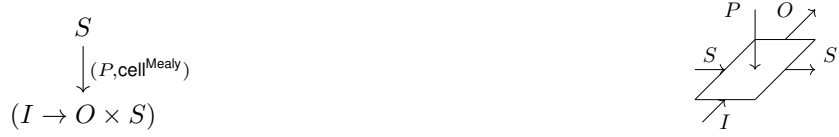
    We interpret it as a full recurrent neural network consuming a hidden state $S$, input $I$ and producing an output $O$ and an updated hidden state $S$.
\end{example}
This suggests that recurrent neural networks can be thought of as learnable Mealy machines, a perspective seldom advocated for in the literature.
\begin{example}[Moore machine cell]
  \label{ex:moore_cell}
  Consider the endofunctor $O \times (I \to -)$ from \cref{ex:moore_machine_as_coalgebra}.
  Via strength $\sigma_{P, X} : P \times O \times (I \to O) \to O \times (I \to P \times X)$ defined as $(p, o, f)\mapsto (o, \lambda i \mapsto (p, f(i)))$ we can form the 2-endofunctor $\Para(O \times (I \to -)) : \Para(\Set) \to \Para(\Set)$.
  Then a Moore machine cell arises as its coalgebra.

  More concretely, a coalgebra here consists of the carrier set $S$ and a parametric map $(P, \mooreCell)\in\Para(\Set)(S, O \times (I \to S))$.
  Here $\mooreCell$ can be thought of as a map $P \times S \to O \times (I \to S)$, as shown in the figure below.
  We can break it down into two pieces $\mooreCell_o : P \times S \to O$ and $\mooreCell_n : P \times S \times I \to S$.

  \begin{figure}[h]
      \begin{subfigure}[c]{0.5\textwidth}
        % https://q.uiver.app/#q=WzAsMixbMCwxLCJPIFxcdGltZXMgKEkgXFx0byBYKSJdLFswLDAsIlgiXSxbMSwwLCIoUCwgXFxtb29yZUNlbGwpIl1d
      \[\begin{tikzcd}[ampersand replacement=\&]
      	X \\
      	{O \times (I \to X)}
      	\arrow["{(P, \mooreCell)}", from=1-1, to=2-1]
      \end{tikzcd}\]
      \end{subfigure}%
      \hfill
      \begin{subfigure}[c]{0.5\textwidth}
        \scaletikzfig[0.8]{moore_cell2}
        \label{fig:moore_cell}
      \end{subfigure}%
      \caption{An algebra for $\Para(O \times (I \to -))$ as a parametric Moore cell. Here the output $O$ does not depend on the current input $I$.}
      \label{fig:moore_cell_algebra}
    \end{figure}
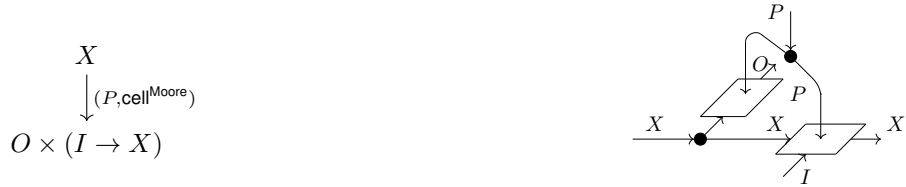
\end{example}

\section{Unrolling Neural Networks via Transfinite Construction}

%\todo[inline]{rephrase}
% We can finally start reaping the benefits of the abstract categorical formalism we've been building up, and define the notion of recurrent, recursive and various other neural networks 'cells' as algebras for an endofunctor.
% We will then see how the transfinite construction of free monads on endofunctors gives us the unrolling of these networks that automatically performs weight sharing in the desired manner.
% We will then see how this suggests a few new neural networks.
% 
%We start by defining all of these as an algebra for a 2-endofunctor.\footnote{We note that the data defining an algebra for a 2-endofunctor is the same data as that defining the algebra for an endofunctor. (todo have we defined algebras for endofunctors?)}

We will now reap the benefits of the transfinite construction of free monads on endofunctors, and unpack the corresponding unrolling of these networks.
Specifically, with \cref{thm:parameters_lax_algebra_comonoids} hinting towards the path of having weight tying arise abstractly out of the categorical framework, we unpack the unrolling of these networks.
For the purposes of space and clarity, the unpaking is done as lax homomorphism of algebras for 2-endofunctors instead of for 2-monads.

%\todo[inline]{Perhaps the example below can instead be unpacked in text, where we'd describe the idea of how these morphisms work in the category of lax algebras and lax algebra homomorphisms.
%But for the sake of time I'm unpacking all of these under the image of the right adjoint, i.e. in the category of 2-endofunctor algebras and lax algebra homomorphisms. It's simpler, and more intuitive. In text we can say that the real deal is in the more complex category}

\begin{example}[Iterated Folding RNN]
  \label{def:unrolled_recurrent}
  Consider an algebra $(P, \foldingRecurrentCell)$ of $\Para(1 + A \times -)$, i.e.\ a folding RNN cell (\cref{ex:folding_rnn_cell}).
  Its unrolling is a $\Para(1 + A \times -)$-algebra homomorphism below, where $\Delta_P : P \to P \times P$ is the copy map.
  \begin{equation}
    \label{eq:iterated_folding_rnn}
    % https://q.uiver.app/#q=WzAsNCxbMCwwLCIxICsgQSBcXHRpbWVzIFxcTGlzdHtBfSJdLFswLDEsIlxcTGlzdHtBfSJdLFs0LDAsIjEgKyBBIFxcdGltZXMgWCJdLFs0LDEsIlgiXSxbMCwxLCJcXGdhbW1hKFxcY29wcm9kdWN0e1xcTmlsfXtcXENvbnN9KSIsMl0sWzIsMywiKFAsIFxcZm9sZGluZ1JlY3VycmVudENlbGwpIl0sWzAsMiwiXFxQYXJhKDEgKyBBIFxcdGltZXMgIC0pKChQLCBmX3tcXHJlY3VycmVudH0pKSJdLFsyLDEsIlxcRGVsdGFfUCIsMix7InNob3J0ZW4iOnsic291cmNlIjoyMCwidGFyZ2V0IjozMH0sImxldmVsIjoyfV0sWzEsMywiKFAsIGZfe1xccmVjdXJyZW50fSkiLDJdXQ==
\begin{tikzcd}[ampersand replacement=\&]
	{1 + A \times \List{A}} \&\&\&\& {1 + A \times X} \\
	{\List{A}} \&\&\&\& X
	\arrow["{\gamma(\coproduct{\Nil}{\Cons})}"', from=1-1, to=2-1]
	\arrow["{(P, \foldingRecurrentCell)}", from=1-5, to=2-5]
	\arrow["{\Para(1 + A \times  -)((P, f_{\recurrent}))}", from=1-1, to=1-5]
	\arrow["{\Delta_P}"', shorten <=24pt, shorten >=37pt, Rightarrow, from=1-5, to=2-1]
	\arrow["{(P, f_{\recurrent})}"', from=2-1, to=2-5]
\end{tikzcd}
\end{equation}
Here the map $f_{\recurrent}$ is the parametric analogue of a \emph{fold} (\cref{ex:lists_initial}):

\begin{code}
@$f_{\recurrent}$@ :: (p, List a) -> x
@$f_{\recurrent}$@ p Nil = @$\foldingRecurrentCell$@ p (inl ())
@$f_{\recurrent}$@ p (Cons a as) = @$\foldingRecurrentCell$@ p (inr a (@$f_{\recurrent}$@ p as))
\end{code}

We can see that $f_{\recurrent}$ is structurally recursive: it processes the head of the list by applying $\foldingRecurrentCell$ to the parameter $p$ and the output of the $f_{\recurrent}$ with the same parameter, and the tail of the same list.
\begin{figure}[H]
  \scaletikzfig[0.7]{folding_rnn_unrolled}
  \caption{String diagram representation of the unrolled ``folding'' recurrent neural network.}
  \label{fig:iterated_folding_rnn}
\end{figure}
We proceed to show that $\Delta_P$ is indeed a valid reparameterisation for these parametric morphisms.
Unpacking the composition on top right side, and the bottom left side of \cref{eq:iterated_folding_rnn}, respectively, yields parametric maps
\begin{minipage}[t]{.5\textwidth}
    \[
P \times P \times (1 + A \times \List{A}) \to X
    \]
\end{minipage}%
\begin{minipage}[t]{0.5\textwidth}
    \[
    P \times (1 + A \times \List{A}) \to X    
    \]
\end{minipage}

whose implementations are, respectively:

\begin{minipage}[t]{.5\textwidth}
\begin{align*}
  (p, q, \inl(\bullet)) &\mapsto \foldingRecurrentCell(p, \inl(\bullet))\\
  (p, q, \inr(a, as)) &\mapsto \foldingRecurrentCell(q, a, f(p, as))
\end{align*}
\end{minipage}%
\begin{minipage}[t]{0.5\textwidth}
\begin{align*}
  (p, \bullet) &\mapsto \foldingRecurrentCell(p, \inl(\bullet))\\
  (p, (a, as)) &\mapsto \foldingRecurrentCell(p, a, f(p, as))
\end{align*}
\end{minipage}

It is easy to see that $\Delta_P$ is a valid reparameterisation between them, ``tying'' the parameters $p$ and $q$ together.
\end{example}

\begin{example}[Iterated unfolding RNN]
  \label{ex:iterated_unfolding_rnn}
  Consider a coalgebra $(P, \unfoldingRecurrentCell)$ of $\Para(O \times -)$, i.e.\ an unfolding RNN cell (\cref{ex:unfolding_rnn_cell}).
  Its unrolling is a $\Para(O \times -)$-coalgebra homomorphism below:
  \begin{equation}
    \label{eq:iterated_unfolding_rnn}
    % https://q.uiver.app/#q=WzAsNCxbMCwwLCJYIl0sWzAsMSwiTyBcXHRpbWVzIFgiXSxbMywwLCJcXFN0cmVhbXtPfSJdLFszLDEsIk8gXFx0aW1lcyBcXFN0cmVhbXtPfSJdLFswLDEsIihQLCBcXHByb2R1Y3R7XFxjZWxsX299e1xcY2VsbF9ufSkiLDJdLFswLDIsIihQLCBmX3tvLCBufSkiXSxbMSwzLCJcXFBhcmEoTyBcXHRpbWVzIC0pKChQLCBmX3tvLCBufSkpIiwyXSxbMiwzLCJcXGdhbW1hKFxccHJvZHVjdHtcXFN0cmVhbU91dHB1dH17XFxTdHJlYW1OZXh0fSkiXSxbMSwyLCJcXERlbHRhX1AiLDAseyJzaG9ydGVuIjp7InNvdXJjZSI6MjAsInRhcmdldCI6MjB9LCJsZXZlbCI6Mn1dXQ==
\begin{tikzcd}[ampersand replacement=\&]
	X \&\&\& {\Stream{O}} \\
	{O \times X} \&\&\& {O \times \Stream{O}}
	\arrow["{(P, \product{\cell_o}{\cell_n})}"', from=1-1, to=2-1]
	\arrow["{(P, f_{o, n})}", from=1-1, to=1-4]
	\arrow["{\Para(O \times -)((P, f_{o, n}))}"', from=2-1, to=2-4]
	\arrow["{\gamma(\product{\StreamOutput}{\StreamNext})}", from=1-4, to=2-4]
	\arrow["{\Delta_P}", shorten <=17pt, shorten >=17pt, Rightarrow, from=2-1, to=1-4]
\end{tikzcd}
\end{equation}

Here $f_{o, n} : P \times X \to \Stream{O}$ is the parametric analogue of an \emph{unfold} (\cref{ex:streams_terminal}):
\begin{code}
@$f_{o, n}$@ :: (p, x) -> Stream o
@$f_{o, n}$@ p x = MKStream (@$\cell_o$@ p x) (@$f_{o, n}$@ p (@$\cell_n$@ x))
\end{code}

We can see that $f_{o, n}$ is structurally corecursive: it produces a stream whose head is $\cell_o(p, x)$, and whose tail is the stream produced by $f_{o, n}$ at $\cell_n(p, x)$.

\begin{figure}[H]
  \scaletikzfig[0.72]{unfolding_unrolled}
  \caption{String diagram representation of the unrolled ``unfolding'' recurrent neural network.}
  \label{fig:iterated_unfolding_rnn}
\end{figure}
\end{example}

\begin{example}[Iterated Recursive NN]
  \label{ex:iterated_recursive_cell}
  Consider an algebra $(P, \recursiveCell)$ of $\Para(A + (-)^2)$, i.e.\ a recursive neural network cell (\cref{ex:recursive_cell}).
  Its unrolling is a $\Para(A + (-)^2)$-algebra homomorphism below:

  \begin{equation}
    \label{eq:iterated_recursive_cell}
    % https://q.uiver.app/#q=WzAsNCxbMCwwLCJBICsgXFxUcmVle0F9XjIiXSxbMCwxLCJcXFRyZWV7QX0iXSxbNCwwLCJBICsgWF4yIl0sWzQsMSwiWCJdLFswLDEsIlxcY29uZyIsMl0sWzIsMywiKFAsIFxccmVjdXJzaXZlQ2VsbCkiXSxbMCwyLCJcXFBhcmEoQSArICgtKV4yKSgoUCwgZl97XFxyZWN1cnNpdmV9KSkiXSxbMiwxLCJcXERlbHRhX1AiLDIseyJzaG9ydGVuIjp7InNvdXJjZSI6MjAsInRhcmdldCI6MzB9LCJsZXZlbCI6Mn1dLFsxLDMsIihQLCBmX3tcXHJlY3Vyc2l2ZX0pIiwyXV0=
\begin{tikzcd}[ampersand replacement=\&]
	{A + \Tree{A}^2} \&\&\&\& {A + X^2} \\
	{\Tree{A}} \&\&\&\& X
	\arrow["\cong"', from=1-1, to=2-1]
	\arrow["{(P, \recursiveCell)}", from=1-5, to=2-5]
	\arrow["{\Para(A + (-)^2)((P, f_{\recursive}))}", from=1-1, to=1-5]
	\arrow["{\Delta_P}"', shorten <=23pt, shorten >=35pt, Rightarrow, from=1-5, to=2-1]
	\arrow["{(P, f_{\recursive})}"', from=2-1, to=2-5]
\end{tikzcd}
  \end{equation}

Here $f_{\recursive}$ is a function with the following implementation

\begin{code}
@$f_{\recursive}$@ :: (p, Tree a) -> x
@$f_{\recursive}$@ p (Leaf a) = @$\recursiveCell$@ p (inl a)
@$f_{\recursive}$@ p (Node l r) = @$\recursiveCell$@ p inr (f (@$f_{\recursive}$@ p l) (@$f_{\recursive}$@ p r)
\end{code}

This function too performs weight sharing, as the recursive call to subtrees is done with the same parameter $p$.
  \begin{figure}[H]
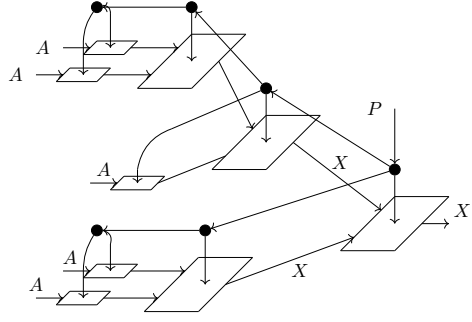

    \scaletikzfig[0.72]{recursive_cell_unrolled_lab-leaves_no-nodes}
    \caption{String diagram representation of the unrolled recursive neural network.}
    \label{fig:iterated_recursive_cell}
  \end{figure}
  % \begin{figure}[H]
  %   \scaletikzfig[0.8]{recursive_cell_unrolled_no-leaves_lab-nodes}
  %   \caption{Unrolled recursive neural network with labelled nodes. The weight sharing is automatically computed by our framework. }
  %   \label{fig:unrolled_recurrent}
  % \end{figure}
\end{example}

\begin{example}[Iterated Mealy machine cell]
  \label{ex:iterated_mealy_machine}
  Consider an algebra $(P, \mealyCell)$ of $\Para(I \to O \times -)$, i.e.\ a Mealy machine cell (\cref{ex:mealy_machine_as_coalgebra}).
  Its unrolling is a $\Para(I \to O \times -)$-algebra homomorphism below:
  \begin{equation}
    \label{eq:iterated_mealy_machine}
    % https://q.uiver.app/#q=WzAsNCxbMCwwLCJYIl0sWzAsMSwiKEkgXFx0byBPIFxcdGltZXMgWCkiXSxbMywwLCJcXE1lYWx5e099e0l9Il0sWzMsMSwiKEkgXFx0byBPIFxcdGltZXMgXFxNZWFseXtPfXtJfSkiXSxbMCwxLCIoUCwgbikiLDJdLFsyLDMsIlxcZ2FtbWEoXFxTdHJlYW1OZXh0KSJdLFswLDIsIihQLCBmX24pIl0sWzEsMywiXFxQYXJhKEkgXFx0byBPIFxcdGltZXMgLSkoKFAsIGZfbikpIiwyXSxbMSwyLCJcXERlbHRhX1AiLDAseyJzaG9ydGVuIjp7InNvdXJjZSI6MjAsInRhcmdldCI6MjB9LCJsZXZlbCI6Mn1dXQ==
\begin{tikzcd}[ampersand replacement=\&]
	X \&\&\& {\Mealy{O}{I}} \\
	{(I \to O \times X)} \&\&\& {(I \to O \times \Mealy{O}{I})}
	\arrow["{(P, n)}"', from=1-1, to=2-1]
	\arrow["{\gamma(\StreamNext)}", from=1-4, to=2-4]
	\arrow["{(P, f_n)}", from=1-1, to=1-4]
	\arrow["{\Para(I \to O \times -)((P, f_n))}"', from=2-1, to=2-4]
	\arrow["{\Delta_P}", shorten <=19pt, shorten >=19pt, Rightarrow, from=2-1, to=1-4]
\end{tikzcd}
  \end{equation}

Here $f_n$ is a corecursive function with the following implementation

\begin{code}
@f\_n@ :: (p, x) -> Mealy o i
@f\_n@ (p, x) = MkMealy $ \i -> let (o', x') = n p x i
                             in (o', @f\_n@ p x')
\end{code}

  \begin{figure}[H]
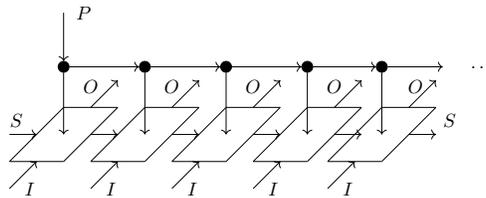

    \scaletikzfig[0.72]{mealy_unrolled}
    \caption{String diagram representation of a parametric Mealy machine --- a recurrent neural network.}
    \label{fig:mealy_unrolled}
  \end{figure}

\end{example}

%\todo[inline]{Has anyone in the literature established a correspondence between parametric Mealy machines and recurrent neural networks? This might be a justification for the positions that we're advocating, i.e. that this corresppodnece can now be put on formal grounds?}

%holding place for algebraic free shit until it gets organized into the appendix more properly

\begin{example}[Iterated Moore machine cell]
  \label{ex:iterated_moore_machine}
  Consider a coalgebra $(P, \mooreCell)$ of $\Para(O \times (I \to -))$, i.e.\ a Moore machine cell (\cref{ex:moore_machine_as_coalgebra}).
  Its unrolling is a $\Para(O \times (I \to -))$-coalgebra homomorphism below:
  \begin{equation}
    \label{eq:iterated_moore_machine}
    % https://q.uiver.app/#q=WzAsNCxbMCwwLCJYIl0sWzAsMSwiTyBcXHRpbWVzIChJIFxcdG8gWCkiXSxbNCwwLCJcXE1vb3Jle099e0l9Il0sWzQsMSwiKEkgXFx0byBPIFxcdGltZXMgXFxNb29yZXtPfXtJfSkiXSxbMCwxLCIoUCwgXFxwcm9kdWN0e299e259KSIsMl0sWzIsMywiXFxnYW1tYShcXFN0cmVhbU5leHQpIl0sWzAsMiwiKFAsIGZfbikiXSxbMSwzLCJcXFBhcmEoTyBcXHRpbWVzIChJIFxcdG8gLSkpKChQLCBmX24pKSIsMl0sWzEsMiwiXFxEZWx0YV9QIiwwLHsic2hvcnRlbiI6eyJzb3VyY2UiOjIwLCJ0YXJnZXQiOjIwfSwibGV2ZWwiOjJ9XV0=
\begin{tikzcd}[ampersand replacement=\&]
	X \&\&\&\& {\Moore{O}{I}} \\
	{O \times (I \to X)} \&\&\&\& {(I \to O \times \Moore{O}{I})}
	\arrow["{(P, f_n)}", from=1-1, to=1-5]
	\arrow["{(P, \product{o}{n})}"', from=1-1, to=2-1]
	\arrow["{\gamma(\StreamNext)}", from=1-5, to=2-5]
	\arrow["{\Delta_P}", shorten <=25pt, shorten >=25pt, Rightarrow, from=2-1, to=1-5]
	\arrow["{\Para(O \times (I \to -))((P, f_n))}"', from=2-1, to=2-5]
\end{tikzcd}
\end{equation}

In code, $f_n$ is a corecursive function

\begin{code}
@f\_n@ :: (p, x) -> Moore o i
@f\_n@ (p, x) = MkMoore $ \i -> (o p x, @f\_n@ p (n p x i))
\end{code}

\begin{figure}[H]
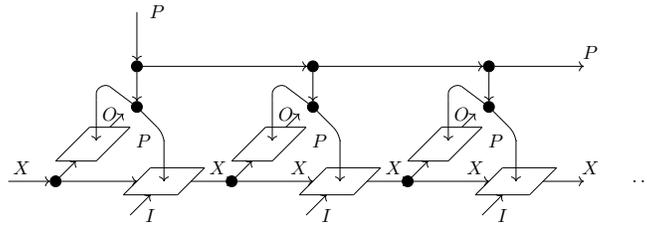

  \scaletikzfig[0.72]{moore_unrolled}
  \caption{String diagram representation of an unrolled Moore machine.}
  \label{fig:mealy_unrolled_2}
\end{figure}

% In string diagrams this can be visualised as the example of a parametric Mealy machine, with the added 
% 

\end{example}

%\newpage
%\input{content/appendix2.tex}

%%%%%%%%%%%%%%%%%%%%%%%%%%%%%%%%%%%%%%%%%%%%%%%%%%%%%%%%%%%%%%%%%%%%%%%%%%%%%%%
%%%%%%%%%%%%%%%%%%%%%%%%%%%%%%%%%%%%%%%%%%%%%%%%%%%%%%%%%%%%%%%%%%%%%%%%%%%%%%%

\end{document}

% This document was modified from the file originally made available by
% Pat Langley and Andrea Danyluk for ICML-2K. This version was created
% by Iain Murray in 2018, and modified by Alexandre Bouchard in
% 2019 and 2021 and by Csaba Szepesvari, Gang Niu and Sivan Sabato in 2022.
% Modified again in 2023 and 2024 by Sivan Sabato and Jonathan Scarlett.
% Previous contributors include Dan Roy, Lise Getoor and Tobias
% Scheffer, which was slightly modified from the 2010 version by
% Thorsten Joachims & Johannes Fuernkranz, slightly modified from the
% 2009 version by Kiri Wagstaff and Sam Roweis's 2008 version, which is
% slightly modified from Prasad Tadepalli's 2007 version which is a
% lightly changed version of the previous year's version by Andrew
% Moore, which was in turn edited from those of Kristian Kersting and
% Codrina Lauth. Alex Smola contributed to the algorithmic style files.